\numberwithin{equation}{section}
\newcolumntype{Y}{>{\centering\arraybackslash}X}
\newtheorem{lemma}{Lemma}
\newtheorem{proposition}[lemma]{Proposition}
\newtheorem{theorem}[lemma]{Theorem}
\numberwithin{lemma}{section}
\theoremstyle{definition}
\newtheorem{definition}[lemma]{Definition}
\newcommand{\nh}{\text{NH}}
\newcommand{\strbdepsph}{\epsilon_{\mathbb{P}h\mathrm{B}}^{\mathrm{s}}(\mathrm{p})}
\newcommand{\strbdepsp}{\epsilon_{\mathbb{P}\mathrm{B}}^{\mathrm{s}}(\mathrm{p})}
\newcommand{\gprob}{\Gamma^{\mathrm{ran}}}
\newcommand{\pr}{\mathbb{P}}
\theoremstyle{definition}
\newtheorem{remark}[lemma]{Remark}
\theoremstyle{definition}
\theoremstyle{definition}
\newtheorem{example}[lemma]{Example}
\newcommand*\colvec[1]{
\global\colveccount#1
\begin{pmatrix}
	\colvecnext
}
\def\colvecnext#1{
	#1
	\global\advance\colveccount-1
	\ifnum\colveccount>0
	\\
	\expandafter\colvecnext
	\else
\end{pmatrix}
\fi
}
\newcounter{framedeqn}
\newcommand{\nncf}{\mathcal{C}\mathcal{F}}
\newcommand{\real}{\mathbb{R}}
\newcommand{\sgn}{\text{sgn}}
\newcommand{\supp}{\text{supp}}
\newcommand{\disM}{\mathrm{dist}_{\mathcal{M}}}
\newcommand{\dyadic}{\mathbb{D}}
\DeclarePairedDelimiter\floor{\lfloor}{\rfloor}
\DeclarePairedDelimiter\ceil{\lceil}{\rceil}
\newcommand{\argmineps}{\underset {\varphi \in \mathcal{NN}_{\mathbf{N},L}}{\operatorname {argmin}_{\epsilon}}} 
\newcommand{\nncfeps}{\mathcal{CF}^{\epsilon,\hat \epsilon}_{r} }
\newcommand{\nnsubs}{^\mathcal{NN}_{f,r,\epsilon, \mathcal{R}, (\mathbf{N},L)}}
\begin{document}
\title[The mathematics of adversarial attacks in AI]{The mathematics of adversarial attacks in AI \\
	---\\  Why deep learning is unstable despite the existence of stable neural networks}

\author{A. Bastounis} 
\address{Department of Mathematics, King's College London}
\email{alexander.bastounis@kcl.ac.uk}

\author{A. C. Hansen} 
\address{DAMTP, University of Cambridge}
\email{ach70@cam.ac.uk}

\author{V. Vla\v{c}i\'{c}} 
\address{D-ITET, ETH Zürich}
\email{vlacicv@mins.ee.ethz.ch}

\keywords{Instability in Deep Learning, methodological barriers, existence of algorithms (deterministic and randomised), 
	approximation theory, robust optimisation, numerical analysis, Solvability Complexity Index hierarchy}
\subjclass[2020]{68T07, 65K10, 41A30, 90C17 (primary)}

\begin{abstract}
	The unprecedented success of deep learning (DL) makes it unchallenged when it comes to classification problems. However, it is well established that the current DL methodology produces universally unstable neural networks (NNs). The instability problem has caused a substantial research effort -- with a vast literature on so-called adversarial attacks -- yet there has been no solution to the problem. Our paper addresses why there has been no solution to the problem, as we prove the following: any training procedure based on training ReLU neural networks for classification problems with a fixed architecture will yield neural networks that are either inaccurate or unstable (if accurate) -- despite the provable existence of both accurate and stable neural networks for the same classification problems. The key is that the stable and accurate neural networks must have variable dimensions depending on the input, in particular, variable dimensions is a necessary condition for stability. Our result points towards the paradox that accurate and stable neural networks exist, however, modern algorithms do not compute them. This yields the question: if the existence of neural networks with desirable properties can be proven, can one also find algorithms that compute them? There are cases in mathematics where provable existence implies computability, but will this be the case for neural networks? The contrary is true, as we demonstrate how neural networks can provably exist as approximate minimisers to standard optimisation problems with standard cost functions, however, no randomised algorithm can compute them with probability better than $1/2$. 
\end{abstract}
\maketitle

\setcounter{tocdepth}{1}
\tableofcontents

\section{Introduction}
Neural networks (NNs) \cite{devore_hanin_petrova_2021, pinkus1999approximation, HHdl2019} and deep learning (DL) \cite{LeCun_Nature} have seen incredible success, in particular in classification problems \cite{Nature_Cancer}. However, neural networks become universally unstable (non-robust) when trained to solve such problems in virtually any application \cite{Des2023, akhtar2018threat,carlini2018audio,huang2018some, finlayson2019adversarial, Heaven_Nature, AdcockHansenBook, SzZ-14, antun2020instabilities, Choi_IEEE, Choi2}, making the non-robustness issue one of the fundamental problems in artificial intelligence (AI). The vast literature on this issue -- often referring to the instability phenomenon as vulnerability to adversarial attacks -- has not been able to solve the problem. Thus, we are left with the key question:
%\vspace{1mm}
\begin{displayquote}
	\normalsize
	{\it Why does deep learning yield universally unstable methods for classification? }
\end{displayquote}
\vspace{1mm}
\noindent In this paper we provide mathematical answers to this question in connection with Smale's 18th problem on the limits of AI. 

The above problem has become particularly relevant as the instability phenomenon yields non-human-like behaviour of AI with misclassifications by DL methods being caused by small perturbations that are so tiny that human sensor systems such as eyes and ears cannot detect the tiny change. The non-robustness issue has thus caused serious concerns among scientists \cite{finlayson2019adversarial, Heaven_Nature, AdcockHansenBook}, in particular in applications where trustworthiness of AI is a key feature. Moreover, the instability phenomenon has become a grave matter for policy makers for regulating AI in safety critical areas where trustworthiness is a must, as suggested by the European Commission's outline for a legal framework for AI: 

\vspace{1mm}
\enquote{{\it In the light of the recent advances in artificial intelligence (AI), the serious negative consequences of its use for EU citizens and organisations have led to multiple initiatives from the European Commission to set up the principles of a trustworthy and secure AI. Among the identified requirements, the concepts of robustness and explainability of AI systems have emerged as key elements for a future regulation of this technology.}}

 -- Europ. Comm. JCR Tech. Rep. (January 2020) \cite{hamonrobustness}. 
 \vspace{1mm}
  
\enquote{{\it On AI, trust is a must, not a nice to have. [...] The new AI regulation will make sure that Europeans can trust what AI has to offer. [...] High-risk AI systems will be subject to strict obligations before they can be put on the market: [requiring] High level of robustness, security and accuracy.}}

 --- 
Europ. Comm. outline for legal AI (April 2021) \cite{EU_PressRelease}. 

\vspace{1mm}
   The concern is also shared on the American continent, especially regarding security and military applications. Indeed, the US Department of Defence has spent millions of dollars through DARPA on project calls to cure the instability problem.  
 The strong regulatory emphasis on trustworthiness, stability (robustness), security and accuracy leads to potential serious consequences given that modern AI techniques are universally non-robust. Current state-of-the-art AI techniques may be illegal in certain key sectors given their fundamental lack of robustness. The lack of a cure for the instability phenomenon in modern AI suggests a methodological barrier applicable to current AI techniques, and hence should be viewed in connection with Smale's 18th problem on the limits of AI.  
 
\subsection{Main theorems -- Methodological barriers, Smale's 18th problem and the limits of AI}
Smale's 18th problem, from the list of mathematical problems for the 21st century \cite{21century_Smale}, echoes Turing's paper from 1950 \cite{Turing_1950} on the question of existence of AI. Turing asks if a computer can think, and suggests the imitation game (Turing test) as a test for his question about AI. Smale takes the question even further and asks in his 18th problem: \emph{what are the limits of AI?} The question is followed by a discussion on the problem that ends as follows.  {\it ``Learning is a part of human intelligent activity. The corresponding mathematics is suggested by the theory of repeated games, neural nets and genetic algorithms.''}

Our contribution to the program on Smale's 18th problem are the following limitations and methodological barriers on modern AI highlighted in (I) and (II) below. These results provide mathematical answers to the question on why there has been no solution to the instability problem.

\begin{itemize}[leftmargin=*]
\item[(I)] {\bf Theorem \ref{theorem:NNCV}}: There are basic methodological barriers in state-of-the-art DL based on ReLU NNs. Indeed, any training procedure based on training ReLU NNs for many simple classification problems with a fixed architecture will yield neural networks that are either inaccurate or unstable (if accurate) -- despite the provable existence of both accurate and stable neural networks for the same classification problems. Moreover, variable dimensions on NNs is necessary for stability for ReLU NNs.  
\end{itemize}

\noindent Theorem \ref{theorem:NNCV} points towards the paradox that accurate and stable neural networks exist, however, modern algorithms do not compute them. This yields the question: 
\vspace{1mm}
\begin{displayquote}
	\normalsize
	{\it If the existence of neural networks can be proven, can one also find algorithms that compute them? In particular, there are cases in mathematics where provable existence implies computability, but will this be the case for neural networks?}
\end{displayquote}
\vspace{1mm}

\noindent  We address this question even for provable existence of NNs in standard training scenarios.

\begin{itemize}[leftmargin=*]
\item[(II)] {\bf Theorem \ref{theorem:NNNonComp}}: There are NNs that provably exist as approximate minimisers to standard optimisation problems with standard cost functions, however, no randomised algorithm can compute them with probability better than $1/2$.
\end{itemize}

\noindent A detailed account of the results and the consequences can be found in \S \ref{sec:NNintro} and \S \ref{sec:NNComp}. 

\subsection{Phase transitions and generalised hardness of approximation (GHA)} 

Theorem \ref{theorem:NNNonComp} can be understood within the framework of \emph{generalized hardness of approximation} (GHA) \cite{opt_big, AdcockHansenBook, gazdag2022generalised, comp_stable_NN22, AIM, Johan1, CRP, Hansen2021, BCH1}, which describes a specific phase transition phenomenon. In many cases, it is straightforward to compute an $\epsilon$-approximation to a solution of a computational problem for $\epsilon > \epsilon_1 > 0$. However, when $\epsilon < \epsilon_1$ (the approximation threshold), a phase transition occurs, wherein it is suddenly difficult, or even infeasible, to obtain an $\epsilon$-approximation. This difficulty could manifest as non-computability or intractability (e.g. non-polynomial time complexity). GHA extends the concept of hardness of approximation \cite{Arora2007} from discrete computations to more general computational problems.

In particular, Theorem \ref{theorem:NNNonComp} establishes lower bounds on the approximation threshold $\epsilon_1 > 0$ for computing NNs in classification tasks. This theorem builds upon the initial work on GHA introduced in \cite{opt_big} for convex optimisation (see also Problem 5 (J. Lagarias) in \cite{AIM}) and further developed in \cite{comp_stable_NN22, gazdag2022generalised} for NNs in AI and inverse problems. The theory of GHA is part of the larger framework of the Solvability Complexity Index (SCI) hierarchy \cite{comp_stable_NN22, SCI, Hansen_JAMS, CRAS, Ben-Artzi_FoCM2021, colbrook2019foundations, Colbrook_2019, colbrook_spec_meas, Hansen2016ComplexityII}.

\section{Main Results I -- Trained NNs become unstable despite the existence\\ of stable and accurate NNs}\label{sec:NNintro}
In this section we will explain our contributions to understanding the instability phenomenon. We consider the simplest deep learning problem of approximating a given classification function 
\begin{equation}\label{eq:the_fs}
	f:  [0,1]^d \rightarrow \{0,1\},
\end{equation}
by constructing a neural network from training data. Let $\mathcal{NN}_{\mathbf{N},L}$ with $\mathbf{N} := (N_L=1,N_{L-1},\dotsc,N_1,N_0 = d)$ denote the set of all $L$-layer neural networks (with $L \geq 2$) under the ReLU nonlinearity with $N_\ell$ neurons in the $\ell$-th layer (see Section \ref{sec:NNBasics} for definitions and explanations of these concepts). We assume that the cost function $\mathcal{R}$ is an element of  
\begin{equation}\label{eq:CF}
	\mathcal{CF}_{r} = \{\mathcal{R}: \mathbb{R}^{r} \times \mathbb{R}^{r} \rightarrow \mathbb{R}_+ \cup \{\infty\} \, \vert \,  \mathcal{R}(v,w) = 0 \text{ iff } v = w\}.
\end{equation}

\begin{remark}[Choice of cost functions]
Note that the choice of class of cost functions defined in \eqref{eq:CF} will be used in Theorem \ref{theorem:NNCV} is to demonstrate how one can achieve great generalisability properties of the trained network. It is worth mentioning however that we show that expanding this class to include, for example, regularised cost functions will not cure the instability phenomenon (see Section \ref{sec:NNCVInterp} (II) for more detail).
\end{remark}
As we will discuss the stability of neural networks, we introduce the idea of \emph{well-separated and stable sets} to exclude pathological examples whereby the training and validation sets have elements that are arbitrarily close to each other in a way that could make the classification function jump subject to a small perturbation. Specifically, given a classification function $f: [0,1]^d \rightarrow \{0,1\}$, we define the family of well-separated and stable sets $\mathcal{S}^f_{\delta}$ with separation at least $2\delta$ according to
\begin{equation*}
	\begin{split}
		&\mathcal{S}^f_{\delta} = \Big\{\{x^1, \hdots, x^m\} \subset  [0,1]^d \, \vert \, m \in \mathbb{N}, \\ 
		& \qquad \qquad  \min_{x^i\neq x^j} \|x^i - x^j\|_{\infty} \geq 2 \delta, f(x^j+y) = f(x^j) \text{ for }\|y\|_{\infty} < \delta\text{ satisfying }x^j+y \in [0,1]^d \Big \}.
	\end{split}
\end{equation*}
We also set $r \vee s$ to be the maximum of $r$ and $s$ and $r \wedge s$ to be the minimum of $r$ and $s$. Finally, we use the notation $\mathcal{B}_{\epsilon}^{\infty}$ to refer to the open ball of radius $\epsilon$ in the $\ell^{\infty}$ norm. With this notation established, we are now ready to state our first main result.

\begin{theorem}[Instability of trained NNs despite existence of a stable NN]\label{theorem:NNCV}
	There is an uncountable collection $\mathcal{C}_1$ of classification functions $f$ as in \eqref{eq:the_fs} -- with fixed $d \geq 2$ -- and a constant $C>0$ such that the following holds. For every $f \in \mathcal{C}_1$, any norm $\|\cdot\|$ and every $\epsilon>0$, there is an uncountable family $\mathcal{C}_2$ of probability distributions on $[0,1]^{d}$ so that for any $\mathcal{D} \in \mathcal{C}_2$, any neural network dimensions $\mathbf{N} = (N_L=1,N_{L-1},\dotsc,N_1,N_0=d)$ with $L \geq 2$, any $\mathrm{p} \in (0,1)$, any positive integers $q$, $r$, $s$ with 
	\begin{equation}\label{eq:r-NN-lower-bound}
		r+s \geq C \, \max\big\{\,\mathrm{p}^{-3}  ,\, q^{3/2} \big[(N_1+1) \dotsb (N_{L-1}+1)\big]^{3/2} \big\},
	\end{equation}
	any training data $\mathcal{T} = \{x^1, \hdots, x^r\}$ and validation data $\mathcal{V} = \{y^1, \hdots, y^s\}$, where the $x^j$ and $y^j$ are drawn independently at random from $\mathcal{D}$, the following happens with probability exceeding $1-\mathrm{p}$.
	
	\begin{itemize}
		\item[(i)]
		 \underline{\emph{(Success -- great generalisability)}}. We have $\mathcal{T}, \mathcal{V} \in \mathcal{S}^f_{\varepsilon((r\vee s)/\mathrm{p})}$, where $\varepsilon(n)=(Cn)^{-4}$,
		and, for every $\mathcal{R} \in \mathcal{CF}_r$, there exists a ${\phi}$ such that
		\begin{equation}\label{eq:NNOptimisationProblem}
			{\phi} \in  \mathop{\mathrm{arg min}}_{\varphi \in \mathcal{NN}_{\mathbf{N},L}}  \mathcal{R} \big(\{\varphi({x}^j)\}_{j=1}^r,\{f({x}^j)\}_{j=1}^r\big)
		\end{equation}
		and
		\begin{equation}\label{eq:NNCVCorrectClassification}
			\phi(x) = f(x) \quad \forall x \in \mathcal{T} \cup \mathcal{V}.
		\end{equation}
		\item[(ii)]
		 \underline{\emph{(Any successful NN in $\mathcal{NN}_{\mathbf{N},L}$ -- regardless of architecture -- becomes universally unstable)}}. Yet, for any $\hat{\phi} \in \mathcal{NN}_{\mathbf{N},L}$ (and thus, in particular, for $\hat\phi=\phi$) and any monotonic $g: \real \to \real$, there is a subset $\mathcal{\tilde T}\subset \mathcal{T} \cup \mathcal{V}$ of the combined training and validation set of size $|\mathcal{\tilde T}| \geq q$, such that there exist uncountably many universal adversarial perturbations $\eta \in \mathbb{R}^d$ so that for each $x \in \mathcal{\tilde T}$ we have 
		\begin{equation}\label{eq:NNCVIncorrectClassification}
			|g \circ \hat\phi(x+\eta) -  f(x+\eta)| \geq 1/2,\quad \|\eta\| < \epsilon, \quad |\supp(\eta)| \leq 2.
		\end{equation}
		\item[(iii)]  \underline{\emph{(Other stable and accurate NNs exist)}}.
		However, there exists a stable and accurate neural network $\psi$ that satisfies $\psi(x) = f(x)$ for all $x \in \mathcal{B}_{\epsilon}^{\infty}(\mathcal{T} \cup \mathcal{V})$, when $\epsilon \leq \varepsilon((r\vee s)/\mathrm{p})$. 
	\end{itemize}
\end{theorem}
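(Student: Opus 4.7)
My plan is to construct $\mathcal{C}_1$ and $\mathcal{C}_2$ from a grid-of-islands template in which $f$ is simple on a coarse lattice but flips value in between, and $\mathcal{D}$ concentrates on those islands. Concretely I would fix a large integer $K=K(r,s,\mathrm{p},\epsilon)$, set $f(x)=0$ on the union of $\ell^\infty$-balls $\mathcal{I}_{i,j}$ of radius $\rho=(10K)^{-1}$ around the lattice points $(i/K, j/K, \tfrac12,\dotsc,\tfrac12)$ for $1\leq i,j\leq K-1$, and $f(x)=1$ elsewhere, choosing $K$ large enough that $\rho\leq \varepsilon((r\vee s)/\mathrm{p})$ and $\rho<\epsilon$. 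Uncountability of $\mathcal{C}_1$ would come from varying $K$ and from omitting arbitrary subsets of cells, and $\mathcal{C}_2$ from taking $\mathcal{D}$ to be any probability distribution absolutely continuous on the island union (modulated by an uncountable continuous parameter).

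For part (i), a birthday-paradox estimate shows that with $(K-1)^2\gtrsim (r+s)^2/\mathrm{p}$ the $r+s$ samples fall in distinct islands with probability $\geq 1-\mathrm{p}$, so $\mathcal{T},\mathcal{V}\in \mathcal{S}^f_{\rho}\subset \mathcal{S}^f_{\varepsilon((r\vee s)/\mathrm{p})}$. Since $f(x^j)=0$ at every sample, I would take $\phi\equiv 0\in \mathcal{NN}_{\mathbf{N},L}$ (all weights zero, output bias zero) as a minimiser of \eqref{eq:NNOptimisationProblem}, since $\mathcal{R}(0,0)=0$ is the global minimum by the defining property of $\mathcal{CF}_r$, while $\phi=f$ on $\mathcal{T}\cup\mathcal{V}$ is immediate, yielding \eqref{eq:NNCVCorrectClassification}. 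For part (iii), the same $\psi\equiv 0$ satisfies $\psi=f$ on $\mathcal{B}^\infty_\epsilon(\mathcal{T}\cup\mathcal{V})$ for $\epsilon\leq\rho$; alternatively one can exhibit $\psi$ explicitly as a variable-width piecewise-constant ReLU network with one local bump around each sample that recovers the correct $f$-value on $\mathcal{B}^\infty_\epsilon(x^j)$, to emphasise the role of variable architecture.

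Part (ii) is the substantive content, which I would prove by a pigeonhole argument on the piecewise-affine structure of ReLU networks. Any $\hat\phi\in\mathcal{NN}_{\mathbf{N},L}$ is affine on each cell of a polyhedral partition of $\mathbb{R}^d$ with at most $M\lesssim \prod_{\ell=1}^{L-1}(N_\ell+1)^{d}$ cells, and the bound \eqref{eq:r-NN-lower-bound} is calibrated so that, after further splitting on the sign of $g(\hat\phi(x^j))-\tfrac12$, some cell $R$ contains a subset $\tilde{\mathcal{T}}\subset\mathcal{T}\cup\mathcal{V}$ of at least $q$ points on which this sign is constant. Writing $\hat\phi|_R(x)=a^\top x+b$ and letting $(a_1,a_2)$ be the projection of $a$ onto the first two coordinates (on which $f$ depends), I would take the $2$-sparse direction $\eta=t(a_2 e_1 - a_1 e_2)$ for $t\in\mathbb{R}$ (or $\eta=t(e_1+e_2)$ if $(a_1,a_2)=0$); then $a^\top\eta=0$ and $|\supp(\eta)|\leq 2$. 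In the case $g(\hat\phi(x^j))<1/2$, I would scale $|t|$ so that $\|\eta\|_\infty\in(\rho,\epsilon)$, forcing $x^j+\eta$ out of its island and hence $f(x^j+\eta)=1$; in the case $g(\hat\phi(x^j))\geq 1/2$, I would scale $|t|$ so that $\|\eta\|_\infty<\rho$, keeping $f(x^j+\eta)=0$. As long as $x^j+\eta\in R$ we have $\hat\phi(x^j+\eta)=\hat\phi(x^j)$ exactly, so $g\circ\hat\phi$ is unchanged while $f$ has flipped, yielding \eqref{eq:NNCVIncorrectClassification}. The interval of admissible $t$ has positive length, giving uncountably many universal $\eta$.

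The hardest step is controlling the requirement that all $q$ perturbed points $x^j+\eta$ remain in the same linear region $R$, because $R$ can be very thin for $\hat\phi$ with arbitrarily large weights. I would address this either by (a) enlarging $K$ against a worst-case lower bound on how thin $R$ can be at each of the $q$ chosen points within the architecture $\mathbf{N}$, or (b) replacing the single-region claim with a piecewise argument that $\hat\phi$ stays constant along $\eta$ on every piece it crosses, since the kernel of the local gradient is preserved when only the two relevant coordinates move orthogonally to its projection; this refinement is where the precise exponent $3/2$ in \eqref{eq:r-NN-lower-bound} enters, via a tighter sign-pattern count (of Warren/Milnor type) rather than a naive product of region counts. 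Finally, monotonic $g$ that happen to be discontinuous at $\hat\phi(x^j)$ pose no obstacle, because along the kernel direction the network value is preserved \emph{exactly}, so $g\circ\hat\phi$ is literally equal and no continuity of $g$ is required.
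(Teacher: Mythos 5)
Your construction violates the theorem's quantifier order in an essential way. The statement fixes the uncountable family $\mathcal{C}_1 \ni f$ and the constant $C$ first, then $\epsilon$, then the family $\mathcal{C}_2 \ni \mathcal{D}$ (which may therefore depend on $f$ and $\epsilon$ only), and only afterwards the architecture $\mathbf{N}$, the probability $\mathrm{p}$, and the cardinalities $q,r,s$. Your parameter $K=K(r,s,\mathrm{p},\epsilon)$ determines both $f$ and $\mathcal{D}$, so your $f$ depends on $\epsilon,r,s,\mathrm{p}$ and your $\mathcal{D}$ depends on $r,s,\mathrm{p},\mathbf{N}$, none of which is permitted. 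This cannot be patched: a function that is constant on a fixed finite grid of $(K-1)^2$ islands of a fixed radius $\rho>0$ cannot serve for arbitrarily large $r+s$ (collisions among islands become unavoidable), nor for arbitrary norms and arbitrarily small $\epsilon$ (since $\rho$ is already locked in). In the paper, $f_a$ in \eqref{eq:TheFs} oscillates at every scale near $x_1=0$ and $\mathcal{D}_\kappa$ is a single fixed heavy-tailed distribution on countably many points; Lemma \ref{lem:NN-distr-lemma} then guarantees $\Theta((r+s)^{2/3})$ distinct alternating samples with high probability, which is precisely the source of the $3/2$ exponent in \eqref{eq:r-NN-lower-bound}, and is what lets one $(f,\mathcal{D})$ work uniformly over all later choices.

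Even setting aside the quantifier issue, part (ii) collapses on a trivial counterexample to your construction: take $\hat\phi\equiv 0$ and $g=\mathrm{id}$. Since $f=0$ at every sample, $g\circ\hat\phi(x^j)=0=f(x^j)$ identically, so $|g\circ\hat\phi(x^j+\eta)-f(x^j+\eta)|\geq 1/2$ forces $f(x^j+\eta)=1$, i.e.\ $\|\eta\|_\infty>\rho$; but for $\epsilon\leq\rho$, or for a norm whose $\epsilon$-ball lies inside the $\ell^\infty$-ball of radius $\rho$, no admissible $\eta$ exists. The paper avoids this because the $f_a$-labels \emph{alternate} on the samples, so no constant network is ever accurate, and because each sample carries an auxiliary coordinate $x_2\in\{0,\delta\}$ with $\delta\propto\epsilon$ (a shortcut feature perfectly correlated with $f_a$ on $\mathcal{T}\cup\mathcal{V}$) which the perturbation erases at cost $O(\epsilon)$, collapsing any fixed-architecture network onto an essentially one-dimensional problem it provably cannot solve (Lemma \ref{lem:NoNN}).

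Your pigeonhole is also off by an exponent. With samples on a two-dimensional grid the number of polyhedral cells of $\hat\phi$ meeting them can scale like $[(N_1+1)\cdots(N_{L-1}+1)]^2$ or worse, and is $d$-dependent in general, so you cannot land $q$ samples in a single cell under the $d$-free bound \eqref{eq:r-NN-lower-bound}. The paper circumvents this entirely by placing all samples on (essentially) a single ray in the $x_1$-direction, see \eqref{eq:theNNexes}: restricted to a line, a ReLU network of shape $\mathbf{N}$ has at most $(N_1+1)\cdots(N_{L-1}+1)$ affine pieces (Lemma \ref{lem:NNReduce}), a linear rather than polynomial-in-$d$ count, and Lemma \ref{lem:NoNN} then extracts the $q$ misclassified points directly. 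Your fallback (b) is also incorrect as stated: the direction $a_2 e_1-a_1 e_2$ lies in the kernel of the \emph{current} local gradient but generically not of the next one across a facet, so the value of $\hat\phi$ is not preserved across pieces.
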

We remark in passing that the training and validation data $\mathcal{T}$ and $\mathcal{V}$ in Theorem \ref{theorem:NNCV} are technically not sets, but randomised multisets, as some of the samples $x^j$ or $y^j$ may be repeated.

\begin{remark}[The role of $g$ in (ii) in Theorem \ref{theorem:NNCV}] The purpose of the monotone function $g: \real \to \real$ in (ii) in Theorem \ref{theorem:NNCV} is to make the theorem as general as possible. In particular, a popular way of creating an approximation to $f$ is to have a network combined with a thresholding function $g$. This would potentially increase the approximation power compared to only having a neural network, however, Theorem \ref{theorem:NNCV} shows that adding such a function does not cure the instability problem. 
\end{remark}

\begin{figure}
	\fbox{\parbox{.8\textwidth}{
			\centering
			\includegraphics[scale=0.31]{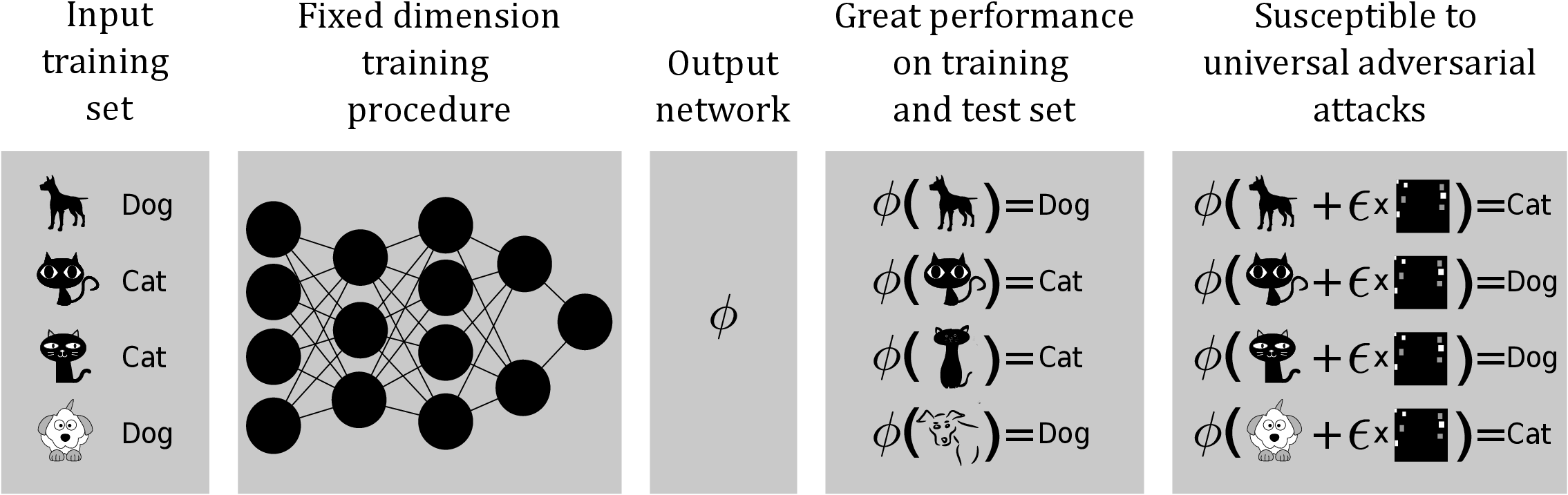}\\\vspace{2mm}\hrule\vspace{2mm}
			\includegraphics[scale=0.31]{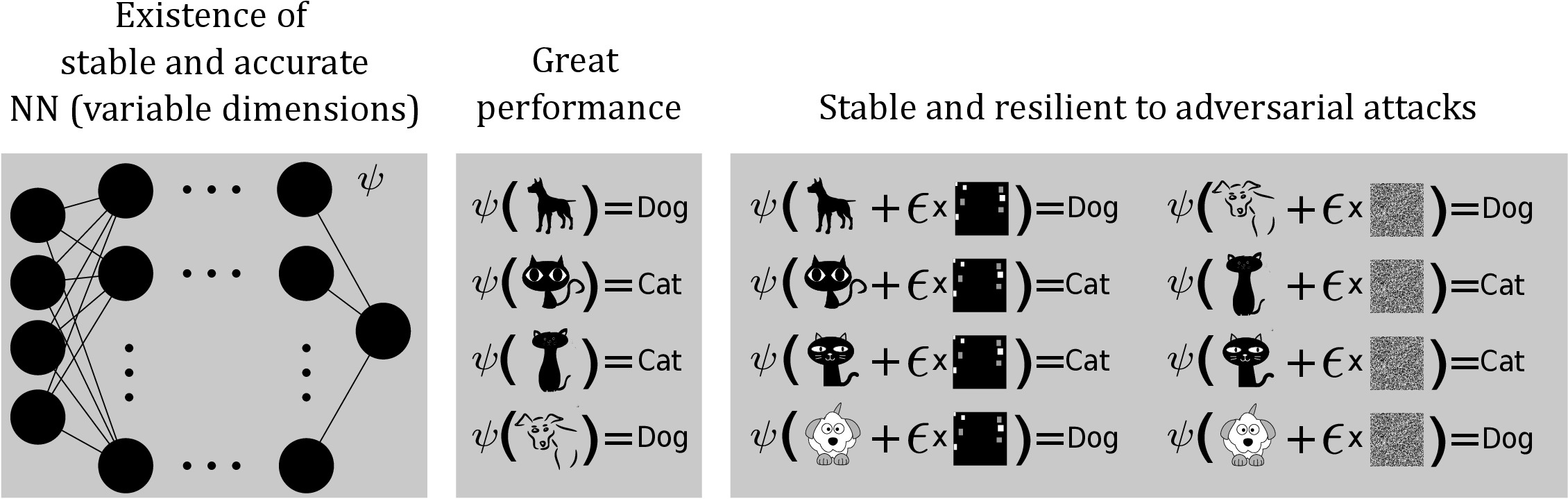}
		}
	
}
	
	\caption{({\bf Training with fixed architecture yields instability -- Variable dimensions on NNs is necessary for stability for ReLu NNs}). A visual interpretation of Theorem \ref{theorem:NNCV}. A fixed dimension training procedure can lead to excellent performance and yet be highly susceptible to adversarial attacks, even if there exists a NN which has both great performance and excellent stability properties. However, such a stable and accurate ReLu network must have variable dimensions depending on the input.}
\end{figure}

\subsection{Interpreting Theorem \ref{theorem:NNCV}}\label{sec:NNCVInterp}
In this section we discuss in detail the implications of Theorem \ref{theorem:NNCV} with regard to Smale's 18th problem. First, note that Theorem \ref{theorem:NNCV} demonstrates a methodological barrier applicable to current DL approaches. This does not imply that the instability problem in classification cannot be resolved but it does imply that in order to overcome these instability issues one will have to change the methodology. Second, Theorem \ref{theorem:NNCV} provides guidance on which methodologies will not solve the instability issues. 
In order to make the exposition easy to read, we will now summarise in non-technical terms what Theorem \ref{theorem:NNCV} says. 
\begin{itemize}[leftmargin=*]
	\item[(I)] \underline{\emph{Performance comes at a cost -- Accurate DL methods inevitably become unstable}}. Theorem \ref{theorem:NNCV}  shows that there are basic classification functions and distributions where standard DL methodology yields trained NNs with great success in terms of generalisability and performance -- note that the size of the validation set $\mathcal{V}$ in Theorem \ref{theorem:NNCV} can become arbitrary large. However, \eqref{eq:r-NN-lower-bound} demonstrates how greater success (better generalisability) implies more instabilities. Indeed, the NNs -- regardless of architecture and training procedure -- become either successful and unstable, or unsuccessful.  
	\item[(II)] \underline{\emph{There is no remedy within the standard DL framework}}. Note that (ii) in Theorem \ref{theorem:NNCV} demonstrates that there is no remedy within the standard DL framework to cure the instability issue described in Theorem \ref{theorem:NNCV}. The reason why is that standard DL methods will fix the architecture (i.e. the class $\mathcal{N}$ of NNs that one minimises over) of the neural networks.  Indeed, the misclassification in \eqref{eq:NNCVIncorrectClassification} happens for any neural network $\hat{\phi} \in \mathcal{NN}_{\mathbf{N},L}$.  This means that, for example, zero loss training \cite{Donoho_PNAS}, or any attempt using adversarial training \cite{madry2018towards, Goodfellow} -- i.e. computing   
	\begin{equation*}
		\label{opt_GANS}
		\min_{\phi\in\mathcal{N}} E_{x \sim \mathcal{D}} \max_{z \in \mathcal{U}} \mathcal{L}(\phi(x+z), f(x)),
	\end{equation*}
	where $\mathcal{N} \subset \mathcal{NN}_{\mathbf{N},L}$ is any collection of NNs described by a specific architecture, $\mathcal{U} \subset \mathbb{R}^d$ and $\mathcal{L}$ is any real valued cost function -- will not solve the problem.  In fact, (ii) in Theorem \ref{theorem:NNCV} immediately implies that adversarial training will reduce the performance if it increases the stability.

	\item[(III)]  \underline{\emph{There are accurate and stable NNs, but DL methods do not find them}}. Note that (iii) in Theorem \ref{theorem:NNCV} demonstrates that there are stable and accurate NNs for many classification problems where DL methods produce 
	unstable NNs. Thus, the problem is not that stable and accurate NNs do not exist; instead, the problem is that DL methods do not find them. The reason is that the dimensions and architectures of the stable and accurate networks will change depending on the size and properties of the data set. 
	
	\item[(IV)]  \underline{\emph{Why instability? -- Unstable correlating features are picked up by the trained NN}}. 
	In addition to the statement of Theorem \ref{theorem:NNCV}, the proof techniques illustrate the root causes of the problem. The reason why one achieves the great success described by (i) in Theorem \ref{theorem:NNCV} is that the successful NN picks up a feature in the training set that correlates well with the classification function but is itself unstable. This phenomenon is empirically investigated in \cite{Madry_BugsNIPS2019}. 
	
	\item[(V)]  \underline{\emph{No training model where the dimensions of the NNs are fixed can cure instability}}. 
	Note that (ii) in Theorem \ref{theorem:NNCV} describes the reason for the failure of the DL methodology to produce a stable and accurate NN. Indeed, as pointed out above, the dimensions of the stable and accurate NN will necessary change with the amount of data in the training and validation set. 
	
\item[(VI)]  \underline{\emph{Adding more training data cannot cure the general instability problem}}. 
	Note that \eqref{eq:r-NN-lower-bound} in Theorem \ref{theorem:NNCV} shows that adding more training data will not help. In fact, it can make the problem worse. Indeed, \eqref{eq:r-NN-lower-bound} allows $s$ -- the number of elements in the test set - to be set so that $s = 1$, and $r$ -- the number of training data -- can be arbitrary large. Hence, if $r$ becomes large and $s$ is small, 	then the trained NN -- if successful -- will (because of (ii)) start getting instabilities on the training data. In particular, the network has seen the data, but will misclassify elements arbitrary close to seen data.
	
\item[(VII)]  \underline{\emph{Comparison with the No-Free-Lunch Theorem}}. The celebrated No-Free-Lunch Theorem has many forms. However, the classical impossibility result we refer to (Theorem 5.1 in \cite{MachLearnCUP2014}) states that for any learning algorithm for classification problems there exists a classification function $f$ and a distribution $\mathcal{D}$ that makes the algorithm fail. Our Theorem \ref{theorem:NNCV} is very different in the way that it is about instability. Moreover, it is an impossibility result specific for deep learning. Thus, the statements are much stronger.

 Indeed -- in contrast to the single classification function $f$ and distribution $\mathcal{D}$ making a fixed algorithm fail in the No-Free-Lunch Theorem -- Theorem \ref{theorem:NNCV} shows the existence of uncountably many classification functions $f$ and distributions $\mathcal{D}$ such that for any fixed architecture DL will either yield unstable and successful NNs or unsuccessful NNs. This happens despite the existence of stable and accurate NNs for exactly the same problem. Moreover, Theorem \ref{theorem:NNCV} shows how NNs can generalise well given relatively few training data compared to the test data, but at the cost of non-robustness (note that this is in contrast to the No-Free-Lunch theorem wherein few training samples leads to a lack of generalisation). See also \cite{gottschling2020troublesome} for other `No free lunch' theorems.
\end{itemize}

\section{Main Results II -- NNs may provably exist, but no algorithm can compute them}\label{sec:NNComp}

The much celebrated Universal Approximation Theorem is widely used as an explanation for the wide success of NNs in the sciences and in AI, as it guarantees the existence of neural networks that can approximate arbitrary continuous functions. In essence, any task that can be handled by a continuous function can also be handled by a neural network. 

\begin{theorem}[Universal Approximation Theorem \cite{pinkus1999approximation}]\label{thrm:UniversalApproximationTheorem}
Suppose that $\sigma \in C(\mathbb{R})$, where $C(\mathbb{R})$ denotes the set of continuous functions on $\mathbb{R}$. Then the set of neural networks with non-linearity $\sigma$ is dense in $C(\mathbb{R}^d)$ in the topology of uniform convergence on compact sets, if and only if $\sigma$ is not a polynomial. 
\end{theorem}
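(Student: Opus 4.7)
The plan is to prove the equivalence by handling each direction separately. The ``only if'' direction is elementary: if $\sigma$ is a polynomial of degree $k$, then every single-hidden-layer network $x \mapsto \sum_{i=1}^N c_i \sigma(w_i \cdot x + b_i)$ is a polynomial of total degree at most $k$ in $x$. Polynomials of degree at most $k$ in $d$ variables form a finite-dimensional, hence closed, subspace of $C(K)$ for any compact $K \subset \mathbb{R}^d$, so the closure of the set of such networks consists only of polynomials of degree at most $k$ and therefore cannot contain, for instance, $e^{x_1}$. Density in $C(\mathbb{R}^d)$ accordingly fails.

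For the \emph{if} direction, I would first reduce to the one-dimensional case. The key observation is that the ridge functions $\{g(a \cdot x) : a \in \mathbb{R}^d,\ g \in C(\mathbb{R})\}$ span a subspace dense in $C(\mathbb{R}^d)$ under uniform convergence on compact sets; this follows from Stone--Weierstrass applied to the algebra generated by $\{\cos(a \cdot x), \sin(a \cdot x)\}$. Consequently, if every $g \in C(\mathbb{R})$ is a uniform-on-compacts limit of finite sums $\sum_i c_i \sigma(w_i t + b_i)$, then composing with linear functionals produces approximants of the form $\sum_i c_i \sigma(w_i' \cdot x + b_i')$, and a standard diagonal argument combines these with the ridge-function density to yield the multivariate density of neural networks.

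The heart of the proof is thus the one-dimensional statement: if $\sigma \in C(\mathbb{R})$ is not a polynomial, then $\mathcal{M}(\sigma) := \mathrm{span}\{\sigma(wt + b) : w, b \in \mathbb{R}\}$ is dense in $C(\mathbb{R})$. The strategy is to show that $\overline{\mathcal{M}(\sigma)}$ contains every monomial $t^k$, after which the classical Weierstrass theorem finishes the job. Assume first that $\sigma \in C^\infty$; then for each fixed $b$,
\[
\left. \frac{\partial^k}{\partial w^k} \sigma(wt + b) \right|_{w=0} \;=\; t^k \, \sigma^{(k)}(b),
\]
and the left-hand side is a uniform-on-compacts limit of $k$-th order finite differences, each of which is a finite linear combination of elements $\sigma(ht + b)$ and hence lies in $\mathcal{M}(\sigma)$. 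If some $b$ gives $\sigma^{(k)}(b) \neq 0$, we obtain $t^k \in \overline{\mathcal{M}(\sigma)}$; if no such $b$ existed, then $\sigma^{(k)} \equiv 0$ would force $\sigma$ to be a polynomial of degree less than $k$, contradicting the hypothesis.

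The main obstacle I expect is extending the argument from smooth $\sigma$ to merely continuous $\sigma$ via mollification. Choose a smooth compactly supported $\varphi \geq 0$ with $\int \varphi = 1$, set $\varphi_\epsilon(s) = \epsilon^{-1}\varphi(s/\epsilon)$ and $\sigma_\epsilon = \sigma * \varphi_\epsilon$, so that each $\sigma_\epsilon$ is $C^\infty$. Two things need careful justification: first, each translate/dilate $\sigma_\epsilon(w t + b) = \int \sigma(wt + b - s)\varphi_\epsilon(s)\,ds$ is a uniform-on-compacts limit of Riemann sums $\sum_i \varphi_\epsilon(s_i)\Delta s_i \cdot \sigma(wt + b - s_i)$, so $\sigma_\epsilon(w t + b) \in \overline{\mathcal{M}(\sigma)}$; and second, for each fixed $k$ the mollification $\sigma_\epsilon$ cannot be a polynomial of degree $\leq k$ for all small $\epsilon$, since the space of such polynomials is closed under uniform-on-compacts convergence and $\sigma_\epsilon \to \sigma$, which would force $\sigma$ itself into that subspace. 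Applying the smooth case to a suitable $\sigma_\epsilon$ for each $k$ places every monomial $t^k$ in $\overline{\mathcal{M}(\sigma)}$, completing the proof.
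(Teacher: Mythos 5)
The paper does not supply a proof of this theorem; it is quoted from Pinkus's survey, which is exactly the reference you are implicitly reconstructing. Your argument is the standard Leshno--Lin--Pinkus--Schocken proof and is essentially correct: the ``only if'' direction via finite-dimensionality of $P_k$, the reduction to one dimension via Stone--Weierstrass on the algebra generated by $\{\cos(a\cdot x),\sin(a\cdot x)\}$, the divided-difference trick $\partial_w^k \sigma(wt+b)\big|_{w=0} = t^k\sigma^{(k)}(b)$ to extract monomials in the smooth case, and the mollification step $\sigma_\epsilon = \sigma * \varphi_\epsilon$ with Riemann sums to handle continuous $\sigma$ are all as in Pinkus. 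One small indexing point: to place $t^k$ in $\overline{\mathcal{M}(\sigma)}$ you need some $\epsilon$ with $\sigma_\epsilon^{(k)}\not\equiv 0$, i.e.\ $\sigma_\epsilon\notin P_{k-1}$, so the ``cannot be a polynomial of degree $\leq k$'' in your last paragraph should read degree $\leq k-1$ (or, equivalently, run the argument for $k+1$). Also worth flagging, though it is a feature of the statement rather than of your proof: the ``only if'' direction is correct for single-hidden-layer networks, which is the setting of the cited theorem, but would fail if ``the set of neural networks'' were read as all depths, since deep networks with a degree-$\geq 2$ polynomial activation can realise polynomials of arbitrarily high degree and hence are dense by Weierstrass.
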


Theorem \ref{theorem:NNCV} illustrates basic methodological barriers in DL and suggests the following fundamental question: 
\begin{displayquote}
	\normalsize
	{\it If we can prove that a stable and well generalisable neural network exists, why do algorithms fail to compute them? }
\end{displayquote}
This question is not only relevant because of Theorem \ref{theorem:NNCV} but also the Universal Approximation Theorem that demonstrates -- in theory -- that there are very few limitations on what NNs can do. Yet, there is clearly a barrier that the desirable NNs that one can prove exist -- as shown in Theorem \ref{theorem:NNCV} and in many cases follow from the Universal Approximation Theorem -- are not captured by standard algorithms.

\subsection{The weakness of the universal approximation theorem -- When will existence imply computability?}

\noindent The connection between the \emph{provable existence} of a mathematical object and its \emph{computability} (that there is an algorithm that can compute it) touches on the foundations of mathematics \cite{smith_2013}. Indeed, there are cases in mathematics -- with the ZFC\footnote{Zermelo-Fraenkel axiomatic system with the axiom of choice, which is the standard axiomatic system for modern mathematics} axioms  -- where the fact that one can prove mathematically a statement about the existence of the object will imply that one can find an algorithm that will compute the object when it exists. Consider the following example: 

\vspace{2mm}

\begin{example}[{\bf When provable existence implies computability}]\label{ex:Dio}
Consider the following basic computational problem concerning Diophantine equations:
\begin{displayquote}
	\normalsize 
	{\it Let $\Theta$ be a collection of polynomials in $\mathbb{Z}[x_1,x_2,\dotsc,x_n]$ with integer coefficients, where $n \in \mathbb{N}$ can be arbitrary.  Given a polynomial $p \in \Theta$, does there exist an integer vector $a \in \mathbb{Z}^n$ such that $p(a) = 0$, and if so, compute such an $a$.}
\end{displayquote}
Note that in this case we have that "being able to prove $\Rightarrow$ being able to compute" as the following implication holds for the ZFC  model \cite{poonen_2014}:
\begin{equation}\label{eq:phase}
	\begin{tikzcd}[column sep=0mm, row sep=0.9cm]
		\fbox{\begin{tabular}{c}
				For any polynomial $p \in \Theta$, one can prove -- given the ZFC axioms -- that there exists \\
				an integer vector $a \in \mathbb{Z}^n$ such that $p(a) = 0$, or a negation of this statement.
		\end{tabular} }
		\arrow[d,Rightarrow,shift left, " "{name=U, below,sloped}]
		\\
		\fbox{ \begin{tabular}{c}
				There exists an algorithm $\Gamma$ taking any polynomial $p \in \Theta$ such that $\Gamma(p) = \text{`no'} $ if \\ there is no $a \in \mathbb{Z}^n$ such that $p(a) = 0$, otherwise $\Gamma(p) = a$ where $a \in \mathbb{Z}^n$ such that \\ $p(a)=0$.
		\end{tabular} }
	\end{tikzcd}
\end{equation}
The above implication is true, subject to ZFC being consistent and that theorems in ZFC about integers are true \cite{poonen_2014}. In particular, being able to prove existence or not of integer valued zeroes of polynomials in $\Theta$ implies the existence of an algorithm that can compute integer valued zeroes of polynomials in $\Theta$ and determine if no integer valued zero exists. 
\end{example}

There is a substantial weakness with the Universal Approximation Theorem and the vast literature on approximation properties of NNs, in that they provide little insight into how NNs should be computed, or indeed if they can be computed. As Example \ref{ex:Dio} suggests, there are cases where provable existence implies computability. Hence, we are left with the following basic problem:
\begin{displayquote}
	\normalsize 
	{\it If neural networks can be proven to exist, will there exist algorithms that can compute them? If this is not the case in general, what about neural networks that can be proven to be approximate minimisers of basic cost functions?}
\end{displayquote}
As we see in the next sections, the answer to the above question is rather delicate.  

\begin{remark}
		Although the universal approximation theorem does not directly apply to non-constant classification functions in the class \eqref{eq:the_fs}, if we consider a classification function restricted to a finite set (e.g. training and validation sets), it will have a continuous extension and hence the universal approximation theorem will apply. Furthermore, recent results discuss existence theorems in the general setting of \eqref{eq:the_fs} \cite{David1}.
	\end{remark}
\subsection{Inexactness and floating point arithmetic}
The standard model for computation in most modern software is floating point arithmetic. This means that even a rational number like $1/3$ will be approximated by a base-2 approximation. Moreover, the floating point operations yield errors, that -- in certain cases -- can be analysed through backward error analysis, which typically show how the computed solution in floating point arithmetic is equivalent to a correct computation with an approximated input.  Hence, in order to provide a realistic analysis we use the model of computation with inexact input as emphasised by S. Smale in his list of mathematical problems for the 21st century:
\begin{displayquote}
	\normalsize
	{\it ``But real number computations and algorithms which work only in exact arithmetic can offer only limited understanding. Models which process approximate inputs and which permit round-off computations are called for.''}
	\\[5pt]
	\rightline{ --- S. Smale (from the list of mathematical problems for the 21st century \cite{21century_Smale}) \hspace{15mm}}
\end{displayquote}
To model this situation, we shall assume that an algorithm designed to compute a neural network is allowed to see the training set to an arbitrary accuracy decided at runtime. More precisely, for a given training set $\mathcal{T} = \{x^1,x^2,\dotsc,x^r\}$, we assume that the algorithm (a Turing \cite{Turing_Machine} or Blum-Shub-Smale (BSS) \cite{BSS_machine} machine) is equipped with an oracle $\mathscr{O}$ that can acquire the true input to any accuracy $\epsilon$.  Specifically, the algorithm cannot access the vectors $x^1,x^2,\dotsc,x^r$ but rather, for any $k \in \mathbb{N}$, it can call the oracle $\mathscr{O}$ to obtain $x^{1,k},x^{2,k},\dotsc,x^{r,k}$ such that 
\begin{equation}\label{eq:oracle22}
	\|x^{i,k} - x^{i}\|_{\infty} \leq 2^{-k}, \qquad \text{ for } i=1,2,\dotsc,r \text{ and }  \, \forall k\in\mathbb{N},
\end{equation} 
see \S \ref{sec:inexact} for details. 

Another key assumption when discussing the success of the algorithm is that it must be ``oracle agnostic'', i.e., it must work with any choice of the oracle $\mathscr{O}$ satisfying \eqref{eq:oracle22}. In the Turing model the Turing machine accesses the oracle via an oracle tape and in the BSS model the BSS machine accesses the oracle through an oracle node. This extended computational model of having inexact input is standard and can be found in many areas of the mathematical literature - we mention only a small subset here: E. Bishop \cite{bishop1967foundations}, F. Cucker \& S. Smale \cite{Cucker_Smale97}, C. Fefferman \& B. Klartag \cite{Fefferman_Klartag, Fefferman_Klartag2}, K. Ko \cite{Ko1991ComplexityTO} and L. Lov\'{a}sz \cite{lovasz1987algorithmic}.

\subsection{Being able to prove existence may imply being able to compute -- but not in Deep Learning} We now examine the difference between being able to prove the existence of a neural network and the ability to compute it, even in the case when the neural network is an approximate minimiser. 
Recall the typical training scenario of neural networks in \eqref{eq:NNOptimisationProblem} where one tries to find
\begin{equation*}
	{\phi} \in  \mathop{\mathrm{arg min}}_{\varphi \in \mathcal{NN}_{\mathbf{N},L}}  \mathcal{R} \left(\{\varphi({x}^j)\}_{j=1}^r,\{f({x}^j)\}_{j=1}^r\right),
\end{equation*}
where $f$ is the decision function, $\mathcal{R}$ is the cost function, and $\mathcal{T} = \{x^j\}_{j=1}^r$ is the training set. 
However, one will typically not reach the actual minimiser, but rather an approximation. Hence, we define \emph{the approximate argmin}.
\begin{definition}[The approximate argmin]
	Given an $\epsilon > 0$, an arbitrary set 
	$X$, a totally ordered set 
	$Y$, and a function $g\colon X\to Y$, the approximate
	$
	{\displaystyle \operatorname {argmin}_{\epsilon}}
	$ 
	over some subset $S \subset X$ is defined by
	\begin{equation}\label{eq:argminEpsDef}
		{\underset {x\in S}{\operatorname {argmin}_{\epsilon}}}\,g(x):=\{x \in S \, \vert \, g(x) \leq g(y) + \epsilon \,\, \forall \, y \in X\}
	\end{equation}
\end{definition}
To accompany the idea of the approximate argmin, we will also consider cost functions that are bounded with respect to the $\ell^{\infty}$ norm:
\begin{equation}\label{eq:CFDEps}
	\mathcal{CF}^{\epsilon,\hat \epsilon}_{r} = \{\mathcal{R} \in \mathcal{CF}_r : \mathcal{R}(v,w) \leq \epsilon \implies \|v-w\|_{\infty} \leq \hat \epsilon \}.
\end{equation}
The computational problem we now consider is to compute a neural network that is an approximate minimiser and evaluate it on the training set (this is the simplest task that we should be able to compute):
\begin{equation}\label{eq:compute_phi}
	\phi(x^j), \qquad  {\phi} \in  \mathop{\mathrm{arg min}_{\epsilon}}_{\varphi \in \mathcal{NN}_{\mathbf{N},L}}  \mathcal{R} \left(\{\varphi({x}^j)\}_{j=1}^r,\{f({x}^j)\}_{j=1}^r\right), \quad \epsilon > 0, \quad j =1, \hdots, r.
\end{equation}
Hence, an algorithm $\Gamma$ trying to compute \eqref{eq:compute_phi} takes the training set $\mathcal{T}$ as an input (or to be precise, it calls oracles providing approximations to the $x^j$s to any precision, see see \S \ref{sec:inexact} for details) and outputs a vector in $\mathbb{R}^r$. Hopefully, $\|\Gamma(\mathcal{T}) - \{\phi(x^j)\}_{j=1}^r\|$ is sufficiently small. 

The next theorem shows that even if one can prove the existence of neural networks that are approximate minimisers to optimisation problems with standard cost functions, one may not be able to compute them.

\begin{theorem}[NNs may provably exist, but no algorithm can compute them]\label{theorem:NNNonComp}
	There is an uncountable collection $\mathcal{C}_1$ of classification functions $f$ as in \eqref{eq:the_fs} -- with fixed $d \geq 2$ -- such that the following holds. For
	\begin{enumerate}
		\item any neural  network dimensions $\mathbf{N} = (N_L=1,N_{L-1},\dotsc, \allowbreak N_1,N_0=d)$ with $L \geq 2$,
		\item any  $r \geq 3(N_1+1) \dotsb (N_{L-1}+1)$,
		\item any $\epsilon > 0$, $\hat \epsilon \in (0,1/2)$ and cost function $\mathcal{R} \in \nncf^{\epsilon,\hat\epsilon}_r$,
		\item any randomised algorithm $\Gamma$,
		\item any $\mathrm{p} \in [0,1/2)$,
	\end{enumerate}
	there is an uncountable collection $\mathcal{C}_2$ of training sets $\mathcal{T} = \{x^1,x^2,\dotsc,x^r\} \in \mathcal{S}^f_{\varepsilon'(r)}$ such that for each $\mathcal{T} \in \mathcal{C}_2$ there exists a neural network $\phi$, where 
	\[
	\phi \in \argmineps  \mathcal{R} \left(\{\varphi({x}^j)\}_{j=1}^r,\{f({x}^j)\}_{j=1}^r\right), 
	\]	
	however, the algorithm $\Gamma$ applied to the input $\mathcal{T}$ will fail to compute any such $\phi$ in the following way:
	\[
	\mathbb{P}\Big(\|\Gamma(\mathcal{T}) - \{\phi(x^j)\}_{j=1}^r\|_{*} \geq 1/4-3\hat\epsilon/4\Big ) > p ,
	\]
	where $* = 1,2 \text{ or } \infty$.
\end{theorem}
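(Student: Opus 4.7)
\emph{Strategy.} The plan is an adversarial-oracle indistinguishability argument: for any fixed randomised algorithm $\Gamma$ I would construct pairs of training sets in $\mathcal{C}_2$ that look identical to $\Gamma$ through its inexact oracle, yet for which the $\epsilon$-approximate minimisers are forced to take values at least $1-2\hat\epsilon$ apart on a designated training point. A triangle-inequality step then prevents $\Gamma$ from being simultaneously $(1/4-3\hat\epsilon/4)$-close to both targets with probability exceeding $1/2$, which delivers the claimed bound $\mathbb{P}(\cdot)>p$.

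\emph{Construction of the training sets.} Fix $f\in\mathcal{C}_1$ (inherited from the construction used for Theorem~\ref{theorem:NNCV}), chosen so that $f$ has a jump across some interior hyperplane of $[0,1]^d$. I would first place a ``skeleton'' $\{x^1,\dotsc,x^{r-1}\}\in\mathcal{S}^f_{\varepsilon'(r)}$ deep inside regions on which $f$ is locally constant, and then let the remaining training point $x^r=x^r(\xi)$ traverse the jump along a continuous path parametrised by $\xi\in(-\delta,\delta)$ for some tiny $\delta>0$. Pairs $\xi_-<0<\xi_+$ with $f(x^r(\xi_-))=0$ and $f(x^r(\xi_+))=1$ produce the two training sets $\mathcal{T}^\pm$; the uncountable $\mathcal{C}_2$ will be extracted from this continuous family after conditioning on $\Gamma$. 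The assumption $r\geq 3(N_1+1)\dotsb(N_{L-1}+1)$ is used to control the linear-region count of any network in $\mathcal{NN}_{\mathbf{N},L}$: via pigeonhole it forces several skeleton points into a common affine piece of any candidate minimiser, which pins its value at $x^r(\xi)$ to a neighbourhood of $f(x^r(\xi))$ rather than to a spurious interpolant.

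\emph{Approximate minimiser and separation.} For each $\xi$ one explicitly constructs a ReLU network $\phi_\xi\in\mathcal{NN}_{\mathbf{N},L}$ interpolating $f$ on the entire training set; since its cost is zero, it trivially lies in $\operatorname{argmin}_\epsilon$ and supplies the $\phi$ promised by the statement. The hypothesis $\mathcal{R}\in\mathcal{CF}_r^{\epsilon,\hat\epsilon}$ then converts the relation $\mathcal{R}(\cdot,\cdot)\leq\epsilon$ into $\ell^\infty$-closeness at most $\hat\epsilon$ between the network's and $f$'s training values, so every $\epsilon$-minimiser $\tilde\phi$ on $\mathcal{T}^\pm$ must satisfy $\tilde\phi(x^r(\xi_\pm))\in[f(x^r(\xi_\pm))-\hat\epsilon,\,f(x^r(\xi_\pm))+\hat\epsilon]$. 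Consequently the admissible output vectors for $\mathcal{T}^-$ and $\mathcal{T}^+$ are separated by at least $1-2\hat\epsilon$ in the last coordinate, regardless of which minimiser $\tilde\phi$ the algorithm is trying to approximate.

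\emph{Oracle coupling and conclusion.} For each pair $(\xi_-,\xi_+)$ I would design an oracle $\mathscr{O}$ that answers both instances with the same midpoint-style approximations at every precision level $\Gamma$ queries with positive probability; coupling the two runs on the same random coins then gives $\Gamma(\mathcal{T}^-)\stackrel{d}{=}\Gamma(\mathcal{T}^+)$. Suppose toward a contradiction that $\mathbb{P}(\|\Gamma(\mathcal{T}^\pm)-\{\phi_{\xi_\pm}(x^j)\}\|_*<1/4-3\hat\epsilon/4)\geq 1-p>1/2$ for both signs; a union bound produces a single realised output $Y$ within $1/4-3\hat\epsilon/4$ of both target vectors, whence the triangle inequality forces $1-2\hat\epsilon\leq 2(1/4-3\hat\epsilon/4)=1/2-3\hat\epsilon/2$, i.e.\ $\hat\epsilon\geq 1$, contradicting $\hat\epsilon<1/2$. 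Hence for at least one sign of $\xi$ the algorithm fails with probability $>p$; running this over the continuous family of pairs and collecting the failed members yields an uncountable $\mathcal{C}_2$ on which $\Gamma$ fails as required. The main obstacle is making the oracle-agnosticism step fully rigorous: a randomised $\Gamma$ may in principle query arbitrarily fine precisions, so the naive ``common midpoint'' oracle is only consistent up to a bounded query depth. One needs either a tree/compactness argument over $\Gamma$'s finite computation traces, or a Baire-category/cardinality extraction from the uncountable family $\{x^r(\xi)\}_\xi$ to identify pairs that produce a genuinely identical oracle transcript — this, rather than the arithmetic on $\hat\epsilon$, is the technical heart of the proof.
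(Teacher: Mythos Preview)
There is a genuine gap, and it lies exactly where you flag the ``main obstacle'' but it is worse than a technicality. Your two training sets $\mathcal{T}^\pm$ differ only at the moving point $x^r(\xi_\pm)$, and you need $f(x^r(\xi_-))=0$, $f(x^r(\xi_+))=1$, \emph{and} $\|x^r(\xi_-)-x^r(\xi_+)\|_\infty$ arbitrarily small (so that a common oracle transcript exists no matter how deep the precision queries go). But the theorem requires $\mathcal{T}\in\mathcal{S}^f_{\varepsilon'(r)}$, which in particular forces $f$ to be constant on the $\varepsilon'(r)$-ball about $x^r$. Hence any $x^r(\xi_-)$ and $x^r(\xi_+)$ with different $f$-values must be at least $2\varepsilon'(r)$ apart, and the algorithm can separate them by querying to precision finer than $\varepsilon'(r)$. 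No compactness or Baire argument over computation traces will rescue this: the obstruction is geometric, not combinatorial. Relatedly, your stated use of the hypothesis $r\geq 3(N_1+1)\cdots(N_{L-1}+1)$ is superfluous in your scheme --- once you have a zero-cost interpolant on each $\mathcal{T}^\pm$, the $\mathcal{CF}^{\epsilon,\hat\epsilon}_r$ condition alone pins every $\epsilon$-minimiser to within $\hat\epsilon$ of $f$, with no need for a linear-region count.

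The paper's route avoids the stability trap by perturbing a coordinate that $f$ \emph{ignores}. All $r$ training points $x^{k,\delta}$ are used (not a skeleton plus one mover), their labels $f_a(x^{k,\delta})$ are independent of $\delta$, and the sets stay in $\mathcal{S}^{f_a}_{\varepsilon'(r)}$ for every $\delta\geq 0$ (Lemma~\ref{lem:NN-separation-lemma}). The indistinguishable pair is $\iota^0=\mathcal{T}_0$ (with $\delta=0$) versus $\iota^1_n=\mathcal{T}_{4^{-n}}$. The separation of the $\epsilon$-argmin sets does \emph{not} come from $f$ changing value; it comes from fittability: at $\delta=0$ the points are collinear and the linear-region bound (Lemma~\ref{lem:NoNN}, which is where $r\geq 3\prod(N_\ell+1)$ is actually consumed) forces every network to miss $f$ by at least $1/2$ somewhere, whereas for $\delta>0$ a network can read the second coordinate and fit perfectly (Lemma~\ref{lem:NNUnstableExists}), so every $\epsilon$-minimiser is $\hat\epsilon$-close to $f$ (Lemma~\ref{lem:NNSuccess}). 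This yields a $1/2-\hat\epsilon$ gap between the two solution sets, and the SCI machinery (Proposition~\ref{prop:DrivingNegativeProposition}) converts that into the probabilistic breakdown bound without any ad hoc oracle construction.
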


\subsection{A missing theory in AI -- Which NNs can be computed?}

If provable existence results about NNs were to imply that they could be computed by algorithms, the research effort to secure stable and accurate AI would -- in most cases -- be about finding the right algorithms that we know exist, due to the many neural network existence results \cite{devore_hanin_petrova_2021, pinkus1999approximation}. In particular, the key limitation for providing stable and accurate AI via deep learning -- at least in theory -- would be the capability of the research community.  However, as Theorem \ref{theorem:NNNonComp} reveals, the simplest existence results of NNs as approximate minimisers do not imply that they can be computed. Therefore the research effort moving forward must be about which NNs that can be computed by algorithms and how. Indeed, the limitations of deep learning as an instrument in AI will be determined by the limitations of existence of algorithms and their efficiency for computing NNs.

\begin{remark}[Theorem \ref{theorem:NNNonComp} is independent of the exact computational model]
	Note that the result above is independent of whether the underlying computational device is a BSS machine or a Turing machine. To achieve this, we work with a definition of an algorithm termed a \emph{general algorithm}. The corresponding definitions as well as a formal statement of Theorem \ref{theorem:NNNonComp} are detailed in \S \ref{Sec:SCIBackground} and Proposition  \ref{prop:NNNonComp-formal} respectively.
\end{remark}

\begin{remark}[Irrelevance of local minima]\label{rem:LocalMinima}
	Note that Theorem \ref{theorem:NNNonComp} has nothing to do with the potential issue of the optimisation problem having several local minima. Indeed, the general algorithms used in Theorem \ref{theorem:NNNonComp} are more powerful than any Turing machine or BSS machine as will be discussed further in Remark \ref{rem:GAPower}. \end{remark}

\begin{remark}[Hilbert's 10th problem]
Finally, we mention in passing that Theorem \ref{theorem:NNNonComp} demonstrates -- in contrast to Hilbert's 10th problem \cite{matiisevich1993hilbert} -- that non-computability results in DL do not prevent provable existence results. Indeed, because of the implication in \eqref{eq:phase} and the non-computability of Hilbert's 10th problem \cite{matiisevich1993hilbert, poonen_2014} (when $\Theta$ is the collection of all polynomials with integer coefficients in Example \ref{ex:Dio}), there are infinitely many Diophantine equations for which one cannot prove existence of an integer solution -- or a negation of the statement. 
\end{remark}

\section{Connection to previous work}

The literature documenting the instability phenomenon in DL is so vast that we can only cite a tiny subset here 
\cite{akhtar2018threat,carlini2018audio,huang2018some, finlayson2019adversarial, Heaven_Nature, AdcockHansenBook, SzZ-14, antun2020instabilities, Deep_Fool1, Deep_Fool2, Deep_Fool3, Goodfellow, shafahi2018adv, tyukin2020adversarial, Fawzi_NIPS2018}, see the references in the survey paper \cite{akhtar2018threat} for a more comprehensive collection. Below we will highlight some of the most important connections to our work.

\begin{itemize}[leftmargin=*]
	\item[(i)] {\bf \emph{Universality of instabilities in AI}}. A key feature of Theorem \ref{theorem:NNCV} is that it demonstrates how the perturbations are universal, meaning that one adversarial perturbation works for all the cases where the instability occurs -- as opposed to a specific input dependent adversarial perturbation. The Deep Fool program \cite{Deep_Fool1, Deep_Fool2, Deep_Fool3} -- created by S. Moosavi-Dezfooli, A. Fawzi, O. Fawzi, and P. Frossard -- was the first to establish empirically that adversarial perturbations can be made universal, and this phenomenon is also universal across different methods and architectures. For recent and related developments, see D. Higham and I. Tyukin et al. \cite{THGW21, tyukin2020adversarial} which describe instabilities generated by perturbations to the structure of a neural network, \cite{BGHHPSTZ,sutton2023adversarial} wherein instability to randomised perturbations are considered, as well as the results by L. Bungert and G. Trillos et al. \cite{bungert2023trilos_binary}, and S. Wang, N. Si, and J. Blanchet \cite{wang2023blanchet1}.
	
	\item[(ii)] {\bf \emph{Approximation theory and numerical analysis}}. There is a vast literature proving existence results of ReLU networks for deep learning, investigating their approximation power, where the recent work of R. DeVore, B. Hanin and G. Petrova \cite{devore_hanin_petrova_2021} also provides a comprehensive account of the contemporary developments. 
	The huge approximation literature on existence results and approximation properties of NNs prior to the year 2000 is well summarised by A. Pinkus in \cite{pinkus1999approximation}. 
	Our results suggest a program combining recent approximation theory \cite{Nina1, devore_hanin_petrova_2021} results with foundations of mathematics and numerical analysis to characterise the NNs that can be computed by algorithms. This aligns with the work of B. Adcock and N. Dexter \cite{adcock2020gap} that demonstrates the gap between what algorithms compute and the theoretical existence of NNs in function approximation with deep NNs. Note that results on existence of algorithms in learning -- with performance and stability guarantees -- do exist (see the work of P. Niyogi, S. Smale and S. Weinberger \cite{Smale_Weinberger}), but so far not in DL.

	\item[(iii)] {\bf \emph{Mathematical explanation of instability and impossibility results}}. Our paper is very much related to the work of 
	H. Owhadi, C. Scovel and T. Sullivan \cite{Owhadi_2015, Owhadi_SIREV} who "observe that learning and robustness are antagonistic properties". 
	The recent work of I. Tyukin, D. Higham and A. Gorban \cite{tyukin2020adversarial} and A. Shafahi, R. Huang, C. Studer, S. Feizi and T. Goldstein \cite{shafahi2018adv} demonstrate how the instability phenomenon increases with dimension showing universal lower bound on stability as a function of the dimension of the domain of the classification function. Note, however, that the results in this paper are independent of dimensions. The work of A. Fawzi, H. Fawzi and O. Fawzi \cite{Fawzi_NIPS2018} is also related, however, their results are about adversarial perturbations for any function, which is somewhat different from the problem that DL is unstable to perturbations that humans do not perceive. In particular, our results focus on how DL becomes unstable despite the fact that there is another device (in our case another NN) or a human that can be both accurate and stable.

	\item[(iv)] {\bf \emph{Proof techniques -- The Solvability Complexity Index (SCI) hierarchy}}.
	Initiated in \cite{Hansen_JAMS}, the mathematics behind the Solvability Complexity Index (SCI) hierarchy provides a variety of techniques to show lower bounds and impossibility results for algorithms -- in a variety of mathematical fields -- that provide the foundations for the proof techniques in this paper, see the works by V. Antun, J. Ben-Artzi, M. Colbrook, A. C. Hansen, M. Marletta, O. Nevanlinna, F. R\"osler,  M. Seidel. \cite{comp_stable_NN22, SCI, Hansen_JAMS, CRAS, Ben-Artzi_FoCM2021}. This is strongly related to the work of S. Weinberger \cite{Weinberger} on the existence of algorithms for computational problems in topology. The authors of this paper have also extended the SCI framework \cite{opt_big} in connection with the extended Smale's 9th problem.

	\item[(v)] {\bf \emph{Robust optimisation}}. Robust optimisation  \cite{Nemirovski_robust, Nemirovski_robust2, NemirovskiLMCO}, pioneered by A. Ben-Tal, L.  El Ghaoui and A. Nemirovski, is an essential part of optimisation theory addressing sensitivity to perturbations and inexact data in optimisation problems. There are crucial links to our results -- indeed, a key issue is that the instability phenomenon in DL leads to non-robust optimisation problems. In fact, there is a fundamental relationship between Theorem \ref{theorem:NNCV}, Theorem \ref{theorem:NNNonComp} and robust optimisation. Theorem \ref{theorem:NNNonComp} yields impossibility results in optimisation, where non-robustness is a key element. The big question is whether stable and accurate NNs -- with variable dimensions -- that exist as a result of Theorem \ref{theorem:NNCV} can be shown to be approximate minimisers of robust optimisation problems. This leads to the final question, would such problems be computable and have efficient algorithms?
	The results in this paper can be viewed as an instance of where robust optimisation meets the SCI hierarchy. This was also the case in the recent results on the extended Smale's 9th problem \cite{opt_big}. 
	
\end{itemize}

%%%%%

\section{Proofs of the main results}
\subsection{Some well known definitions and ideas from deep learning}\label{sec:NNBasics}
In this section we outline some basic well known definitions and explain the notation that will be useful for this paper. Many of these definitions can be found in \cite{Goodfellow-et-al-2016}.
For a vector $x \in \real^{N_1}$, we denote $x_i$ by the $i$th coordinate. Similarly, for a matrix $A \in \real^{N_1 \times N_2}$ for some dimensions $N_1 \in \mathbb{N}$ and $N_2 \in \mathbb{N}$ we denote $A_{i,j}$ by the entry of $A$ contained on the $i$th row and the $j$th column. 

Recall that for natural numbers $n_1,n_2$, an affine map $W: \real^{n_1} \to \real^{n_2}$ is a map such that there exists $A \in \real^{n_2 \times n_1}$ and $b \in \real^{n_2}$ so that for all $x \in \real^{n_1}$, $Wx = Ax + b$.
Let $L,d$ be natural numbers and let $\mathbf{N} := (N_L=1,N_{L-1},\dotsc,N_1,N_0)$ be a vector in $\mathbb{N}^{L+1}$ with $N_0 = d$. An \emph{neural network with dimensions} $(\mathbf{N},L)$ is a map $\phi: \real^d \to \real$ such that \[\phi = W^L \sigma W^{L-1} \sigma W^{L-2}\dotsc \sigma W^1\] where, for $l=1,2,\dotsc,L$, the map $W^l$ is an affine map from $\real^{N_{l-1}} \to \real^{N_{l}}$ i.e. $W^l x^l = A^l x^l + b^l$ where $b^l \in \real^{N_l}$ and $A^l \in \real^{N_{l} \times N_{l-1}}$. The map $\sigma: \real \to \real$ is interpreted as a coordinate-wise map and is called the \emph{non-linearity} or \emph{activation function}: typically, $\sigma$ is chosen to be continuous and non-polynomial \cite{pinkus1999approximation}.

In this paper, we focus on the well-known \emph{ReLU} non-linearity, which we denote by $\rho$. More specifically, for $x \in \real$ , we define $\rho(x)$ by $\rho(x) = 0$ if $x < 0$ and $\rho(x) = x$ if $x \geq 0$. We denote all neural networks with dimensions $(\mathbf{N},L)$ and the ReLU non-linearity by $\mathcal{NN}_{\mathbf{N},L}$. This will be the central object for our arguments.

\begin{remark}
	Although we focus on the ReLU non-linearity, it is possible to use the techniques presented in this paper to prove similar results for other non-linearities like the leaky ReLU \cite{LeakyReLU} $\rho^{\text{leaky}}$ and the parameterized ReLU \cite{ParReLU} $\rho^{param}_{\alpha}$ where
	\begin{equation*}
		\rho^{\text{leaky}}(x) = \begin{cases} 0.01 \cdot x & \text{if } x < 0\\
		x & \text { if } x \geq 0
		\end{cases}, \quad \rho^{param}_{\alpha}(x) = \begin{cases} \alpha x & \text{if } x < 0\\
		x & \text { if } x \geq 0
	\end{cases}
	\end{equation*}
\end{remark}

In this paper the most common norms we use are the $\ell^p$ norms: for a vector $x \in \real^d$ for some natural number $d$ and some $p \in [1,\infty)$, the $\ell^p$ norm of $x$ (which we denote by $\|x\|_p$) is given by $\|x\|_p = (\sum_{i=1}^d |x_i|^p)^{1/p}$. We also define the $\ell^{\infty}$ norm of $x$ (which we denote by $\|x\|_{\infty}$) by $\|x\|_{\infty}:= \max_{i = 1,2,\dotsc,d} |x_i|$. It is easy to see the following inequality: $\|x\|_{\infty} \leq \|x\|_2 \leq \|x\|_1$. 
We will denote the ball of radius $\epsilon$ about $x$ in the infinity norm by $\mathcal{B}^{\infty}_{\epsilon}(x)$ i.e. 
$\mathcal{B}^{\infty}_{\epsilon}(x) = \{y \in \real^d \, \vert \, \|y-x\|_{\infty} \leq \epsilon\}$. For a set $S$ we denote $\mathcal{B}^{\infty}_{\epsilon}(S)$ by $\cup_{x \in S} \mathcal{B}^{\infty}_{\epsilon}(x)$.

The cost function of a neural network is used in the training procedure: typically, one attempts to compute solutions to \eqref{eq:NNOptimisationProblem} where the function $\mathcal{R}$ is known as the cost function. In optimisation theory the cost function is sometimes known as the objective function and sometimes the loss function. 
Some standard choices for $\mathcal{R}$ include the following:
\begin{enumerate}
	\item \emph{Cross entropy cost function}, where $\mathcal{R}$ is defined by 
	\begin{equation*}
		\mathcal{R}(\{v^j\}_{j=1}^{r},\{w^j\}_{j=1}^{r}):= -\frac{1}{r}\sum_{j=1}^{r} \left(w^j \log(v^j) + (1-w^j) \log(1-v^j)\right)
	\end{equation*}
	The cross entropy function is only defined if $v^{j} \in [0,1]$: it is easy to extend this definition to $\mathcal{R}(\{v^j\}_{j=1}^{r},\{w^j\}_{j=1}^{r}):=\infty$
	when $v^j \notin [0,1]$ for some $j$.
\item \emph{Mean square error}, where $\mathcal{R}$ is defined by 
\begin{equation*}
	\mathcal{R}(\{v^j\}_{j=1}^{r},\{w^j\}_{j=1}^{r}):= \frac{1}{r}\|\{w^j\}_{j=1}^{r} - \{v^j\}_{j=1}^r\|_2^2
\end{equation*}
\item \emph{Root mean square error}, where $\mathcal{R}$ is defined by 
\begin{equation*}
	\mathcal{R}(\{v^j\}_{j=1}^{r},\{w^j\}_{j=1}^{r}):= \frac{1}{r}\|\{w^j\}_{j=1}^{r} - \{v^j\}_{j=1}^r\|_2
\end{equation*}
\item \emph{Mean absolute error}, where 
\begin{equation*} \mathcal{R}(\{v^j\}_{j=1}^{r},\{w^j\}_{j=1}^{r}):= \frac{1}{r}\|\{w^j\}_{j=1}^{r} - \{v^j\}_{j=1}^r\|_1 \end{equation*}
\end{enumerate}
Note that each of these functions are in $\mathcal{CF}_{r}$ where $\mathcal{CF}_{r}$ is defined in \eqref{eq:CF}.
\subsection{Lemmas and definitions common to the proofs of both Theorem \ref{theorem:NNCV} and \ref{theorem:NNNonComp}}

For both theorems the proof relies on the points $x^{k,\delta}\in\real^{N_0}$, defined for $k\in\mathbb{N}$, $\delta\geq 0$, $\kappa\in[1/4,3/4]$, and $a\in[1/2,1]$ as follows:
\begin{equation}\label{eq:theNNexes}
	x^{k,\delta}=\begin{cases}
		(a(k+1-\kappa)^{-1},0,\dotsc,0),&\text{if $k$ is odd} \\
		(a(k+1-\kappa)^{-1},\delta,0,0,\dotsc,0), &\text{if $k$ is even}
	\end{cases}.
\end{equation}
Both theorems also rely on some classification functions $f_a$ for $a \in [1/2,1]$, defined as follows: we set $f_{a}: \mathbb{R}^{N_0} \rightarrow \{0,1\}$
\begin{equation} \label{eq:TheFs}
	f_a(x) = \begin{cases} 1 & \text{ if } \lceil a/x_1\rceil \text{ is an odd integer }\\
		0 & \text{ otherwise }  \text{(including } x = 0\text) \end{cases}
\end{equation}
In particular, note that for any $\delta \geq 0$, $f_a(x^{k,\delta}) = 1$ if $k$ is even and $f_a(x^{k,\delta}) = 0$ if $k$ is odd.
The following three lemmas will be useful in both proofs. The first of these lemmas shows that finite collections of $x^{k,\delta}$ are well separated. Precisely, we will prove the following:
\begin{lemma}\label{lem:NN-separation-lemma}
	Let $a \in [1/2,1]$, $\kappa\in [1/4,3/4]$ and $\delta\geq 0$, and consider the points $x^{k,\delta}$ as given in \eqref{eq:theNNexes} and $f_a$ given as in \eqref{eq:TheFs}.
	Then, for every $K \in \mathbb{N}$, we have $ \{x^{1,\delta}, \dots, x^{K,\delta}\}\in \mathcal{S}^{f_a}_{\varepsilon'(K)}$, where  $\varepsilon'(n) := [(4n+3)(4n+4)]^{-1}$.
\end{lemma}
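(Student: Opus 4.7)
The plan is to verify the two conditions defining $\mathcal{S}^{f_a}_{\varepsilon'(K)}$ from the statement: (a) all pairwise $\ell^\infty$-distances among $\{x^{1,\delta},\dotsc,x^{K,\delta}\}$ are at least $2\varepsilon'(K)$, and (b) $f_a$ is constant on the open $\ell^\infty$-ball of radius $\varepsilon'(K)$ around each such point. Membership in $[0,1]^d$ follows at once from $a/(k+1-\kappa)\le 4/5$ for $k\ge 1$, together with the implicit assumption $\delta\in[0,1]$.

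For (a), I would first reduce to first-coordinate separation via $\|x^{i,\delta}-x^{j,\delta}\|_\infty\ge|x^{i,\delta}_1-x^{j,\delta}_1|$. Since $x^{k,\delta}_1=a/(k+1-\kappa)$ is strictly decreasing in $k$, a short monotonicity check shows the minimum over pairs in $\{1,\dotsc,K\}$ is attained at the consecutive pair $(K-1,K)$ and equals $a/[(K-\kappa)(K+1-\kappa)]$. Using $a\ge 1/2$ and the monotonicity of the denominator in $\kappa$ on $[1/4,3/4]$, I would bound this below by $8/[(4K-1)(4K+3)]$, then compare to $2\varepsilon'(K)=2/[(4K+3)(4K+4)]$; the comparison reduces to the trivial inequality $32K+32\ge 8K-2$.

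For (b), I would exploit that $f_a$ depends only on the first coordinate, so perturbations of the remaining coordinates are irrelevant. From $a/x^{k,\delta}_1=k+1-\kappa\in(k,k+1)$ one reads off $\lceil a/x^{k,\delta}_1\rceil=k+1$, so $f_a(x^{k,\delta})=1$ iff $k$ is even. For $\|y\|_\infty<\varepsilon'(K)$, the task reduces to showing $a/(x^{k,\delta}_1+y_1)$ still lies in $(k,k+1]$. Using the exact identity
\[
\frac{a}{x^{k,\delta}_1+y_1}-(k+1-\kappa)=-\frac{y_1/a}{u(u+y_1/a)},\qquad u:=(k+1-\kappa)^{-1},
\]
together with $|y_1|/a<2\varepsilon'(K)\le u/2$ (hence $u(u+y_1/a)\ge u^2/2$) and $u^{-2}\le(K+3/4)^2=(4K+3)^2/16$, I would bound the shift in absolute value by $(4K+3)/[4(4K+4)]<1/4$. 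Since $\min(\kappa,1-\kappa)\ge 1/4$ throughout $[1/4,3/4]$, this strict $<1/4$ bound is enough to keep the value in $(k,k+1)$, so the ceiling, and hence $f_a$, is preserved.

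The main obstacle is the joint calibration of $\varepsilon'(K)$: it must simultaneously be small enough that the closest first coordinates (occurring near index $K$) are at least $2\varepsilon'(K)$ apart, and small enough that the step function $x_1\mapsto\lceil a/x_1\rceil$ cannot jump in an $\varepsilon'(K)$-neighbourhood of any $x^{k,\delta}_1$. Both constraints scale like $K^{-2}$, and the specific form $\varepsilon'(n)=[(4n+3)(4n+4)]^{-1}$ is chosen so that the worst-case constants (corresponding to $a=1/2$ and $\kappa$ at one of the endpoints $1/4$ or $3/4$) are absorbed cleanly by a single clean inequality in each clause.
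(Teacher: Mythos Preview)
Your proposal is correct and follows essentially the same approach as the paper: both arguments reduce the separation to the first coordinate and identify the minimal gap at the consecutive pair $(K-1,K)$, and both verify stability by showing that $a/(x_1^{k,\delta}+y_1)$ remains in $(k,k+1]$. The only cosmetic difference is in part (b): the paper bounds $y_1/a$ directly against $1/(k+1)-x_1^{k,\delta}/a$ and $1/k-x_1^{k,\delta}/a$, whereas you write the shift $a/(x_1^{k,\delta}+y_1)-(k+1-\kappa)$ via an exact identity and bound its magnitude by $1/4$; both routes are elementary and lead to the same conclusion.
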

The purpose of the next lemma is to show that if $\delta > 0$ there is a neural network that matches $f_a$ on the $x^{k,\delta}$:
\begin{lemma}\label{lem:NNUnstableExists}
	Let $d$ be a natural number with $d\geq 2$, let $a \in [1/2,1]$, $\kappa\in [1/4,3/4]$ and $\delta > 0$, and consider the points $x^{k,\delta}$ as given in \eqref{eq:theNNexes} and $f_a$ given as in \eqref{eq:TheFs}. Fix neural networks dimensions $\mathbf{N} = (N_L=1,N_{L-1},\dotsc,N_1,N_0=d)$ with $L \geq 2$.
	Then there exists a neural network  $\tilde \varphi \in \mathcal{NN}_{\mathbf{N},L}$ with $\tilde\varphi(x^{k,\delta}) = f_a(x^{k,\delta})$ for all $k \in \mathbb{N}$.
\end{lemma}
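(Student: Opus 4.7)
The plan is to exploit the geometric separation built into the data: every odd-indexed point $x^{k,\delta}$ has second coordinate $0$ while every even-indexed one has second coordinate $\delta>0$, so a simple linear readout of the second coordinate classifies them perfectly. A direct computation with \eqref{eq:theNNexes} gives $a/x^{k,\delta}_1 = k+1-\kappa$, which lies in $[k+1/4, k+3/4]$ because $\kappa\in[1/4,3/4]$, hence $\lceil a/x^{k,\delta}_1\rceil = k+1$. Therefore $f_a(x^{k,\delta})=1$ iff $k$ is even iff $x^{k,\delta}_2=\delta$. In particular, $f_a(x^{k,\delta}) = x^{k,\delta}_2/\delta$ for every $k\in\mathbb{N}$, and it suffices to construct a network in $\mathcal{NN}_{\mathbf{N},L}$ that agrees with the linear functional $x\mapsto x_2/\delta$ on inputs with $x_2\geq 0$.

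First I would handle the base case $L=2$. Define the affine map $W^1:\mathbb{R}^d\to\mathbb{R}^{N_1}$ by $(W^1 x)_1 = x_2/\delta$ and $(W^1 x)_j=0$ for $2\leq j\leq N_1$, all biases zero, and define $W^2:\mathbb{R}^{N_1}\to\mathbb{R}$ by $W^2 y = y_1$. For each $x=x^{k,\delta}$, the first coordinate of $W^1 x$ equals $x^{k,\delta}_2/\delta \in\{0,1\}\subset[0,\infty)$, so $\rho$ acts as the identity there, and hence $\tilde\varphi(x^{k,\delta})=x^{k,\delta}_2/\delta = f_a(x^{k,\delta})$.

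For $L\geq 3$ I would extend this by inserting ``identity'' hidden layers: for $\ell=2,\dots,L-1$, define $W^\ell:\mathbb{R}^{N_{\ell-1}}\to\mathbb{R}^{N_\ell}$ by $(W^\ell y)_1 = y_1$ and $(W^\ell y)_j=0$ for $j\geq 2$, and keep the final map $W^L y=y_1$. Because the signal propagated through the first neuron is always non-negative at the inputs of interest, each ReLU in the chain acts as the identity on it, so the value $x^{k,\delta}_2/\delta$ is transmitted unchanged from the output of $W^1$ through to the output of the network. The architectural hypothesis $N_\ell\geq 1$ is trivially sufficient because only one neuron per layer is actually used; any additional neurons are made inactive by setting the corresponding rows of $W^\ell$ to zero.

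I do not anticipate any real obstacle, as the lemma reduces to a verification that the ``ReLU is the identity on non-negative inputs'' trick composes across arbitrarily many hidden layers. The only point to double-check carefully is the first-coordinate indexing when $d=2$, where $N_0=2$ forces $x_2$ to be literally the last coordinate of the input, but this does not affect the construction of $W^1$.
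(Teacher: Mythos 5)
Your proposal is correct and is essentially the same construction as the paper's: the first layer reads off $x_2/\delta$ into the first neuron (i.e.\ $A^1_{1,2}=\delta^{-1}$ with all other entries and biases zero), subsequent layers pass the first coordinate through unchanged, and ReLU acts as the identity since the propagated values are in $\{0,1\}$. The paper just states the matrices explicitly rather than narratively, but the idea, the role of $\delta^{-1}$, and the identity-through-ReLU observation are identical.
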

Finally, the next lemma will be used to give examples of sets of vectors $\mathcal{W}$ and functions $f$ for which neural networks with fixed dimensions cannot exactly match $f$ on $\mathcal{W}$. More precisely, we shall show the following: 
\begin{lemma}\label{lem:NoNN}
	Let $d,t,m,L,N_1,N_2,\dotsc,N_L$ each be natural numbers and let $\mathcal{W}$ be a set of vectors with $\mathcal{W}= \{w^{1},w^{2},\dotsc,w^{t}\}\subset \real^{d}$. Suppose that each of the following apply
	\begin{enumerate}[label = (\arabic*)]
		\item $t \geq 3m\cdot(N_1 + 1)(N_2+1)\dotsb(N_L+1)$.
		\item $w^{1}_1 > w^{2}_1 > w^{3}_1 > \dotsb > w^{t}_1$ and $w^{1}_j = w^{2}_j = \dotsb = w^{t}_j = 0$ for $j=2,\dotsc,d$.
		\item $f: \real^d \to \{0,1\}$ is such that $f(w^i) \neq f(w^{i+1})$ for $i=1,2,\dotsc,t-1$. \label{item:differingf}
	\end{enumerate}
	Then for any neural network $\varphi \in \mathcal{NN}_{\mathbf{N},L}$ and any monotonic function $g: \real \to \real$ there exists a set $\mathcal{U} \subset \mathcal{W}$ such that $|\mathcal{U}| \geq m$ and $|g(\varphi(w)) - f(w)| \geq 1/2$ for all $w \in \mathcal{U}$.
\end{lemma}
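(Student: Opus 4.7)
The plan is to reduce to a one-variable problem using hypothesis (2), bound the number of linear pieces of the ReLU network restricted to the first coordinate axis, and then combine this with the monotonicity of $g$ and the alternation in (3) to force a large ``wrongly labelled'' set.

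Define $h : \real \to \real$ by $h(t) := \varphi((t,0,\dotsc,0))$; by hypothesis (2), $\varphi(w^i) = h(w^i_1)$ for all $i$, so it suffices to analyse the scalar function $h$. Since $h$ is the restriction of a ReLU network to a line, it is continuous and piecewise affine. A routine induction on layers shows that $h$ has at most $M := (N_1+1)(N_2+1)\dotsb(N_L+1)$ linear pieces: if the vector-valued output of the first $l$ hidden blocks is piecewise affine with at most $p$ pieces (all components sharing the same breakpoints), then the next affine map $W^{l+1}$ preserves the number of pieces, while the next ReLU adds at most one new breakpoint per component per existing piece (each component is affine on each piece, hence crosses $0$ at most once), yielding at most $(N_{l+1}+1)p$ pieces afterwards.

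Now set $\tilde g := g \circ h$. Because $g$ is monotone and $h$ is affine on each of its at most $M$ pieces, $\tilde g$ is monotone on each piece, so $\{t : \tilde g(t) > 1/2\}$ intersects any piece in a subinterval (empty, prefix, suffix, or the whole piece), and likewise for $\{t : \tilde g(t) \leq 1/2\}$. Hence $\real$ decomposes into at most $2M$ maximal consecutive intervals, each contained entirely in $U := \{t : \tilde g(t) > 1/2\}$ or entirely in $L := \{t : \tilde g(t) \leq 1/2\}$.

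Set $t_i := w^i_1$. By hypothesis (2) we have $t_1 > t_2 > \dotsb > t_t$, and by hypothesis (3) the values $f(w^i)$ alternate. Within each of the at most $2M$ intervals above, the $t_i$'s lying inside form a block of consecutive indices on which $f$ still alternates. A $t_i$ in a $U$-interval with $f(w^i)=0$ satisfies $g(\varphi(w^i)) > 1/2$ and hence $|g(\varphi(w^i)) - f(w^i)| > 1/2$; a $t_i$ in an $L$-interval with $f(w^i)=1$ satisfies $g(\varphi(w^i)) \leq 1/2$ and hence $|g(\varphi(w^i)) - f(w^i)| \geq 1/2$. In a block of $k$ alternating values at least $\lfloor k/2 \rfloor \geq (k-1)/2$ points fall on the ``wrong'' side, so summing over the at most $2M$ intervals yields at least $(t - 2M)/2$ failure points; under the hypothesis $t \geq 3mM$ this is at least $M(3m-2)/2 \geq m$ (using $M \geq 2$, which holds because $N_L \geq 1$), and taking $\mathcal{U}$ to be the set of failures finishes the proof. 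The main obstacle I anticipate is the careful inductive piece-count in the second step: one must keep track of which breakpoints are shared across components of a vector-valued hidden layer versus those newly created by the next ReLU, in order to keep the bound polynomial in the layer widths rather than exponential; everything after that is routine interval-and-alternation counting.
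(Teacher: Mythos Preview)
Your proof is correct, and the overall logic is sound: the piece-count induction, the observation that $g\circ h$ is monotone on each piece (so each piece splits into at most two blocks relative to the threshold $1/2$), and the alternation count $(t-2M)/2\geq M(3m-2)/2\geq m$ all go through. One cosmetic point: since the network is $W^L\rho W^{L-1}\cdots\rho W^1$ there are only $L-1$ ReLU applications, so the tight piece bound is $(N_1+1)\cdots(N_{L-1}+1)$ rather than your $M=(N_1+1)\cdots(N_L+1)$; but your $M$ is of course still an upper bound and is exactly the quantity appearing in hypothesis~(1), so nothing is lost.

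Your route differs from the paper's. The paper does not bound the total number of linear pieces globally; instead it proves an auxiliary lemma (essentially a pigeonhole on the sign patterns of $A\alpha+z$ at each layer) which, applied iteratively through the layers, extracts \emph{one} block $\mathcal{S}=\{w^s,\ldots,w^{s+n}\}$ of at least $3m$ consecutive indices on which $\varphi$ is exactly affine. On that single block $g\circ\varphi$ is monotone in $w_1$, so $\mathcal{S}$ splits into just two consecutive runs $\mathcal{S}^{>}$ and $\mathcal{S}^{<}$, and the alternation of $f$ then yields at least $m$ failures inside $\mathcal{S}$. In other words, the paper localises to one long affine piece and counts within it, whereas you keep all pieces and count globally. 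Your argument is more in the spirit of standard ``linear-region counting'' for ReLU networks and avoids the need for a separate technical lemma; the paper's version, on the other hand, gives a slightly cleaner endgame (two runs instead of $2M$) and keeps the argument self-contained without invoking the region-count folklore.
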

The remainder of this subsection will be concerned with proving Lemma \ref{lem:NN-separation-lemma} to Lemma \ref{lem:NoNN}.
\subsubsection{Proof of Lemma \ref{lem:NN-separation-lemma}}
\begin{proof}[Proof of Lemma \ref{lem:NN-separation-lemma}]
	We must verify that $\min_{1 \leq i < j \leq K} \|x^{i,\delta}-x^{j,\delta} \|_{\infty} \geq 2\varepsilon'(K)$ and that for $k \leq K$ and vectors $y \in \real^{N_0}$ with $\|y\|_{\infty} < \varepsilon'(r)$ we have $f_a(x^{k,\delta}+y) = f_a(x^{k,\delta}).$
	
	For the first part, note that for distinct $i,j$ with $i,j\leq K$ we have 
	\begin{equation}\label{eq:NNSeparation}
		\|x^{i,\delta} - x^{j,\delta}\|_{\infty}\geq \left|\frac{a}{i+1 - \kappa} - \frac{a}{j+1 - \kappa}\right| =  \frac{|a(j-i)|}{(i+1 - \kappa)(j+1-\kappa)}\geq \frac{1}{2(K+1 - \kappa)(K-\kappa)}
	\end{equation}
	since $a|j-i| \geq a \geq 1/2$ and the condition that $i,j \leq K$ with at least one bounded by $K-1$ implies that $(i+1-\kappa)^{-1}(j+1-\kappa)^{-1} \geq (K+1 - \kappa)^{-1}(K- \kappa)^{-1}$. Since $\kappa \geq 1/4$, we get
	$\|x^{i,\delta} - x^{j,\delta}\|_{\infty}\geq \left[2(K+1 - 1/4)(K-1/4)\right]^{-1} \geq 2\varepsilon'(K)$. 
	
	Next, we let $k\leq K$ and $y \in \real^{N_0}$ be such that $\|y\|_{\infty} \leq \varepsilon'(K)$. We will establish that $f_a(x^{k,\delta}+y) = f_a(x^{k,\delta})$. Since $k\leq K$ and $\kappa \in [1/4,3/4]$, we have
	\begin{equation*}
		\frac{a(1-\kappa)}{(k+1-\kappa)k}>\frac{1}{(4K+3)(2K+2)}\geq y_1 \geq \frac{-1}{(4K+3)(2K+2)} \geq \frac{-a\kappa}{(k+1-\kappa)(k+1)}.
	\end{equation*} 
	We claim that this implies $a(x^{k,\delta}_1 + y_1)^{-1} \in (k,k+1]$. For the upper bound, note that
	\begin{equation*}
		\frac{y_1}{a} \geq  \frac{-\kappa}{(k+1-\kappa)(k+1)} =  \frac{1}{k+1} - \frac{1}{k+1-\kappa} = \frac{1}{k+1} - \frac{x^{k,\delta}_1}{a}.
	\end{equation*}
	Similarly, for the lower bound, we have
	\begin{equation*}
		\frac{y_1}{a} <  \frac{1-\kappa}{k(k+1-\kappa)} =  k^{-1}\left(\frac{k+1-\kappa}{k+1-\kappa} -\frac{k}{k+1-\kappa}\right)   = \frac{1}{k} - \frac{x^{k,\delta}_1}{a}.
	\end{equation*}
	Therefore $\lceil a/(x^{k,\delta}_1+y_1) \rceil = k+1$. Thus, for all $\|y\|_{\infty} < \varepsilon'(K)$, we have $f_a(x^{k,\delta}+y) = f_a(x^{k,\delta}) = 1$ for even $k$ and $f_a(x^{k,\delta}+y) = f_a(x^{k,\delta}) = 0$ for odd $k$, therefore establishing$ \{x^{1,\delta}, \dots, x^{K,\delta}\}\in \mathcal{S}^{f_a}_{\varepsilon'(K)}$.
\end{proof}
\subsubsection{Proof of Lemma \ref{lem:NNUnstableExists}}
\begin{proof}[Proof of Lemma \ref{lem:NNUnstableExists}]
	We set \[\tilde\varphi = W^L \rho W^{L-1} \rho W^{L-2}\dotsc \rho W^1\] where $W^\ell x = A^\ell x + b^\ell$ and $A^\ell \in \real^{N_{\ell} \times N_{\ell-1}}$, $b^\ell \in \real^{N_\ell}$ are defined as follows: let $A^1_{1,1} = 0$, $A^1_{1,2} = \delta^{-1}$ and $A^1_{i,j} = 0$ otherwise, and, for $\ell > 1$, $A^{\ell}_{1,1} = 1$ and $A^{\ell}_{i,j} = 0$ otherwise, and $b^\ell = 0$ for every $\ell$. Clearly
	\begin{equation*}
		W^1 x^{k,\delta} =\begin{cases} e_1 \in \mathbb{R}^{N_1}
			&\text{ if } k \text{ is even}\\
			\mathbf{0} \in \mathbb{R}^{N_1}& \text{ if } k \text{ is odd}\\
		\end{cases}
	\end{equation*}
	and it is therefore easy to see that $\tilde \varphi(x^{k,\delta}) = 1$ if $k$ is even and $\tilde \varphi(x^{k,\delta}) = 0$ if $k$ is odd. By the definition of $x^{k,\delta}$, we have $f_{a}(x^{k,\delta}) = 1$ if $k$ is even and $f_a(x^{k,\delta}) = 0$ if $k$ is odd, and therefore $\tilde \varphi(x^{k,\delta}) = f_a(x^{k,\delta})$ for all $k$. 
\end{proof}
\subsubsection{Proof of Lemma \ref{lem:NoNN}}
To prove Lemma \ref{lem:NoNN} we will state and prove the following:
\begin{lemma}\label{lem:NNReduce}
	Fix $m, n \in \mathbb{N}$, $A \in \real^{N \times N_0}$, $B \in \real^{m \times N}$, and $z \in \real^{N}$. Suppose that
	\[
	R = \lbrace \alpha^q,\alpha^{q+1},\alpha^{q+2},\dotsc, \alpha^{q + r-1}\rbrace\subset \real^{N_0}
	\]
	is a set such that $|R| \geq N+1$, the sequence $\{\alpha^k_1\}_{k=q}^{q+r-1}$ is strictly decreasing and  $\alpha^k_j=0$ for $j > 1$ and all $k$. Then there exist a matrix $C \in \real^{m \times N_0}$, a vector $v \in \real^m$, and a set $\mathcal{S} \subseteq R$ of the form $\mathcal{S} = \lbrace \alpha^s,\alpha^{s+1},\dotsc, \alpha^{s+t-1}\rbrace$ such that $|\mathcal{S}|\geq |R|/(N+1)$ and $B \rho (A \alpha + z) = C\alpha + v$, for all $\alpha \in \mathcal{S}$.
\end{lemma}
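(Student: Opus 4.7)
The idea is to exploit that $\rho$ acts componentwise and is piecewise linear with a single kink at $0$, together with the fact that each argument $(A\alpha+z)_i$ depends affinely on a single scalar parameter (namely $\alpha_1$) as $\alpha$ ranges over $R$. This should allow us to partition $R$ into at most $N+1$ contiguous blocks, on each of which $\rho(A\alpha+z)$ is affine in $\alpha$, and then apply pigeonhole.

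\textbf{Step 1: reduce each coordinate to a single affine function of $\alpha_1$.} For $i=1,\dots,N$ define $f_i(k):=A_{i,1}\alpha^k_1+z_i$; since $\alpha^k_j=0$ for $j>1$, this equals $(A\alpha^k+z)_i$. Because $A_{i,1}$ and $z_i$ are fixed and $\{\alpha^k_1\}_{k=q}^{q+r-1}$ is strictly monotone, the function $k\mapsto f_i(k)$ is itself monotone (or constant), so the indicator $\mathbbm{1}[f_i(k)\geq 0]$ can change value at most once as $k$ increases.

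\textbf{Step 2: partition into constant-sign blocks.} The full sign pattern $s(k):=(\mathbbm{1}[f_i(k)\geq 0])_{i=1}^{N}\in\{0,1\}^N$ thus changes at at most $N$ values of $k$. Consequently, the ordered index set $\{q,q+1,\dots,q+r-1\}$ decomposes into at most $N+1$ maximal contiguous sub-intervals on each of which $s(k)$ is constant. By pigeonhole, one such sub-interval — call it $\mathcal{S}=\{\alpha^{s},\alpha^{s+1},\dots,\alpha^{s+t-1}\}$ — has cardinality $t\geq \lceil r/(N+1)\rceil\geq |R|/(N+1)$.

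\textbf{Step 3: linearize $\rho$ on $\mathcal{S}$.} Let $D\in\real^{N\times N}$ be the diagonal $0/1$ matrix with $D_{ii}=1$ iff $f_i(k)\geq 0$ for $\alpha^k\in\mathcal{S}$ (well-defined by the constancy of $s(\cdot)$ on $\mathcal{S}$, and unambiguous at $f_i(k)=0$ since both $\rho(0)$ and $D_{ii}\cdot 0$ vanish). Then for every $\alpha\in\mathcal{S}$,
\begin{equation*}
\rho(A\alpha+z)=D(A\alpha+z).
\end{equation*}
Setting $C:=BDA\in\real^{m\times N_0}$ and $v:=BDz\in\real^m$ gives $B\rho(A\alpha+z)=C\alpha+v$ on $\mathcal{S}$, as required.

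\textbf{Anticipated difficulty.} There is really no hard obstacle: the entire argument rests on the one-dimensional affine dependence of each preactivation on $\alpha_1$, which collapses the ReLU kinks to at most one per neuron. The only mild subtlety is bookkeeping the boundary case $f_i(k)=0$ to ensure the decomposition is well-defined regardless of which block contains such a $k$, and keeping track of the pigeonhole counting so the bound $|\mathcal{S}|\geq |R|/(N+1)$ comes out with the right constant.
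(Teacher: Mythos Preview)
Your proof is correct and follows essentially the same approach as the paper: both exploit that each preactivation $(A\alpha+z)_i$ depends affinely on the single scalar $\alpha_1$, bound the number of sign-pattern changes by $N$, and apply pigeonhole to extract a contiguous block of size at least $|R|/(N+1)$ on which $\rho$ acts linearly. Your argument is in fact slightly more streamlined, since by counting sign \emph{changes} along the ordered sequence you obtain contiguity of the blocks automatically, whereas the paper first takes a maximal constant-sign subset and then proves contiguity by a separate contradiction argument.
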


\begin{proof}[Proof of Lemma \ref{lem:NNReduce}]
	Write $B = (b_{j,k})_{j=1,k=1}^{j=m,k=N}$, $A = (a_{j,k})_{j=1,k=1}^{j=N,k=N_0}$. We claim that the set $\mathcal{Q}$ defined by
	\begin{equation*}\label{claim:numSigns}
		\mathcal{Q} = \lbrace \left(\sgn(a_{1,1}u_1 + w_{1}),\sgn(a_{2,1}u_1 + w_{2}),\dotsc ,\sgn(a_{N,1}u_1 + w_N)\right) \, \vert \, u \in R \rbrace.
	\end{equation*}
	contains at most $N+1$ (unique) elements, i.e., $|\mathcal{Q}|\leq N+1$, where we define $\sgn(x)=1$ for $x\geq 0$ and $\sgn(x)=-1$ for $x<0$.
	To see this, note that if we allow the value of $\beta$ to vary over $\real$, then each of the lines $y = a_{1,1}\beta + z_{1}$ , $y = a_{2,1}\beta + z_{2}$, \dots , $y = a_{N,1}\beta + z_N$ intersect the line $y = 0$ at most once. Between each of these intersections the vector $(\sgn(a_{1,1}\beta + z_{1}),\sgn(a_{2,1}\beta + z_{2}),\dotsc ,\sgn(a_{N,1}\beta + z_N))$ is constant. As there are at most $N$ such intersections, we note that if 
	\begin{equation*}
		\mathcal{Q}':=\lbrace \left(\sgn(a_{1,1}\beta + w_{1}),\sgn(a_{2,1}\beta + w_{2}),\dotsc ,\sgn(a_{N,1}\beta + w_N)\right) \, \vert \, \beta \in \mathbb{R} \rbrace.
	\end{equation*}
	then $|\mathcal{Q}'| \leq N+1$ follows because partitioning a line by at most $N$ intersections gives at most $N+1$ regions between the intersections. As $\mathcal{Q} \subseteq \mathcal{Q'}$
	the proof that $|\mathcal{Q}| \leq N+1$ is complete. 
	
	We can now define $\mathcal{S}$. By the pigeonhole principle and the fact that $|\mathcal{Q}|\leq N+1$, there exists a subset of $R$ with cardinality at least $|R|/(N+1)$ such that the vector \[\sgn(a_{\,\cdot,1}\, \alpha_1 + z) = \left(\sgn(a_{1,1}\alpha_1 + z_{1}),\sgn(a_{2,1}\alpha_1 + z_{2}),\dotsc ,\sgn(a_{N,1}\alpha_1 + z_N)\right)\] is constant over $\alpha$ in this subset. Let $\mathcal{S}$ be a subset of $R$ of maximal cardinality satisfying this constant sign condition. Then clearly $|\mathcal{S}| \geq |R|/(N+1)$.
	To see that $\mathcal{S} =  \lbrace \alpha^s,\alpha^{s+1},\dotsc, \alpha^{s+t-1}\rbrace$, for some $s$ and $t$, suppose by way of contradiction that no such $s$ and $t$ exist. Then there are $j_1$ and $k_1$ such that $j_1 + 1 < k_1$, $\alpha^{j_1}, \alpha^{k_1} \in \mathcal{S}$ and $\alpha^{j_1+1} \notin \mathcal{S}$. But then, as $\mathcal{S}$ is assumed to be of maximal cardinality, there must be an $\ell$ for which $\sgn(a_{\ell,1}\alpha^{j_1}_1 + z_1) = \sgn(a_{\ell,1}\alpha^{k_1}_1+z_1) \neq \sgn(a_{\ell,1}\alpha^{j_1+1}_1 + z_1)$. However, since $\{\alpha^j_1\}_{j=j_1}^{k_1}$ is a strictly decreasing sequence by assumption, we see that if $a_{\ell,1} \geq 0$ then $a_{\ell,1}\alpha^{j_1}_1 + z_1 \geq a_{\ell,1}\alpha^{j_1+1}_1+z_1 \geq a_{\ell,1}\alpha^{k_1}_1+z_1$ and similarly if $a_{\ell,1} < 0$ then $a_{\ell,1}\alpha^{j_1}_1 + z_1 < a_{\ell,1}\alpha^{j_1+1}_1+z_1 < a_{\ell,1}\alpha^{k_1}_1+z_1$ which is a contradiction. This establishes that $\mathcal{S} =  \lbrace \alpha^s,\alpha^{s+1},\dotsc, \alpha^{s+t-1}\rbrace$, for some $s$ and $t$. 
	
	We now show how to construct $C$ and $v$. Recall that, for all $\alpha\in \mathcal{S}$,  $\alpha_2 = \alpha_3 = \dotsb = \alpha_{N_0} = 0$, and so the $i$-th row of $B \rho(A\alpha+z)$ is given by $\sum_{j=1}^{N} b_{i,j} \rho(a_{j,1}\alpha_1 + z_{j})$. Since $\sgn(a_{j,1}\alpha_1 + z_j)$ is constant over $\alpha \in \mathcal{S}$, we must have that for each $j$ either
	$\rho(a_{j,1}\alpha_1 + z_{j}) = 0$ or $\rho(a_{j,1}\alpha_1 + w_{j}) = a_{j,1}\alpha_1 + z_j$, for all $\alpha\in \mathcal{S}$. In the former case, we define $d_{i,j} =0 $ and $y_{i,j} = 0$ and in the latter case we define $d_{i,j} = b_{i,j} a_{j,1}$ and $y_{i,j} = b_{i,j}z_{j}$. Therefore, by construction, the $i$-th row of
	$B \rho (A\alpha + z)$  is given by $\sum_{j=1}^{N} \left(d_{i,j} \alpha_1 + y_{i,j}\right)$. Thus, defining the matrix $C = (c_{i,j})_{i=1,j=1}^{i=m,j=N_0}$ and the vector $v\in\real^m$ according to
	\begin{equation*}
		c_{i,1} = \sum_{k=1}^{N} d_{i,k}, \quad c_{i,j} =   0, \text{ for }  j > 1, \qquad\text{and} \qquad  v_i = \sum_{k=1}^{N} y_{i,k}
	\end{equation*}
	immediately yields that the $i$-th row of $B\rho(A\alpha + z)$ satisfies $\sum_{k=1}^{N} \left(d_{i,k} \alpha_1 + y_{i,k}\right) = \sum_{k=1}^N c_{i,k}\alpha_k \,+ v_i$. As $i$ and $\alpha\in \mathcal{S}$ were arbitrary, this implies that $B \rho (A\alpha + z) = C\alpha + v$ for all $\alpha \in \mathcal{S}$, thereby concluding the proof of the lemma.
\end{proof}

With Lemma \ref{lem:NNReduce}, we can now prove Lemma \ref{lem:NoNN}.
\begin{proof}[Proof of Lemma \ref{lem:NoNN}]
	We begin by proving the following claim: 
	
	{\bf Claim: } There exists a set
	\[
	\mathcal{S} = \{w^s,w^{s+1},w^{s+2},\dotsc, w^{s+n}\} \subset \{w^1,w^2,\dotsc,w^t\}
	\]
	for some $s \in \mathbb{N}$ and $n \in \mathbb{N}$, a matrix
	$
	M \in \mathbb{R}^{1 \times N_0} 
	$	
	and a $z \in \mathbb{R}$ such that, for all $w \in \mathcal{S}$, we have 
	$
	\varphi(w) = Mw + z
	$ and so that $|\mathcal{S}|\geq 3m$.
	
	To see the validity of this claim, we proceed inductively by showing that there are sets $\mathcal{S}_\ell \subset \{w^1,w^2,\dotsc,w^t\}$, matrices $M^\ell \in \real^{N_{\ell} \times N_0}$ and vectors $z^\ell\in \real^{N_\ell}$ for $\ell=1,\dotsc,L$ such that
	\begin{itemize}
		\item[(i)] $|\mathcal{S}_\ell| \geq 3m\cdot  (N_{\ell}+1)  \dotsb  (N_{L-1}+1)$,
		\item[(ii)] $\mathcal{S}_\ell = \lbrace w^{s_\ell}, w^{s_\ell+1},\dotsc, w^{s_\ell+n_\ell}\rbrace$ for some $s_\ell, n_\ell \in \mathbb{N}$.
		\item[(iii)] $\varphi(w) = W^L \rho W^{L-1} \rho W^{L-2} \dotsc W^{\ell+1} \rho (M^\ell w+z^\ell)$ whenever $w \in \mathcal{S}_\ell$, where the $W^i$ are affine maps and $\rho$ is applied coordinatewise.
	\end{itemize}  
	The induction base is obvious by taking $\mathcal{S}_1=\mathcal{W}$, $M^1 = W^1$, and $z^1 = b^1$. The induction step will follow with the help of Lemma \ref{lem:NNReduce}. Indeed, assuming the existence of $\mathcal{S}_{\ell}$, $M^\ell$, and $z^\ell$ for some $\ell < L$, we apply Lemma \ref{lem:NNReduce} with $B = A^{\ell+1}, A = M^\ell, R = \mathcal{S}_\ell$ and $w = z^\ell$ to obtain some set $\mathcal{S}_{\ell+1}$, a matrix $M^{\ell+1}$ and a vector $v^{\ell+1}$ for which
	$A^{\ell+1} \rho(M^\ell w + z^\ell) = M^{\ell+1}w + v^{\ell+1}$ for $w \in \mathcal{S}_{\ell+1}$, and thus $W^{\ell+1} \rho(M^\ell w  + z^\ell) = M^{\ell+1}w + z^{\ell+1}$, where we set $z^{\ell+1} = v^{\ell+1} + b^{\ell+1}$. With the completed induction in hand, the proof of the claim follows by setting $\mathcal{S} = \mathcal{S}_L$, $s = s_L$, $n = n_L$, $M = M^L$ and $z = z^L$.
	
	Using the claim, we can now complete the proof of Lemma \ref{lem:NoNN}. Indeed, define the disjoint sets $\mathcal{S}^>$, $\mathcal{S}^<$ as follows:
	\begin{equation*}
		\mathcal{S}^> = \{w \in \mathcal{S} \, \vert \, g(\varphi(w)) \geq 1/2\}, \quad \mathcal{S}^<= \{w \in \mathcal{S} \, \vert \, g(\varphi(w)) < 1/2\}
	\end{equation*}
	For any $w \in \mathcal{S}$, we have $\varphi(w) = Mw + z$. Furthermore, any such $w$ has $w_{2} = w_{3} = \dotsb = w_{N} = 0$. Therefore $\varphi(w) = M_{1,1}w_1 + z$. In particular, $g \circ \varphi$ restricted to $\mathcal{S}$ is monotonic in the first coordinate of vectors in $\mathcal{S}$. This implies that 
	\begin{equation*}
		\mathcal{S}^> = \{w^{k_1},w^{k_1 + 1}, w^{k_1 + 2}, \dotsc ,w^{k_1 + t_1-1}\},\quad  \mathcal{S}^< = \{ w^{k_2}, w^{k_2 + 1}, w^{k_2 + 2} , \dotsc w^{k_2 + t_2-1}\}
	\end{equation*}
	for some $k_1$ and $k_2$ and $t_1,t_2$ with $t_1 + t_2 = |\mathcal{S}|\geq  3m$. Furthermore, by \ref{item:differingf} and the fact that the range of $f$ is the set $\{0,1\}$, we must have $f(w^i) = 1$ for all even $i$ and $f(w^i) = 0$ for all odd $i$ or $f(w^i) = 0$ for all even $i$ and $f(w^i) = 1$ for all odd $i$. We will consider these two cases separately
	
	\textbf{Case 1: $f(w^i) = 1$ for all even $i$ and $f(w^i) = 0$ for all odd $i$.}

	We define the sets
	\begin{equation*}
		\mathcal{S}^{E, < } = \{w^{i} \, \vert \, w^{i} \in \mathcal{S}^<,\, i \text{ even}\}, \quad \mathcal{S}^{O, >} = \{w^{i} \, \vert \, w^{i} \in \mathcal{S}^>,\, i \text{ odd}\}
	\end{equation*}
	For $w \in \mathcal{S}^{E,<}$, we have $f(w) = 1$ and $g(\varphi(w)) < 1/2$, whence we obtain $|g(\varphi(w)) - f(w)|\geq 1/2$. Similarly, for $w \in \mathcal{S}^{O,>}$ we have 
	$f(w) = 0$ and $g(\varphi(w)) \geq 1/2$ and we thus obtain $|g(\varphi(w)) - f(w)|\geq 1/2$. We set $\mathcal{U} = \mathcal{S}^{E,>} \cup \mathcal{S}^{O,<}$ and conclude that for any $w \in \mathcal{U}$ we have $|f(w) - g(\varphi(w))| \geq 1/2$.
	
	The claim about the cardinality of $\mathcal{U}$ follows by noting that $|\mathcal{S}^{E,<}| \geq \ceil{(t_1-1)/2}$ and that $|\mathcal{S}^{O,>}| \geq \ceil{(t_2-1)/2}$.  Therefore (using the disjointedness of $\mathcal{S}^{E,>}$ and $\mathcal{S}^{O,<}$)
	\begin{align}
		|\mathcal{U}| = |\mathcal{S}^{E,>}| + |\mathcal{S}^{O,<}|&\geq \ceil{(t_1-1)/2} + \ceil{(t_2-1)/2} \notag\\&\geq \ceil{(t_1 -1 +  t_2 -1)/2} = \ceil{(t_1 + t_2)/2} - 1 \geq \ceil{3m/2} - 1 \geq m \label{eq:UCardinality}
	\end{align}
	\textbf{Case 2: $f(w^i) = 0$ for all even $i$ and $f(w^i) = 1$ for all odd $i$.}
	
	The proof here is similar to that of Case 1. This time however, we define the sets \begin{equation*}
		\mathcal{S}^{E, > } = \{w^{i} \, \vert \, w^{i} \in \mathcal{S}^>,\, i \text{ even}\}, \quad \mathcal{S}^{O, <} = \{w^{i} \, \vert \, w^{i} \in \mathcal{S}^>,\, i \text{ odd}\}
	\end{equation*}
	An analogous argument to the above allows us to conclude that 
	$|g(\varphi(w)) - f(w)| \geq 1/2$ for all $w \in \mathcal{U}$, where this time $\mathcal{U} = 	\mathcal{S}^{E, > } \cup \mathcal{S}^{O,<}$. The argument that $|\mathcal{U}| \geq m$ is identical to \eqref{eq:UCardinality} except we replace references to $\mathcal{S}^{E,<}$ with $\mathcal{S}^{E,>}$ and references to $\mathcal{S}^{O,>}$ with $\mathcal{S}^{O,<}$.
\end{proof}
\subsection{Proof of Theorem \ref{theorem:NNCV}}
We require two further lemmas specific to the proof of Theorem \ref{theorem:NNCV}. These are stated as Lemma \ref{lem:NN-distr-lemma} and \ref{lemma:NNExistence}.
\begin{lemma}\label{lem:NN-distr-lemma}
	For $\gamma\in(1,2)$, define the probability distribution $\mathcal{P}=\{\mathrm{p}_j\}_{j=1}^\infty$ on $\mathbb{N}$ by $\mathrm{p}_{2j-1}=\mathrm{p}_{2j}=\frac{1}{2}C_\zeta(\gamma) j^{-\gamma}$, for $j\in\mathbb{N}$, where $C_\zeta(\gamma):=\big(\sum_{j=1}^\infty j^{-\gamma}\big)^{-1}$ is a normalising factor.
	
	Fix $\theta \in\mathbb{N}$ and let $X_1$, $X_2$,\dots, $X_\theta$ be i.i.d. random variables in $\mathbb{N}$ distributed according to $\mathcal{P}$. Next, consider the random set whose elements are the values of $X_1$ , $X_2$, \dots, $X_\theta$ and enumerate it as $S=\{Z_1,Z_2,\dotsc, Z_N\}$ with $Z_1<Z_2<\dotsb < Z_N$ (note that $N$, the number of distinct elements of $S$, is an integer-valued random variable such that $N\leq \theta$). Then, setting $c_1 =(1-e^{-C_\zeta(\gamma)})/2$ and $c_2=C_\zeta(\gamma)/(\gamma-1)$, we have
	\begin{itemize}
		\item[(i)]  $\pr\big(N\geq c_1 \theta^{1/\gamma}\big)\geq 1- c_1^{-2}\theta^{-(2/\gamma-1)}$, 
		\item[(ii)] $\pr(\max S \leq n)\geq 1- c_2\, \theta{\floor{n/2}}^{1-\gamma}$, for all $n\in\mathbb{N}$, and 
		\item[(iii)] $\pr\left(\left.\sum_{j=1}^{N-1} \chi_{\{Z_{j+1}-Z_{j}\text{ odd}\}}  \leq n/5 \,\right\rvert\, N=n \right) \leq e^{-n/100}$, for all integers $n$ such that $10\leq n \leq \theta$.
	\end{itemize}
\end{lemma}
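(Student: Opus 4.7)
The plan is to address parts (i) and (ii) by standard second-moment and tail bound arguments, and to handle (iii) via a more delicate conditioning argument combined with McDiarmid's inequality.

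For (i), I would write $N=\sum_{j \geq 1} \chi\{j \in S\}$. The occupation indicators are pairwise negatively correlated (more mass on one value leaves less for others), so $\mathrm{Var}(N) \leq \sum_j \mathrm{Var}(\chi\{j\in S\}) \leq \mathbb{E}[N]\leq\theta$. Using $(1-p)^\theta \leq e^{-\theta p}$ and the fact that $\theta p_{2k-1} = \theta p_{2k} \geq C_\zeta(\gamma)/2$ for $k \leq \floor{\theta^{1/\gamma}}$, one obtains $\mathbb{E}[N] \geq 2\floor{\theta^{1/\gamma}}(1 - e^{-C_\zeta(\gamma)/2}) \geq 2c_1 \theta^{1/\gamma}$ (for $\theta$ above the range where the claim is vacuous), using the elementary inequality $1 - e^{-x/2} \geq (1-e^{-x})/2$. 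Chebyshev with deviation $c_1 \theta^{1/\gamma}$ yields (i). For (ii), a union bound gives $\pr(\max S > n) \leq \theta \sum_{j>n} p_j$, and comparison of the tail with $C_\zeta(\gamma)\int_{\floor{n/2}}^\infty x^{-\gamma}\,dx$ yields the bound $c_2\theta\floor{n/2}^{1-\gamma}$.

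Part (iii) is the crux. The plan is to exploit the pairing $p_{2k-1} = p_{2k}$ via the change of variables $Y_i := \lceil X_i/2 \rceil$ and $\xi_i := X_i - 2Y_i + 1 \in \{0,1\}$; then $(Y_i)$ and $(\xi_i)$ are independent, the $Y_i$'s are i.i.d.\ with distribution $q_k = C_\zeta(\gamma) k^{-\gamma}$, and the $\xi_i$'s are i.i.d.\ uniform on $\{0,1\}$. Let $S' := \{Y_i\}$ with enumeration $y_1 < \cdots < y_m$ and multiplicities $n_{y_j}$. Each $y_j$ contributes to $S$ either only $2y_j-1$ (type O, probability $2^{-n_{y_j}}$), only $2y_j$ (type E, probability $2^{-n_{y_j}}$), or both (type OE, probability $1-2^{1-n_{y_j}}$), independently across $j$. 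Since no integer lies strictly between $2y-1$ and $2y$, each OE block forces a parity flip between two adjacent elements of sorted $S$, giving already the deterministic lower bound $O \geq m_{\mathrm{OE}} = N - N'$.

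Enumerating the block types $B_1,\dots,B_m$, decompose $O = m_{\mathrm{OE}} + k_{\mathrm{OO}} + R$, where $k_{\mathrm{OO}}$ counts adjacent OE--OE block pairs (which deterministically flip) and $R$ counts the remaining $m-1-k_{\mathrm{OO}}$ external flip indicators. A short case-check shows that each such indicator is marginally $\mathrm{Ber}(1/2)$, depending on at most two of the parity bits of the non-OE blocks. Conditioning on $(Y_i)_{i=1}^\theta$, on $N = n$ (which forces $m_{\mathrm{OE}} = n-m$), and on the specific set $T$ of OE positions, the $s := 2m-n$ non-OE parity bits are i.i.d.\ uniform, with each influencing at most two of the $R$-summands. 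McDiarmid's inequality (with $c_i = 2$) gives $\pr(R \leq \mathbb{E}[R] - t \mid (Y_i), N=n, T) \leq \exp(-t^2/(2s))$. Since $\{O \leq n/5\}$ is equivalent to $R \leq n/5 - m_{\mathrm{OE}} - k_{\mathrm{OO}}$, and $\mathbb{E}[R] = (m-1-k_{\mathrm{OO}})/2$, the required deviation is $t = 4n/5 - m/2 - 1/2 + k_{\mathrm{OO}}/2 \geq 3n/10 - 1/2 \geq n/4$ for $n \geq 10$ (using $m \leq N = n$). Combined with $s \leq n$, this yields $\exp(-t^2/(2s)) \leq \exp(-n/32) \leq \exp(-n/100)$. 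Integrating out $T$ and $(Y_i)$ completes the proof. The main technical hurdle is organising the conditioning so that McDiarmid applies to genuinely independent inputs; this is enabled precisely by the independence of the $\xi_i$'s from the $Y_i$'s built into the change of variables.
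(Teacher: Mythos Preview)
Your sketch is essentially correct and, for part (iii), arrives at the result by a route that is close in spirit to the paper's but technically organised differently.

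For (i) and (ii) you do directly what the paper obtains by citing \cite{Zakhrevskaya2001}: your negative-correlation observation for the occupancy indicators is valid (since $(1-p_i-p_j)^\theta\le (1-p_i)^\theta(1-p_j)^\theta$), and your union-bound argument for (ii) is equivalent to the paper's use of Bernoulli's inequality. The only loose end is the floor in your lower bound $\mathbb{E}[N]\ge 2c_1\lfloor\theta^{1/\gamma}\rfloor$: this falls short of $2c_1\theta^{1/\gamma}$ by a constant, so Chebyshev with deviation exactly $c_1\theta^{1/\gamma}$ does not quite give the stated bound. You would need to recover the missing $O(1)$, e.g.\ by also including the tail terms $k>\lfloor\theta^{1/\gamma}\rfloor$ in the mean estimate, or by quoting the sharp bound $\mathbb{E}[M_\theta]>(1-e^{-C_\zeta(\gamma)})\theta^{1/\gamma}$ as the paper does.

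For (iii), both proofs exploit the symmetry $p_{2k-1}=p_{2k}$ to manufacture i.i.d.\ fair coins after conditioning, but the bookkeeping differs. The paper conditions on the events $F_{\mathcal{I},\mathcal{J}}$ recording exactly where the consecutive pairs $(2i-1,2i)$ sit inside $S$, shows that on the remaining indices the parities $\chi_{\{Z_a\text{ odd}\}}$ are i.i.d.\ Bernoulli$(1/2)$, and then constructs from these \emph{independent} variables $E_{a_k}$ to which Hoeffding applies directly, obtaining the exponent $-n/100$. Your $(Y_i,\xi_i)$ change of variables and conditioning on $(Y_i)$ together with the OE-set $T$ is essentially the same conditioning (your $m_{\mathrm{OE}}$ is the paper's $m=|\mathcal{J}|$), but you leave the external flip indicators dependent and instead apply McDiarmid to the $s=2m-n$ free parity bits with Lipschitz constant $2$; this yields the sharper exponent $-n/32$. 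Your route is arguably cleaner in that it avoids the somewhat delicate construction of the $E_{a_k}$, at the cost of invoking McDiarmid rather than Hoeffding. The case $s=0$ (all blocks OE) must be, and is easily, handled separately.
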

\begin{proof}
	Throughout this proof we will use the convention that for a random variable $Y: \Omega \to \mathcal{E}$ the notation $\{Y = \mu\}$ for $\mu \in \Omega$ means the set $\{\tau \in \Omega \, \vert \, Y(\tau) = \mu\}$.
	
	For item (i), define the random variable $M_\theta$ to be the number of different unique values taken by the random variables $\ceil{X_1/2}$ ,\dots, $ \ceil{X_\theta/2}$ and note that
	$
	\pr\left( N< \beta \right)\leq \pr (M_r < \beta )  
	$, for $\beta \in\mathbb{R}$. Now, as the random variables $\ceil{X_j/2}$, $j=1,\dots,r$, are i.i.d. and distributed according to the zeta distribution with parameter $\gamma$,  it follows from \cite[Lem. 4, Lem. 3]{Zakhrevskaya2001} that 
	$
	\mathbb{E}[M_\theta]> (1-e^{-C_\zeta(\gamma)}) \theta^{1/\gamma}
	$ and
	$
	\sigma^2:=\mathrm{Var}[M_\theta]\leq \mathbb{E}[M_\theta] \leq \theta
	$,
	and hence Chebyshev's inequality yields
	\begin{align*}
		&\pr\left( N< \frac{1-e^{-C_\zeta(\gamma)}}{2} \theta^{1/\gamma} \right)\leq \pr \left(M_r < \frac{1-e^{-C_\zeta(\gamma)}}{2} \theta^{1/\gamma}  \right) \\
		\leq\;&  \pr \left(|M_\theta-\mathbb{E}[M_\theta]| >\frac{1-e^{-C_\zeta(\gamma)}}{2\sigma} \theta^{1/\gamma} \cdot \sigma \right)\leq  \left(\frac{1-e^{-C_\zeta(\gamma)}}{2\sigma} \theta^{1/\gamma} \right)^{-2}\leq \frac{4 \theta^{-(2/\gamma-1)}}{(1-e^{-C_\zeta(\gamma)})^{2}} ,
	\end{align*}
	which implies item (i).
	
	The proof of item (ii) is simple. Note that $\{\max S\leq n\}=\bigcap_{j=1}^r\{X_j\leq n\}$ and, for each $j$,
	\begin{align*}
		\pr(X_j \leq n)=\sum_{j=1}^n \mathrm{p}_j &\geq \sum_{j=1}^{\floor{n/2}} C_\zeta(\gamma)j^{-\gamma}\geq 1- C_\zeta(\gamma)\int_{\floor{n/2}}^\infty t^{-\gamma}\mathrm{d}t \geq 1 - \frac{C_\zeta(\gamma)}{\gamma-1} \floor{n/2}^{1-\gamma},
	\end{align*}
	and hence, as the $X_j$ are independent, 
	\begin{equation*}
		\pr(\max S\leq n)= \pr(X_j \leq n)^\theta\geq \left(1 - \frac{C_\zeta(\gamma)}{\gamma-1} \floor{n/2}^{1-\gamma}\right)^\theta\geq 1- \frac{C_\zeta(\gamma)}{\gamma-1} \theta\floor{n/2}^{1-\gamma}
	\end{equation*}
	where the last inequality follows by Bernoulli's inequality.

	Item (iii) is somewhat more involved. We start by outlining the strategy: the set $S$ may contain pairs of the form $(Z_{j},Z_{j+1})=(2i-1,2i)$, i.e., an odd natural number followed by the next even one. We will condition on the set of $j$ where $(Z_j,Z_{j+1})$ is such a pair, as well as the specific value of $Z_j$. 
	
	More precisely, for fixed sets $\mathcal{I}$ and $\mathcal{J}$ with $|\mathcal{I}| = |\mathcal{J}|$, enumerated by 
	\[\mathcal{I} = \{i_1,i_2,\dotsc,i_m\} \text{ and } \mathcal{J} = \{j_1,j_2,\dotsc,j_m\},\] let $\mathcal{A} = \{1,\dots, N\} \setminus\big( \mathcal{J} \cup(\mathcal{J}+1)\big)$ where $\mathcal{J}+1:=\{j+1\,\vert\,j\in\mathcal{J}\}$.
	We will condition on the event $F_{\mathcal{I},\mathcal{J}}$ which occurs precisely when $N = n$, $(Z_{j_\ell},Z_{j_\ell+1})=(2i_\ell-1,2i_\ell)$ for $\ell \in \{1,2,\dotsc,m\}$, and, on the indices in $\mathcal{A}$, the set $S$ contains no odd-even pairs, i.e.,  $(Z_{a},Z_{a+1})\notin\{ (2i-1,2i)\,\vert\, i\in \mathbb{N}\}$ for all $a \in \mathcal{A}$ with $a < n$ and $(Z_{a-1},Z_{a})\notin\{ (2i-1,2i)\,\vert\, i\in \mathbb{N}\}$ for all $a \in \mathcal{A}$ with $a > 1$. With varying $\mathcal{I}$ and $\mathcal{J}$, these sets $F_{\mathcal{I},\mathcal{J}}$ partition the event $\{N=n\}$.
	
	The intuition behind this construction is as follows: conditional on $F_{\mathcal{I},\mathcal{J}}$, whenever $j \in \mathcal{J}$ we have $Z_{j+1} - Z_j = 1$ and hence $\chi_{\{Z_{j+1}-Z_{j}\text{ odd}\}} = 1$. Thus for sets $\mathcal{J}$ with $|\mathcal{J}| \geq n/5$, we are done. If instead $|\mathcal{J}|$ is small then $|\mathcal{A}|$ will be relatively large. For $a \in \mathcal{A}$, we will argue that every $Z_a$ has equal probability of being an odd number or the even number following it, owing to the assumption that $\mathrm{p}_{2i-1}=\mathrm{p}_{2i}$ and the assumption that if $a < n$ then $(Z_{a},Z_{a+1})\notin\{ (2i-1,2i)\,\vert\, i\in \mathbb{N}\}$ and if $a > 1$ then $(Z_{a-1},Z_{a})\notin\{ (2i-1,2i)\,\vert\, i\in \mathbb{N}\}$. 
	
	This will allow us to conclude that the indicator random variables $\chi_{\{Z_{a}\text{ odd}\}}$ for $a\in\mathcal{A}$ are independent symmetric Bernoulli random variables (that is to say, they take the values $1$ and $0$ each with probability $1/2$). The desired bound will follow by an application of Hoeffding's inequality.
	
	We are now ready to present the formal proof. If $\theta<10$ there is nothing to prove, so assume that $\theta\geq 10$ and fix an $n$ such that $10\leq n\leq \theta$. Consider arbitrary sets $\mathcal{I}\subset\mathbb{N}$ and $\mathcal{J}\subset\{1,\dots,n-1\}$ so that 
	\begin{equation}\label{eq:prob-lemma-partition}
		m:=|\mathcal{I}|=|\mathcal{J}|<n \text{ and } \mathcal{J}\cap(\mathcal{J}+1)=\varnothing,
	\end{equation}
	and define $\mathcal{A}:=\{1,\dots, N\} \setminus\big( \mathcal{J} \cup(\mathcal{J}+1) \big)$. Enumerate $\mathcal{I}=\{i_{1},\dotsc,i_{m}\}$ with $i_{1}<\dotsb<i_{m}$, $\mathcal{J}=\{j_1,\dotsc,j_m\}$ with $j_1<\dotsb<j_m$, and $\mathcal{A}=\{a_1,\dotsc,a_{n-2m}\}$ with $a_1<\dotsb<a_{n-2m}$ and define the event
	\begin{equation*}
		\begin{split}
			F_{\mathcal{I},\mathcal{J}}=\{N=n\}\cap \bigcap_{\ell=1}^m\{(Z_{j_\ell}, Z_{j_\ell+1})&=(2i_\ell-1,2i_\ell)\} \cap \bigcap_{\substack{a\in\mathcal{A}, a<n\\ i\in\mathbb{N}}}\{(Z_{a},Z_{a+1})\neq(2i-1,2i)\}\\
			&\qquad \cap \bigcap_{\substack{a\in\mathcal{A}, 1<a \leq n\\ i\in\mathbb{N}}}\{(Z_{a-1},Z_{a})\neq(2i-1,2i)\}.
		\end{split}	
	\end{equation*}
	Note that, for every $n\in\mathbb{N}$, we have
	\begin{equation}\label{eq:n-union-omegas-probs}
		\{N=n\}={\bigcup_{ \substack{\mathcal{I}\subset\mathbb{N}, \mathcal{J}\subset\{1,\dots,n-1\}\\\text{satisfying \eqref{eq:prob-lemma-partition}}}}} F_{\mathcal{I},\mathcal{J}},
	\end{equation}
	i.e., the events $F_{\mathcal{I},\mathcal{J}}$ for different $\mathcal{I}$ and $\mathcal{J}$ partition the event $\{N=n\}$, and thus our strategy will be to prove the bound
	$
	\pr\left(\left.\sum_{j=1}^{N-1} \chi_{\{Z_{j+1}-Z_{j}\text{ odd}\}}  \leq n/5 \,\right\rvert\, F_{\mathcal{I},\mathcal{J}}  \right) \leq e^{-n/100}
	$
	for each of these events.
	
	The argument relies on bounding from below the number of indices $j$ such that $Z_{j+1}-Z_{j}$ is odd. For 
	$j\in \mathcal{J}$ this will be easy, as $Z_{j+1}-Z_j=2i_j-(2i_j-1)=1$ is always odd, by definition of $F_{\mathcal{I},\mathcal{J}}$. For $j\in \mathcal{A}$, we will need the following claim which we prove last.
	
	{\bf Claim:} For any $\mathcal{I}$, $\mathcal{J}$, and $\mathcal{A}$ as above, the indicator random variables $\chi_{\{Z_{a}\text{ odd}\}}$, $a\in\mathcal{A}$, conditional on $F_{\mathcal{I},\mathcal{J}}$ are independent symmetric Bernoulli variables.
	
	Armed with the claim, the counting argument is as follows. Note that, on the event  $F_{\mathcal{I},\mathcal{J}}$, for $k\in\{1,\dots, n-2m-1\}$ such that $a_{k+1}>a_k+1$, we have that $\{Z_{a_{k}},\dots, Z_{a_{k+1}}\}=\{Z_{a_k}, 2i_{t}-1,2i_{t},2i_{t+1}-1,2i_{t+1},\dotsc, 2i_{t+s-1} -1,2i_{t+s-1},Z_{a_{k+1}}\}$ for some $t \in \{1,2,\dotsc,m\}$ and where $s=|\mathcal{J}\cap\{a_k,\dots, a_{k+1}-1\}|$. Hence
	\begin{align}
		\sum_{\ell=a_k}^{a_{k+1}-1} \chi_{\{Z_{\ell+1}-Z_{\ell}\text{ odd}\}}&\geq \chi_{\{2i_{t}-1-Z_{a_k}\text{ odd}\}} + \sum_{\ell=0}^{s-1} \chi_{\{ (2i_{t+\ell}) - (2i_{t+\ell}-1) \text{ odd}\}} + \chi_{\{Z_{a_{k+1}}-2i_{t+s-1}\text{ odd}\}}\notag\\
		& = \chi_{\{Z_{a_k}\text{ even}\}} +  |\mathcal{J}\cap\{a_k,\dots, a_{k+1}-1\}| + \chi_{\{Z_{a_{k+1}}\text{ odd}\}}\notag \\
		& \geq |\mathcal{J}\cap\{a_k,\dots, a_{k+1}-1\}| + \chi_{\{Z_{a_{k+1}}-Z_{a_k}\text{ even}\}},\label{eq:prob-even-run}
	\end{align}
	where we used the simple observation that $\chi_{\{Z_{a_k}\text{ even}\}} + \chi_{\{Z_{a_{k+1}}\text{ odd}\}}\geq \chi_{\{Z_{a_{k+1}}-Z_{a_k}\text{ even}\}} $. This motivates defining random variables $E_{a_k}$ with $k\in\{1,\dots, n-2m-1\}$ conditioned on the event $F_{\mathcal{I},\mathcal{J}}$ according to
	\begin{align*}
		E_{a_k}& =\begin{cases}
			1,& Z_{a_{k}+1}-Z_{a_k}\text{ is odd}\\
			0, & Z_{a_{k}+1}-Z_{a_k}\text{ is even}
		\end{cases}, \quad \text{for $k$ s.t. $a_{k+1}=a_k+1$, and}\\
		E_{a_k}&=\begin{cases}
			0,& Z_{a_{k+1}}-Z_{a_k}\text{ is odd}\\
			1, & Z_{a_{k+1}}-Z_{a_k}\text{ is even}
		\end{cases}, \quad \text{for $k$ s.t. $a_{k+1}>a_k+1$,}
	\end{align*}
	which, as a consequence of the Claim, are themselves independent symmetric Bernoulli random variables.
	Thus, writing $U:=\sum_{k=1}^{N-1} \chi_{\{Z_{k+1}-Z_{k}\text{ odd}\}}$, on the event $F_{\mathcal{I},\mathcal{J}}$ we have
	\begin{align}
		U&=\sum_{\ell < a_1\text{ or }\, \ell\geq a_{n-2m}} \chi_{\{Z_{\ell+1}-Z_{\ell}\text{ odd}\}} +   \sum_{k=1 }^{n-2m-1}\sum_{\ell=a_k}^{a_{k+1}-1} \chi_{\{Z_{\ell+1}-Z_{\ell}\text{ odd}\}} \notag\\
		&\geq |\mathcal{J}\cap \{1,\dots, a_1-1\}| + |\mathcal{J}\cap \{a_{n-2m}, \dots, n\} | \notag\\
		&\qquad \qquad +  \sum_{\substack{1\leq k\leq n-2m-1\\a_{k+1}=a_k+1}} \chi_{\{Z_{a_{k}+1}-Z_{a_k}\text{ odd}\}} + \sum_{\substack{1\leq k\leq n-2m-1\\a_{k+1}>a_k+1}}\sum_{\ell=a_k}^{a_{k+1}-1} \chi_{\{Z_{\ell+1}-Z_{\ell}\text{ odd}\}} \notag\\
		&\geq  |\mathcal{J}\cap \{1,\dots, a_1-1\}| + |\mathcal{J}\cap \{a_{n-2m}, \dots, n\} | \notag\\
		&\qquad\qquad  + \sum_{\substack{1\leq k\leq n-2m-1\\a_{k+1}=a_k+1}}  E_{a_k} +  \sum_{\substack{1\leq k\leq n-2m-1\\a_{k+1}>a_k+1}}\left( |\mathcal{J}\cap \{a_k ,\dots, a_{k+1}-1\}| + E_{a_k} \right) \notag \\
		&=|\mathcal{J}| + \sum_{k=1}^{n-2m-1} E_{a_k}= m+ \sum_{k=1}^{n-2m-1} E_{a_k}, \label{eq:big-prob-calc}
	\end{align}
	where the second inequality is due to \eqref{eq:prob-even-run} and the penultimate equality follows from the observation that $|\mathcal{J}\cap \{a_k ,\dots, a_{k+1}-1\}| = 0$ whenever $a_{k+1} = a_k + 1$.
	
	Now, for sets $\mathcal{I}\subset\mathbb{N}$ and $\mathcal{J}\subset\{1,\dots,n-1\}$ satisfying \eqref{eq:prob-lemma-partition}
	as well as $m=|\mathcal{I}|=|\mathcal{J}|\leq n/5$, we have that \eqref{eq:big-prob-calc} implies $U\geq \sum_{k=1}^{n-2m-1} E_{a_k} $, which together with Hoeffding's inequality yields
	\begin{align*}
		\pr\Big(U \leq n/5 \,\Big\rvert\, F_{\mathcal{I},\mathcal{J}} \Big)& \leq \pr\Big( \sum_{k=1}^{n-2m-1} E_{a_k}  \leq n/5 \,\Big\rvert\, F_{\mathcal{I},\mathcal{J}} \Big)\notag\\
		&\leq  \exp\left(-2\Big(\frac{1}{2} - \frac{n/5}{n-2m-1}\Big)^{2}(n-2m-1) \right)\leq \exp \left(- n/100 \right)
	\end{align*}
	where in the last inequality we used $n-2m-1\geq n/2$ (recall that $n\geq 10$). On the other hand, in the case when $m=|\mathcal{I}|=|\mathcal{J}|> n/5$ we have $\pr\Big(U \leq n/5 \,\Big\rvert\, F_{\mathcal{I},\mathcal{J}} \Big)=0$ directly from \eqref{eq:big-prob-calc}.

	Therefore, we have shown that for any $\mathcal{I}$, $\mathcal{J}$ satisfying \eqref{eq:prob-lemma-partition}, $\pr\Big(U \leq n/5 \,\Big\rvert\, F_{\mathcal{I},\mathcal{J}} \Big) \leq \exp (-n/100)$ and so using \eqref{eq:n-union-omegas-probs}
	\begin{align*}
		\pr\Big( U  \leq n/5, N=n \Big)&= \sum_{ \substack{\mathcal{I}\subset\mathbb{N}, \mathcal{J}\subset\{1,\dots,n-1\}\\\text{satisfying \eqref{eq:prob-lemma-partition}}}}  \pr\Big( U\leq n/5 \,\Big\rvert\, F_{\mathcal{I},\mathcal{J}}  \Big) \pr(F_{\mathcal{I},\mathcal{J}})\\
		&\leq \exp \left(- n/100 \right) \pr\Bigg(\bigcup_{ \substack{\mathcal{I}\subset\mathbb{N}, \mathcal{J}\subset\{1,\dots,n-1\}\\\text{satisfying \eqref{eq:prob-lemma-partition}}}} F_{\mathcal{I},\mathcal{J}} \Bigg)  = \exp \left(- n/100 \right) \pr(N=n),
	\end{align*}
	which yields the desired bound after dividing both sides by $\pr(N=n)$.

	It remains to prove the Claim. To this end, fix $n$,  $\mathcal{I}=\{i_1<\dots<i_m\}$, $\mathcal{J}=\{j_1<\dots<j_m\}$, and $\mathcal{A}=\{a_1<\dots<a_{n-2m}\}$ satisfying \eqref{eq:prob-lemma-partition}. Then, conditional on $F_{\mathcal{I},\mathcal{J}}$ we can write $Z_a=2\ceil{Z_a/2} - \chi_{\{Z_{a}\text{ odd}\}}$, for $a\in\mathcal{A}$, where the $\chi_{\{Z_{a}\text{ odd}\}}$ are random variables taking values in $\{0,1\}$ and the
	$\ceil{Z_a/2}$ are random variables taking values in $\mathbb{N}\setminus\mathcal{I}$ and moreover $\ceil{Z_{a_1}/2}<\dots< \ceil{Z_{a_{n-2m}}/2}$. Now, for a set $\mathcal{U}=\{u_1<\dots<u_{n-2m}\}\subset \mathbb{N}\setminus \mathcal{I}$ denote $\mathring F_{\mathcal{U}}=\bigcap_{j=1}^{n-2m}\{\ceil{Z_{a_j}/2}=u_j\}$ so that for any $b\in\{0,1\}^{n-2m}$
	\begin{align}
		&\pr\left(\{\chi_{\{Z_{a_1}\text{ odd}\}}=b_1,\; \dots\; , \chi_{\{Z_{a_{n-2m}}\text{ odd}\}}=b_{n-2m}\}\,\Big\rvert\, F_{\mathcal{I},\mathcal{J}} \right)\notag\\
		=\,&\sum_{\mathcal{U}\subset \mathbb{N}\setminus \mathcal{I}}\pr\left(\{\chi_{\{Z_{a_1}\text{ odd}\}}=b_1,\; \dots\; , \chi_{\{Z_{a_{n-2m}}\text{ odd}\}}=b_{n-2m}\}\,\Big\rvert\, F_{\mathcal{I},\mathcal{J}}\cap \mathring F_{\mathcal{U}} \right) \pr( \mathring F_{\mathcal{U}} \, \vert \, F_{\mathcal{I},\mathcal{J}}) \notag\\
		=\,&\sum_{\mathcal{U}\subset \mathbb{N}\setminus \mathcal{I}}\pr\left(\{\chi_{\{Z_{a_1}\text{ odd}\}}=0,\; \dots\; , \chi_{\{Z_{a_{n-2m}}\text{ odd}\}}=0\}\,\Big\rvert\, F_{\mathcal{I},\mathcal{J}}\cap \mathring F_{\mathcal{U}} \right) \pr( \mathring F_{\mathcal{U}} \, \vert \, F_{\mathcal{I},\mathcal{J}}) \label{eq:use-p2j=p2j-1}\\
		=\,&\pr\left(\{\chi_{\{Z_{a_1}\text{ odd}\}}=0,\; \dots\; , \chi_{\{Z_{a_{n-2m}}\text{ odd}\}}=0\}\,\Big\rvert\, F_{\mathcal{I},\mathcal{J}} \right),\notag
	\end{align}
	where in \eqref{eq:use-p2j=p2j-1} we used the fact that $\mathrm{p}_{2j-1}=\mathrm{p}_{2j}$, for all $j\in\mathbb{N}$. It hence follows that the $\chi_{\{Z_{a_j}\text{ odd}\}}$, $1\leq j\leq n-2m$, conditional on $F_{\mathcal{I},\mathcal{J}}$ are independent symmetric Bernoulli variables, establishing the Claim and thus completing the proof.
\end{proof}

\begin{lemma}\label{lemma:NNExistence}
	Fix an even $K\in \mathbb{N}$ and let $\{\alpha_j\}_{k=1}^{K}$ be such that $0< \alpha_{k+1}<\alpha_k<1$ for all $1 \leq k \leq K-1$. 
	Furthermore, let $N_0\in\mathbb{N}$. Then there exists a neural network $\psi:\real^{N_0}\to \real$ with the ReLU nonlinearity $\rho(t)=\max\{0,t\}$ such that
	\begin{equation}\label{eq:NNPropertyConstruction}
		\psi(x) =\begin{cases}
			0 &\text{ whenever } x_1 \in [\alpha_{k},\alpha_{k - 1}] \text{ with } k \equiv 2  \mod 4\\
			1 & \text{ whenever } x_1 \in [\alpha_{k},\alpha_{k-1}] \text{ with } k \equiv  0 \mod 4
		\end{cases},\qquad \text{for all }x\in\real^{N_0} \text{ and } k\in \{2,3,\dotsc,K\}.
	\end{equation}	
\end{lemma}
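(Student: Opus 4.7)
The plan is to reduce the construction to a one-dimensional piecewise linear profile in the first coordinate $x_1$ and realise that profile as a shallow ReLU network.

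First, I would define a continuous piecewise linear function $h: \mathbb{R} \to \mathbb{R}$ with breakpoints at $\alpha_K < \alpha_{K-1} < \dots < \alpha_1$, by setting $h \equiv 0$ on $[\alpha_k, \alpha_{k-1}]$ for $k \equiv 2 \pmod 4$, $h \equiv 1$ on $[\alpha_k, \alpha_{k-1}]$ for $k \equiv 0 \pmod 4$, and $h$ linear on each remaining interval $[\alpha_k, \alpha_{k-1}]$ with $k$ odd, interpolating between the prescribed values at the endpoints. Outside $[\alpha_K, \alpha_1]$ I extend $h$ by constants. Since $K$ is even, both boundary intervals are constant plateaus, and every two adjacent plateaus with different values ($0$ and $1$) are separated by exactly one linear transition, so $h$ is continuous and well defined.

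Next, I would use the standard fact that any continuous piecewise linear function on $\mathbb{R}$ with finitely many breakpoints admits the representation
\begin{equation*}
h(t) = a + bt + \sum_{j=1}^{K} c_j\, \rho(t - \alpha_j),
\end{equation*}
where $b$ is the slope of $h$ to the left of $\alpha_K$, $a$ is an appropriate constant, and $c_j$ equals the jump in slope of $h$ at $\alpha_j$. This identity follows by a straightforward induction on the number of breakpoints: the term $c_j\rho(t-\alpha_j)$ alters the slope by $c_j$ at $\alpha_j$ and nowhere else, so one can match slopes and values one breakpoint at a time.

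Finally, I would realise $\psi(x) := h(x_1)$ as a ReLU network. A two-layer network suffices, whose hidden layer contains $K+2$ neurons with preactivations $x_1 - \alpha_j$ for $j=1,\ldots,K$ together with $\pm x_1$. The output layer combines the first $K$ hidden units with weights $c_j$, reproduces the linear term via the identity $bx_1 = b\rho(x_1) - b\rho(-x_1)$, and encodes the constant $a$ as an output bias. For deeper architectures one propagates the scalar forward using $\rho(\rho(t))=\rho(t)$ together with the same positive/negative decomposition to carry signed values without distortion. The only delicate point is the first step: one must verify that the pattern $0,\mathrm{lin},1,\mathrm{lin},0,\mathrm{lin},1,\ldots$ interleaves consistently, which it does because $K$ is even and the transitions live precisely on the odd-indexed intervals. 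After that, the slope decomposition and its ReLU realisation are entirely mechanical.
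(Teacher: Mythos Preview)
Your proof is correct and follows the same overall strategy as the paper: build the required piecewise linear profile in the variable $x_1$ and realise it as a shallow ReLU sum. The only difference is in how the profile is decomposed. Rather than invoking the general slope-jump representation $h(t)=a+bt+\sum_j c_j\,\rho(t-\alpha_j)$, the paper (after padding so that $4\mid K$) writes $h$ directly as a sum of $K/4$ explicit \emph{bump} functions $\psi_\ell$, each supported on $(\alpha_{4\ell+1},\alpha_{4\ell-2})$ and assembled from four ReLU terms of the form $\rho(\alpha_j-x_1)$. The two decompositions produce networks of the same width and are essentially interchangeable; the bump version is a touch more self-contained (no appeal to a representation lemma), while your slope-jump version is more systematic and sidesteps the case distinction on $K\bmod 4$. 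Incidentally, since your $h$ is constant to the left of $\alpha_K$ you have $b=0$, so the two auxiliary $\pm x_1$ neurons can be dropped, and the remark about deeper architectures is unnecessary here because the lemma does not constrain the depth.
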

\begin{proof}
	We may w.l.o.g. assume that $K$ is divisible by 4. Indeed, if $K$ is not divisible by $4$, we can extend the sequence $\{\alpha_k\}_{k=1}^{K}$ by adjoining two new elements (say $\alpha_{K}/2$ and $\alpha_K/4$) at the end of the sequence. We additionally set $\alpha_{K+1}=0$ for convenience. Now, for $\ell\in\{1,\dots, K/4\}$, define the single-layer neural network
	\begin{equation*}
		\begin{aligned}
			\psi_\ell(x)&=\left(\alpha_{4\ell-2}-\alpha_{4\ell-1}\right)^{-1} \big( \rho(\alpha_{4\ell-2} -x_1)-  \rho(\alpha_{4\ell-1} -x_1) \big)\\
			&\qquad\qquad  - \left(\alpha_{4\ell}-\alpha_{4\ell+1}\right)^{-1} \big( \rho(\alpha_{4\ell} -x_1)-  \rho(\alpha_{4\ell+1} -x_1) \big),\quad \text{for }x\in\real^{N_0}.
		\end{aligned}
	\end{equation*}
	One now easily verifies that $\psi_{\ell}(x)=1$ whenever $x_1\in[\alpha_{4\ell}, \alpha_{4\ell-1}]$ and $\psi_{\ell}(x)=0$ whenever $x_1\in \real\setminus (\alpha_{4\ell+1}, \alpha_{4\ell-2})$. Hence, setting $\psi(x)=\sum_{k=1}^{K/4}\psi_{\ell}(x)$ yields the desired network.
\end{proof}

We are now in a position to prove theorem \ref{theorem:NNCV}:
\begin{proof}[Proof of Theorem \ref{theorem:NNCV}]
	
	We begin by defining the sets $\mathcal{C}_1$ and $\mathcal{C}_2$. Let $\mathcal{C}_1 = \{ f_a: \real^d \to [0,1] \, \vert \, a \in [1/2,1]\}$, where $f_a$ is defined as in \eqref{eq:TheFs}. Since all norms on finite dimensional vector spaces are equivalent, let $D > 0$ be such that $\|\cdot \| \leq D \|\cdot\|_1$. To define the set of distributions we first set $\delta =\epsilon/(2D)$. For each $\kappa \in [1/4,3/4]$, define the distribution $\mathcal{D}_\kappa$ on $[0,1]^{N_0}$
	\begin{equation*}
		X\sim \mathcal{D_{\kappa} }\quad \iff \quad \pr(X=x)=\begin{cases}
			\mathrm{p}_k & \text{if }x= x^{k,\delta}\\
			0 &\text{otherwise}
		\end{cases}.
	\end{equation*}
	where $\mathrm{p}_{2j-1}=\mathrm{p}_{2j}=C_\zeta(3/2)j^{-3/2}$ for $j\in\mathbb{N}$ and $x^{k,\delta}$ is defined according to \eqref{eq:theNNexes}. We set $\mathcal{C}_2 = \{\mathcal{D}_\kappa \, \vert \, \kappa \in [1/4,3/4]\}$.
	
	Let $c_1$, $c_2$, and $C_\zeta(3/2)$ be the constants defined in Lemma \ref{lem:NN-distr-lemma} with $\gamma$ set to $3/2$. We choose the constant $C$ so that each of the following hold:
	\begin{align}
		C& \geq 4^3c_1^{-6}, \label{eq:ProbNNFirstCBound}\\
		C &\geq 200 \log(8)^{3/2} c_1^{-3/2}, \text{ and}\label{eq:ProbNNSecondCBound}\\
		C &\geq 4\cdot (8c_2)^{2}.\label{eq:ProbNNThirdCBound}
	\end{align}

	Fix $a \in [1/2,1]$ so that $f_a \in \mathcal{C}_1$ and $\kappa \in [1/4,3/4]$ so that $\mathcal{D}_{\kappa} \in \mathcal{C}_2$. Let  $\mathcal{T} = \{x^1, \hdots, x^r\}$ and $\mathcal{V} = \{y^1, \hdots, y^s\}$ be the random multisets drawn from this distribution as in the statement of the theorem. Then by the definition of the distribution $\mathcal{D}_\kappa$, we can write (after removing repetitions and reordering) $\mathcal{T} \cup \mathcal{V}$ as  $	S:= \mathcal{T} \cup \mathcal{V} = \{ x^{Z_1,\delta},x^{Z_2,\delta},x^{Z_3,\delta}, \dotsc ,x^{Z_N,\delta}\}$ where the random variable $N$ satisfying $N \leq r+s$ is the number of unique elements in $\mathcal{T}\cup \mathcal{V}$ and where $Z_1 <Z_2 < \dotsb < Z_{N}$. For shorthand, we also set $z^j = x^{Z_j,0}$ for $j=1,2,\dotsc,N$.
	
	Since $C/2 \geq 2 \cdot (8c_2)^2$ (by \eqref{eq:ProbNNThirdCBound}) and $C(r\vee s)^2/(2p^2) \geq 4^3c_1^{-6}/2 \geq 2$ (by \eqref{eq:ProbNNFirstCBound} and the facts that $(r \vee s)/p \geq 1$ and $c_1^{-1} \geq 1$) we obtain 
	\[\frac{C(r\vee s)^2}{p^2} = \frac{C(r\vee s)^2}{2p^2} + \frac{C(r\vee s)^2}{2p^2} \geq 2\cdot \frac{(8c_2)^2(r\vee s)^2}{p^2}  +2 \geq 2\ceil*{\left(\frac{8c_2 (r \vee s)}{p}\right)^2+1} - 2\] and thus item (ii) of Lemma \ref{lem:NN-distr-lemma} with $\gamma = 3/2$ yields
	\begin{align}
		&\pr\left(\max\{k\in\mathbb{N}\,\vert\, x^{k,\delta}\in\mathcal{T} \cup \mathcal{V} \}  \leq \ceil*{\frac{C (r\vee s)^2}{\mathrm{p}^2}} \right)=\pr\left(Z_N  \leq \ceil*{\frac{C (r\vee s)^2}{\mathrm{p}^2}} \right) \notag \\
		&\geq \pr\left(Z_N  \leq 2\ceil*{\left(\frac{8c_2 (r \vee s)}{p}\right)^2+1} - 2  \right)\notag
		\geq 1- \frac{c_2 (r+s)}{ \floor*{\ceil*{\left(\frac{8c_2 (r \vee s)}{p}\right)^2+1} - 1}^{\frac{1}{2}}} \notag\\
		&\geq1- \frac{c_2 (r+s)}{ (8 c_2 (r \vee s)/\mathrm{p})} \geq 1-\mathrm{p}/4.\label{eq:NN-separation-prob1}
	\end{align}
Writing $N_{\text{prod}}:= (N_1+1)\cdots(N_{L-1}+1)$, by the assumptions \eqref{eq:r-NN-lower-bound} and \eqref{eq:ProbNNSecondCBound}, we obtain
	\[
	\floor{c_1 (r+s)^{2/3}} \geq \floor{C^{2/3}c_1 qN_{\text{prod}}} \geq \floor{200^{2/3} \log(8)qN_{\text{prod}}} \geq 30 qN_{\text{prod}}.
	\] 
	Therefore we can apply item (iii) of Lemma \ref{lem:NN-distr-lemma} to see that
	\begin{align}
		&\pr\Big(\sum_{i=1}^{N-1}  \chi_{\{ f_a(z^{i+1}) \neq f_a(z^{i})   \}} >  6 qN_{\text{prod}} \Big) = \pr\Big(\sum_{i=1}^{N-1}  \chi_{\{ Z_{i+1} - Z_i \text{ odd }   \}} >  6 qN_{\text{prod}} \Big)\notag\\&\geq  \!\!\!\!\!\!\!\!\!\!\sum_{n=\floor{c_1(r+s)^{\frac{2}{3}}\!}}^{r+s} \!\!\!\!\!\!\pr\Big( \sum_{i=1}^{n-1} \chi_{\{ Z_{i+1} - Z_i \text{ odd }   \}}>  \frac{n}{5}  \,\Big\rvert\, N = n\Big)\pr(N=n)\notag\\
		&\geq  \!\!\!\!\!\!\!\!\!\!\sum_{n=\floor{c_1(r+s)^{\frac{2}{3}}\!}}^{r+s} \!\!\!\!\!\!\exp\left(-\frac{n}{100}\right)\pr(N=n)\notag\\ 
		&\geq \left[1- \exp\left(-\floor*{\frac{c_1 (r+s)^{\frac{2}{3}}} {100}}\right) \right]\cdot \pr( N \geq \floor{c_1 (r+s)^{2/3}} )  \label{eq:NN-number-of-alterTemp}
	\end{align}
	where the application of Lemma \ref{lem:NN-distr-lemma} is justified by the bound $\floor{c_1 (r+s)^{2/3}} \geq 30 qN_{\text{prod}} \geq 10$ and the initial equality in the first line is justified by the fact that $f_a(z^i)$ depends only on the parity of $i$, a fact itself readily seen from the definition of $f_a$ and $z^i$.
	
	Now, by differentiating it is easy to see that the function $p \mapsto p\log(8/p)$ is increasing on $(0,1)$. Hence for $p < 1$ we have $p^{-2}\log(8) > p^{-1} \log(8)>\log(8/p)$ and so combining this with \eqref{eq:r-NN-lower-bound} and \eqref{eq:ProbNNSecondCBound} gives
	\begin{equation}\floor*{\frac{c_1 (r+s)^{2/3}}{100}}  \geq \floor*{\frac{c_1 C^{2/3}\mathrm{p}^{-2}}{100}} \geq  \floor*{\frac{200^{2/3} \mathrm{p}^{-2}\log(8)}{100}}\geq \mathrm{p}^{-2}\log(8) - 1 \geq \log(8/\mathrm{p}) - 1. \label{eq:NNCVTempBound}\end{equation} Furthermore, using item (i) of Lemma \ref{lem:NN-distr-lemma} with $\gamma=3/2$, we obtain $\pr\big(N \geq c_1 (r+s)^{2/3}\big)\geq 1- c_1^{-2}(r+s)^{-1/3}\geq1-\mathrm{p}/4,$ where the final bound follows because $r+s \geq Cp^{-3}$ (which, in turn, is due to the assumption \eqref{eq:r-NN-lower-bound}) and  \eqref{eq:ProbNNFirstCBound}. Using this result together with \eqref{eq:NNCVTempBound}  in \eqref{eq:NN-number-of-alterTemp} yields
	\begin{equation}
		\pr\Big(\sum_{i=1}^{N-1}  \chi_{\{ f_a(z^{j+1}) \neq f_a(z^{j})   \}} >  6 qN_{\text{prod}} \Big) > \left( 1- e\mathrm{p}/8\right)\left(1-\mathrm{p}/{4}\right)  > 1- \mathrm{p}/2\label{eq:NN-number-of-alter-2}
	\end{equation}
	
	Combining \eqref{eq:NN-separation-prob1} and \eqref{eq:NN-number-of-alter-2} we see that the probability that both 
	\begin{equation}\label{eq:NN-useful-event}
		\max\{k\in\mathbb{N}\,\vert\, x^{k,\delta}\in\mathcal{T}\cup \mathcal{V}\}   \leq \ceil*{ \frac{C(r\vee s)^2}{\mathrm{p}^2} } \quad \text{and }  \sum_{i=1}^{N-1} \chi_{\{ f_a(z^{i+1}) \neq f_a(z^{i})  \}} >  6 qN_{\text{prod}}
	\end{equation}
	occur is at least $1-(\mathrm{p}/4+ \mathrm{p}/2)> 1-  \mathrm{p}$. We will now proceed to show that each of (i) through (iii) listed as in the statement of Theorem \ref{theorem:NNCV} hold assuming that this event occurs.
	
	\textbf{Proof of (i): Success -- great generalisability }
	
	To see that $\mathcal{T}, \mathcal{V} \in \mathcal{S}^f_{\varepsilon((r\vee s)/\mathrm{p})}$, note that \eqref{eq:ProbNNSecondCBound} and $c_1^{-1} \geq 1$ yields $C^{2}t^{2}\geq (4\ceil{t}+3)(4\ceil{t}+4)$ for all $t\geq 1$. Applying this inequality with $t = C ((r\vee s)/\mathrm{p})^2\geq 1$, we deduce that \begin{align}\varepsilon\left[\frac{C(r\vee s)}{\mathrm{p}}\right] = C^{-2} \left( \frac{C(r\vee s)^2}{\mathrm{p}^2}\right)^{-2}&\leq \left[\left(4 \ceil*{ \frac{C(r\vee s)^2}{\mathrm{p}^2} } +3\right)\left(4 \ceil*{ \frac{C(r\vee s)^2}{\mathrm{p}^2} } +4\right)\right]^{-1}\notag \\&=\varepsilon'\left(\ceil*{ \frac{C(r\vee s)^2}{\mathrm{p}^2} }\right) \label{eq:EpsBoundByEpsPrime},\end{align} where $\varepsilon'(n)=[(4n+3)(4n+4)]^{-1}$. Therefore because we assume that $\max\{k\in\mathbb{N}\,\vert\, x^{k,\delta}\in\mathcal{T}\cup \mathcal{V} \}  \leq \ceil*{ \frac{C(r\vee s)^2}{\mathrm{p}^2} }$, Lemma \ref{lem:NN-separation-lemma} yields $\mathcal{T},\mathcal{V}\subset \{x^{1,\delta}, \dots, x^{ \ceil*{C(r\vee s)^2/\mathrm{p}^2} ,\delta}\}\in \mathcal{S}^{f_a}_{\varepsilon'(\ceil*{C(r\vee s)^2/\mathrm{p}^2})}\subset \mathcal{S}^{f_a}_{\varepsilon(C(r\vee s)/\mathrm{p})}$.

	The construction of $\phi $ satisfying \eqref{eq:NNCVCorrectClassification} is immediate: we take $\phi$ to be the neural network $\tilde{\varphi}$ defined in Lemma \ref{lem:NNUnstableExists}. We conclude that ${\phi}(x) = f_a(x)$ for all $x \in \mathcal{T} \cup \mathcal{V}$ (this establishes \eqref{eq:NNCVCorrectClassification}). Because ${\phi}(x) = f_a(x)$ for all $x \in \mathcal{T}$ and because $\mathcal{R} \in \mathcal{CF}_{r}$ we conclude that $\mathcal{R} \left(\{\phi({x}^j)\}_{j=1}^r,\{f({x}^j)\}_{j=1}^r\right) = 0$. Thus \eqref{eq:NNOptimisationProblem} holds, completing the proof of (i).

	\textbf{Proof of (ii): Any successful NN in $\mathcal{NN}_{\mathbf{N},L}$ -- regardless of architecture -- becomes universally unstable}
	
	Our next task will be to show that if $\hat\phi \in \mathcal{NN}_{\mathbf{N},L}$ and $g: \real \to \real$ is monotonic then there is a subset $\mathcal{\tilde T}\subset \mathcal{T} \cup \mathcal{V}$ of the combined training and validation set of size $|\mathcal{\tilde T}| \geq q$, such that there exist uncountably many universal adversarial perturbations $\eta \in \mathbb{R}^d$ so that for each $x \in \mathcal{\tilde T}$ equation \eqref{eq:NNCVIncorrectClassification} applies.

	To this end, note that \eqref{eq:NN-useful-event} implies that there exist natural numbers $k_1<k_2<\dots < k_{6qN_{\text{prod}}}$ such that $z^{k_i}_1>z^{k_{i+1}}_1$ and  $f_a(z^{k_i})\neq f_a(z^{k_{i+1}})$ for all  $i\in\{1,\dotsc, 6qN_{\text{prod}}-1\}$. Moreover, by the definition of $\mathcal{T}$, $\mathcal{V}$ and $S$ there exist $m_i$ such that $z^{k_i}_1 = x^{m_i,\delta}_1$ and such that $x^{m_i,\delta} \in \mathcal{T} \cup \mathcal{V}$. For such $i$ and any $\omega \in [0,\delta\wedge \varepsilon((r\vee s)/\mathrm{p}) )$, we define the vectors $w^{i,\omega} = z^{k_i}+\omega e_1$. We also define the sets 
	$\mathcal{W}^{\omega} := \{ w^{i,\omega} \, \vert \, i\in\{1,\dotsc, 6qN_{\text{prod}}\}\}$. 
	
	Because of the definition of $x^{k,0}$ given in \eqref{eq:theNNexes} and the definition of $z^{k_i}$, we have $z^{k_i}_2 = z^{k_i}_3 = \dotsb = z^{k_i}_d = 0$ and $z^{k_i} = x^{m_i,0}$. In particular, $\{z^{k_i} \, \vert \, i \in\{1,\dotsc, 6qN_{\text{prod}}\}\} = \{ x^{m_i,0} \, \vert \, i\in\{1,\dotsc, 6qN_{\text{prod}}\} \} \in  \mathcal{S}^{f_a}_{\varepsilon((r\vee s)/\mathrm{p})}$ where we have used Lemma \ref{lem:NN-separation-lemma} and the bound \eqref{eq:EpsBoundByEpsPrime}. Since $\|z ^{k_i} - w^{i,\omega}\|_{\infty} = \omega < \varepsilon((r\vee s)/\mathrm{p})$, we conclude that $f_a(z^{k_i}) = f_a(w^{i,\omega})$ for $i \in\{1,\dotsc, 6qN_{\text{prod}}\}$. Thus $f_a(w^{i,\omega}) = f_a(z^{k_i}) \neq f_a(z^{k_{i+1}}) = f_a(w^{i+1,\omega})$ for $i \in\{1,\dotsc, 6qN_{\text{prod}}-1\}$.
	
	We can now use Lemma \ref{lem:NoNN} to conclude that for each $\omega \in [0,\delta\wedge \varepsilon((r\vee s)/\mathrm{p}) )$ there exists a set $\mathcal{I}^{\omega}$ and a set $\mathcal{U}^{\omega}\subset \mathcal{W}^{\omega}$ with the following properties: 
	\begin{enumerate}
		\item $\mathcal{I}^{\omega} \subset \{1,2,\dotsc,6qN_{\text{prod}}\}$
		\item $\mathcal{U}^{\omega} = \{ w^{i,\omega} \, \vert \, i\in\mathcal{I}^{\omega}\}$
		\item For all $w \in \mathcal{U}^{\omega}$, $|g(\hat\phi(w)) - f_a(w)| \geq 1/2$.
		\item $|\mathcal{U}^{\omega}| \geq 2q$.
	\end{enumerate}
	By the pigeonhole principle and the finiteness of $\{1,2,\dotsc,6qN_{\text{prod}}\}$, there exists an uncountable set $\Omega \subset [0,\delta\wedge \varepsilon((r\vee s)/\mathrm{p}))$ such that for all $\omega \in \Omega$, $\mathcal{I}^{\omega}$ is independent of $\omega$. Let $\mathcal{I}$ denote this common value and let $\mathcal{I}_{E} := \{i \, \vert \, i \in \mathcal{I}, m_i \text{ even}\}$ and $\mathcal{I}_{O} := \{i \, \vert \, i \in \mathcal{I}, m_i \text{ odd}\}$. Note that $|\mathcal{I}| \geq 2q$; otherwise, $|\mathcal{U}^{\omega}| < 2q$ for some $\omega$. Therefore at least one of $|\mathcal{I}_E|\geq q$ or $|\mathcal{I}_O|\geq q$: we now split into two cases depending on which of these two sets has cardinality at least $q$.

	\textit{Case 1: $|\mathcal{I}_E|\geq q$.}
	
	In this case, we choose $\tilde{\mathcal{T}} = \{x^{m_i,\delta} \, \vert \, i \in \mathcal{I}_E\}$. For each $\omega \in \Omega$, define $\eta^{\omega} = (\omega,-\delta,0,\dotsc,0) \in \real^d$ and $\mathcal{H} = \{ \eta^{\omega} \, \vert \, \omega \in \Omega\}$. Then the set $\mathcal{H}$ is uncountable, for each $i \in \mathcal{I}_E$ and $\omega \in \Omega$ we have  $x^{m_i,\delta} + \eta^{\omega} = w^{i,\omega}$, $|g(\hat\phi(x^{m_i,\delta} + \eta^{\omega})) - f_a(x^{m_i,\delta} + \eta^{\omega})| = |g(\hat\phi(w^{i,\omega})) - f_a(w^{i,\omega})| \geq 1/2$ and $\|\eta\| \leq D\|\eta^{\omega}\|_1 = D(\omega + \delta) \leq 2D\delta \leq \epsilon$. Furthermore $|\supp(\eta^\omega)| = 2$. We conclude that \eqref{eq:NNCVIncorrectClassification} holds.

	\textit{Case 2: $|\mathcal{I}_O| \geq q$}
	
	In this case, we choose $\tilde{\mathcal{T}} = \{x^{m_i,\delta} \, \vert \, i \in \mathcal{I}_O\}$. For each $\omega \in \Omega$, define $\eta^{\omega} = (\omega,0,0,\dotsc,0) \in \real^d$ and $\mathcal{H} = \{ \eta^{\omega} \, \vert \, \omega \in \Omega\}$. Then the set $\mathcal{H}$ is uncountable, for each $i \in \mathcal{I}_O$ and $\omega \in \Omega$ we have  $x^{m_i,\delta} + \eta^{\omega} = w^{i,\omega}$, $|g(\hat\phi(x^{m_i,\delta} + \eta^{\omega})) - f_a(x^{m_i,\delta} + \eta^{\omega})| = |g(\hat\phi(w^{i,\omega})) - f_a(w^{i,\omega})| \geq 1/2$ and $\|\eta^{\omega}\| \leq D\|\eta^{\omega}\|_1 = D\omega \leq D\delta \leq \epsilon$. Furthermore $|\supp(\eta^\omega)| = 1$. We conclude that \eqref{eq:NNCVIncorrectClassification} holds.
	
	\textbf{Proof of (iii): Other stable and accurate NNs exist}
	
	Finally, we must show the existence of $\psi$, which we do with the help of Lemma \ref{lemma:NNExistence}. To this end, we set $K= \ceil{C ((r\vee s)/\mathrm{p})^2 } $ and  define $\{\alpha_j\}_{j=1}^{2K}$ by
	$
	\alpha_{2k-1}=x_1^{k,\delta}+ \varepsilon((r\vee s)/\mathrm{p})
	$,
	$
	\alpha_{2k}= x_1^{k,\delta}-\varepsilon((r\vee s)/\mathrm{p})
	$
	for $k=1,\dots, K $. We first claim that $0 < \alpha_{2K} < \alpha_{2K-1} < \dotsb < \alpha_2 < \alpha_1 < 1$.

	Because $C \geq 4^3$, $p \leq 1$ and $(r \vee s) \geq 1$ we have 
	\[
	\alpha_1 = \frac{a}{2-\kappa} + \frac{\mathrm{p}^4}{C^4 (r\vee s)^4} \leq \frac{1}{(2-3/4)} + \frac{1}{C^4} < 1
	\]
	and similarly we obtain $2\ceil{C ((r\vee s)/\mathrm{p})^2} + 1 - \kappa \leq 2(C ((r\vee s)/\mathrm{p})^2) + 2 - \kappa \leq 4(C ((r\vee s)/\mathrm{p})^2)$. Therefore
	\begin{align*}
		\alpha_{2K} = \frac{a}{2\ceil{C ((r\vee s)/\mathrm{p})^2} + 1 - \kappa} - \frac{\mathrm{p}^4}{C^4 (r\vee s)^4}   &\geq \frac{a}{4C ((r\vee s)/\mathrm{p})^2} - \frac{\mathrm{p}^4}{C^4 (r\vee s)^4} \\&\geq \frac{\mathrm{p}^2}{8C (r\vee s)^2} - \frac{\mathrm{p}^2}{4^{12}C (r\vee s)^2}>0
	\end{align*}
	
	A simple calculation also shows that for each $j=1,\dots, K -1$
	\[
	x^{j,\delta}_1 - x^{j+1,\delta}_1 = \frac{a}{(j+2-\kappa)(j+1-\kappa)} \geq \frac{a}{(K+1-\kappa)(K-\kappa)} \geq [2(K+1-\kappa)(K-\kappa)]^{-1}.
	\]
	On the other hand, once again employing the result that $C^{2}t^{2}\geq (4\ceil{t}+3)(4\ceil{t}+4)$, for all $t\geq 1$, (which is a consequence of \eqref{eq:ProbNNSecondCBound}) with $t = C((r\vee s)/\mathrm{p})^2$ we obtain 
	\[
	2\varepsilon((r\vee s)/\mathrm{p}) =  2 C^{-2} ( C((r\vee s)/\mathrm{p})^2)^{-2}\leq  2[(4K+3)(4K+4)]^{-1} <  [2(K+1 - \kappa)(K -\kappa)]^{-1}.
	\] We therefore conclude that $\alpha_{2j-1} > \alpha_{2j} = x^{j,\delta}_1 -\varepsilon((r\vee s)/\mathrm{p}) >  x^{j+1,\delta}_1 +\varepsilon((r\vee s)/\mathrm{p}) = \alpha_{2j+1}$.
	and thus the conditions to apply Lemma \ref{lemma:NNExistence} are met.
	
	Now, let $\psi$ be the network provided by Lemma \ref{lemma:NNExistence} with this sequence $\{\alpha_j\}_{j=1}^{2K}$. Because of the definition of $\alpha_j$ and the conclusion of Lemma \ref{lemma:NNExistence} we have
	\begin{equation*}
		\psi(x) = \begin{cases}
			0 & \text{ if } x_1 \in [x^{k,\delta}_1 - \varepsilon((r\vee s)/\mathrm{p}), x^{k,\delta}_1 + \varepsilon((r\vee s)/\mathrm{p})] \text{ and } k \text{ is odd} \\
			1 & \text{ if } x_1 \in [x^{k,\delta}_1 - \varepsilon((r\vee s)/\mathrm{p}), x^{k,\delta}_1 + \varepsilon((r\vee s)/\mathrm{p})] \text{ and } k \text{ is even}  
		\end{cases}
	\end{equation*}
	Moreover, because of Lemma \ref{lem:NN-separation-lemma}, the fact that the value of $f_a(x)$ depends only on $x_1$ and the bound \eqref{eq:EpsBoundByEpsPrime}
	\begin{equation*}
		f_a(x) = \begin{cases}
			0 & \text{ if } x_1 \in [x^{k,\delta}_1 - \varepsilon((r\vee s)/\mathrm{p}), x^{k,\delta}_1 + \varepsilon((r\vee s)/\mathrm{p})] \text{ and } k \text{ is odd} \\
			1 & \text{ if } x_1 \in [x^{k,\delta}_1 - \varepsilon((r\vee s)/\mathrm{p}), x^{k,\delta}_1 + \varepsilon((r\vee s)/\mathrm{p})] \text{ and } k \text{ is even}  
		\end{cases}
	\end{equation*}
	In particular, $\psi(x) = f_a(x)$ whenever $x_1 \in [x^{k,\delta}_1 - \varepsilon((r\vee s)/\mathrm{p}), x^{k,\delta}_1 + \varepsilon((r\vee s)/\mathrm{p})]$ for any $k \in \{1,2,\dotsc,K\}$. 
	
	To see that $\psi(x) = f_a(x)$ for all $x \in \mathcal{B}_{\varepsilon((r\vee s)/\mathrm{p})}^{\infty}(\mathcal{T} \cup \mathcal{V})$, note that, for every $x \in \mathcal{B}_{\varepsilon((r\vee s)/\mathrm{p})}^{\infty}(\mathcal{T} \cup \mathcal{V})$, there exists an $x^{k,\delta}\in\mathcal{T}\cup\mathcal{V}$ such that $\|x^{k,\delta}-x\|_\infty\leq \varepsilon((r\vee s)/\mathrm{p})$. Then, by the assumption that $\max\{\ell\in\mathbb{N}\,\vert\, x^{\ell,\delta}\in\mathcal{T} \cup \mathcal{V} \}  \leq \ceil{C ((r \vee s)/\mathrm{p})^2} $ occurs in \eqref{eq:NN-useful-event}, we have $k\leq K$, and so $x_1 \in [x^{k,\delta}_1 - \varepsilon((r\vee s)/\mathrm{p}), x^{k,\delta}_1 + \varepsilon((r\vee s)/\mathrm{p})]$. But we have already shown that for such $x$, $\psi(x) = f_a(x)$. Thus the proof of the theorem is complete.
\end{proof}

\subsection{Tools from the Solvability Complexity Index (SCI) hierarchy used for Theorem \ref{theorem:NNNonComp}}\label{Sec:SCIBackground}

In order to formalise the non computability result stated in Theorem \ref{theorem:NNNonComp} we shall summarise appropriate definitions and ideas on the `SCI hierarchy'  \cite{comp_stable_NN22, SCI, Hansen_JAMS, CRAS, Ben-Artzi_FoCM2021, Doyle_McMullen, McMullen1, McMullen2}. The material in this section very closely follows the definitions and presentation in \cite{opt_big} with slight adaptations made owing to the different focus of this paper. Working with the SCI hierarchy and general algorithms allows us to show the non-computability is independent of both the underlying computational model (e.g. a Turing machine, BSS machine) and local minima as in Remark \ref{rem:LocalMinima}. 

It also allows us to easily make non-computability statements applicable to both deterministic and randomised algorithms. We include the ensuing discussion to ensure that this paper is self contained.

\subsubsection{Computational problems}\label{sec:SCI_hierarchy}
We start by defining a \emph{computational problem} \cite{SCI}:

\begin{definition}[Computational problem]\label{definition:ComputationalProblem}
	Let $\Omega$ be some set, which we call the \emph{input} set,
	and $\Lambda$ be a set of complex valued functions on $\Omega$ such that for $\iota_1, \iota_2 \in \Omega$, then $\iota_1 = \iota_2$ if and only if $f(\iota_1) = f(\iota_2)$ for all $f \in \Lambda$. We call $\Lambda$ an \emph{evaluation} set. Let $(\mathcal{M},d_{\mathcal{M}})$ be a metric space, and finally let $\Xi:\Omega\to \mathcal{M}$ be a function which we call the \emph{solution map}.
	We call the collection $\{\Xi,\Omega,\mathcal{M},\Lambda\}$ a \emph{computational problem}.
\end{definition}

The set $\Omega$ is essentially the set of objects that give rise to the various instances of our computational problem. The solution map $\Xi : \Omega\to\mathcal{M}$ is what we are interested in computing. Finally, the set $\Lambda$ is the collection of functions that provide us with the information we are allowed to read. As a simple example, if we were considering matrix inversion then $\Omega$ might be a collection of invertible matrices, $\Xi$ would be the matrix inversion map taking $\Omega$ to the set of matrices and $\Lambda$ would consist of functions that allow us to access entries of the input matrices.

In the slightly more complicated context of a computational problem, the neural network problem formulated in Section \ref{sec:NNComp} can be understood as per the following:
\begin{definition}[Neural network computational problem]Fix $d,r \in \mathbb{N}$, a classification function $f: \real^{d} \to \{0,1\}$, neural network layers and dimensions $L$ and $\mathbf{N} = (N_L=1,N_{L-1},\dotsc, \allowbreak N_1,N_0=d)$ respectively as well as $\epsilon,\hat \epsilon$ and a cost function $\mathcal{R} \in \nncfeps$. The \emph{neural network computational problem}  \[\{\Xi\nnsubs,\Omega\nnsubs,\mathcal{M}\nnsubs,\Lambda\nnsubs\}\] is defined as follows: \begin{enumerate}
		\item The input set $\Omega\nnsubs$ is the collection of all $\mathcal{T}$ with $\mathcal{T}=\{x^1,\hdots, x^r\}$ a finite subset of $\real^d$ such that $\mathcal{T}\in \mathcal{S}^{f}_{\varepsilon'(K)}$ with  $\varepsilon'(n) := [(4n+3)(4n+4)]^{-1}$.
		\item The metric space $\mathcal{M}\nnsubs$ is set to $\mathbb{R}^r$ with the distance function induced by $\|\cdot\|_{*}$ where  $* = 1,2 \text{ or } \infty$ as per the statement of Theorem \ref{theorem:NNNonComp}.
		\item The solution map $\Xi\nnsubs$ is given by the following: for a training set $\mathcal{T}$, we let \[ \mathcal{A}_\mathcal{T}^{\epsilon}:=
		\argmineps  \mathcal{R} \left(\{\varphi({x}^j)\}_{j=1}^r,\{f({x}^j)\}_{j=1}^r\right), 
		\] and then $\Xi\nnsubs(\mathcal{T}) = \{\phi(x^i)\}_{i=1}^{r}$ for $\phi \in \mathcal{A}^{\epsilon}_{\mathcal{T}}$. Note that $\Xi$ is potentially multivalued if $\mathcal{A}^{\epsilon}_{\mathcal{T}}$ has more than one element - this will not be a problem for our theory and will be explained further in Remark \ref{rem:MultivaluedFunctions}.
		\item The set $\Lambda\nnsubs$ is given by 
		\begin{equation}\label{eq:theNNLambda}
			\Lambda\nnsubs=\{f^{j,k}\}_{j=1,k=1}^{j=d,k=r},
		\end{equation}
		where $f^{j,k}(\mathcal{T})=x_j^k$  gives access to the $j$th coordinate of the $k$th vector of the training set.
	\end{enumerate}
\end{definition} 
To reduce the burden on notation, we will abbreviate 
\begin{equation*}
	\{\Xi^\mathcal{NN},\Omega^\mathcal{NN},\mathcal{M}^\mathcal{NN},\Lambda^\mathcal{NN}\}=	\{\Xi\nnsubs,\Omega\nnsubs,\mathcal{M}\nnsubs,\Lambda\nnsubs\}
\end{equation*}
where there is no ambiguity surrounding the parameters $f,r,\epsilon,\mathcal{R},\mathbf{N},L$.

\begin{remark}[Existence of a neural network]\label{rem:NNCompExistence}
	It may not be a-priori obvious that the set $\mathcal{A}_\mathcal{T}^{\epsilon}$ is non-empty and thus $\Xi\nnsubs(\mathcal{T})$ is well defined. In fact, this is an immediate consequence the fact that the cost function $\mathcal{R}$ is a member of 
	$\mathcal{CF}^{\epsilon,\hat \epsilon}_{r}$ defined in \eqref{eq:CFDEps} and the definition of $\operatorname {argmin}_{\epsilon}$ given in \eqref{eq:argminEpsDef}. In particular, the existence of an approximate minimiser is guaranteed since $\mathcal{R}$ is bounded from below.
\end{remark}

\subsubsection{Algorithms}
In this section we shall describe the algorithms that are designed to approximate the solution map $\Xi$ in a computational problem $\{\Xi,\Omega,\mathcal{M},\Lambda\}$. We shall start with deterministic general algorithms:
\begin{definition}[General Algorithm]\label{definition:Algorithm}
	Given a  computational problem $\{\Xi,\Omega,\mathcal{M},\Lambda\}$, a \emph{general algorithm} is a mapping $\Gamma:\Omega\to\mathcal{M}\cup \{\nh\}$ such that, for every $\iota\in\Omega$, the following conditions hold:
	\begin{enumerate}[label=(\roman*)]
		\item there exists a nonempty subset of evaluations $\Lambda_\Gamma(\iota) \subset\Lambda $, and, whenever $\Gamma(\iota) \neq  \nh$, we have $|\Lambda_\Gamma(\iota)|<\infty$\label{property:AlgorithmFiniteInput},
		\item  the action of $\,\Gamma$ on $\iota$ is uniquely determined by $\{f(\iota)\}_{f \in \Lambda_\Gamma(\iota)}$, \label{property:AlgorithmDependenceOnInput}
		\item for every $\iota^{\prime} \in\Omega$ such that $f(\iota^\prime)=f(\iota)$ for all $f\in\Lambda_\Gamma(\iota)$, it holds that $\Lambda_\Gamma(\iota^{\prime})=\Lambda_\Gamma(\iota)$.\label{property:AlgorithmSameInputSameInputTaken}
	\end{enumerate}
\end{definition}

\begin{remark}[The purpose of a general algorithm: universal impossibility results]
	The purpose of a general algorithm is to have a definition that will encompass any model of computation and that will allow impossibility results to become universal. Given that there are several non-equivalent models of computation, impossibility results will be shown with this general definition of an algorithm.
\end{remark}
\begin{remark}[The power of a general algorithm] \label{rem:GAPower}
	General algorithms are extremely powerful computational models with every Turing or BSS machine a general algorithm but the converse does not hold. Thus a non-computability result proven using general algorithms is strictly stronger than one proven only for Turing machines or BSS machines.
	
	In particular, general algorithms are more powerful than any Turing machine or BSS machine, or even such a machine with access to an oracle that provides an approximate minimiser 
		\[
		\phi \in \mathop{\mathrm{arg min}_{\epsilon}}_{\tilde\phi \in \mathcal{NN}_{\mathbf{N},L}}  \mathcal{R}\left(\{\tilde\phi({x}^j)\}_{j=1}^r,\{f({x}^j)\}_{j=1}^r \right)
		\]
	for every inexact input provided to the algorithm, or an oracle that detects when an algorithm has encountered local minima. It is for this reason that we stated in Remark \ref{rem:LocalMinima} that local minima were not relevant to Theorem \ref{theorem:NNNonComp}.
\end{remark}
\begin{remark}[The non-halting output $\nh$]
	The non-halting ``output'' $\nh$ of a general algorithm may seem like an unnecessary distraction given that a general algorithm is just a mapping, which is strictly more powerful than a Turing or a BSS machine. However, the $\nh$ output is needed when the concept of a general algorithm is extended to a randomised general algorithm.
	A technical remark about $\nh$ is also appropriate, namely that $\Lambda_{\Gamma}(\iota)$ is allowed to be infinite in the case when $\Gamma(\iota) = \nh$. This is to allow general algorithms to capture the behaviour of a Turing or a BSS machine not halting by virtue of requiring an infinite amount of input information.
\end{remark}

Owing to the presence of the special non-halting ``output'' $\nh$, we have to extend the metric $d_{\mathcal{M}}$ on $\mathcal{M}\times \mathcal{M}$ to $d_{\mathcal{M}}:\mathcal{M} \cup \{\nh\} \times \mathcal{M} \cup \{\nh\}\to \real_{\geq 0}$ in the following way:
\begin{equation}\label{eq:extended-metric}
	d_{\mathcal{M}}(x,y) = \begin{cases} d_{\mathcal{M}}(x,y) & \text{ if } x,y \in \mathcal{M} \\
		0 & \text{ if } x = y = \nh\\
		\infty & \text{ otherwise.} \end{cases}
\end{equation}

Definition \ref{definition:Algorithm} is sufficient for defining a randomised general algorithm, which is the only tool from the SCI theory needed in order to prove Theorem \ref{theorem:NNNonComp}.

\begin{remark}[Multivalued functions]\label{rem:MultivaluedFunctions} When dealing with optimisation problems one needs a framework that can handle multiple solutions. As the setup above does not allow $\Xi$ to be multi-valued we need some slight changes.  We allow $\Xi$ to be multivalued, even though a general algorithm is assumed not to be. For $\iota \in \Omega$, we define 
	$
	\disM(\Xi(\iota),\Gamma(\iota)) := \inf_{x \in \Xi(\iota)}d_{\mathcal{M}}(x,\Gamma(\iota)).
	$ That is to say, the error that $\Gamma$ is assumed to incur in trying to compute $\Xi(\iota)$ is the best (infimum) of all possible errors across all values of $\Xi(\iota)$.
\end{remark}
One final definition that is useful  is that of the \emph{minimum amount of input information}, defined if $\Lambda$ is countable. Although this definition has its own uses in other work on the SCI Hierarchy, in the context of this paper it will only be useful to address a technicality in the next section.

\begin{definition}[Minimum amount of input information]\label{def:minimum_runtime}
	Given the computational problem $\{\Xi,\Omega,\mathcal{M},\allowbreak\Lambda\}$, where $\Lambda =\{f_k \, \vert \, k \in \mathbb{N}, \, k\leq |\Lambda| \}$ and a general algorithm $\Gamma$, we define the \emph{minimum amount of input information} $T_{\Gamma}(\iota)$ for $\Gamma$ and $\iota \in \Omega$ as 
	\[
	T_{\Gamma}(\iota) :=\sup\lbrace m \in \mathbb{N} \, \vert \, f_{m} \in \Lambda_{\Gamma}(\iota) \rbrace.
	\]
	Note that, for $\iota$ such that $\Gamma(\iota) = \nh$, the set $\Lambda_{\Gamma}(\iota)$ may be infinite (see Definition \ref{definition:Algorithm}), in which case $T_{\Gamma}(\iota)=\infty$.
	
\end{definition}
\subsubsection{Randomised algorithms}

In many contemporary fields of mathematics of information such as deep learning, the use of randomised algorithms is widespread. We therefore need to extend the concept of a general algorithm to a \emph{randomised random algorithm}.

\begin{definition}[Randomised General Algorithm]\label{definition:ProbablisticAlgorithm}	Given a computational problem $\{\Xi,\Omega,\mathcal{M},\Lambda\}$, where $\Lambda = \{f_k \, \vert \, k \in \mathbb{N}, \, k\leq |\Lambda| \}$, a \emph{randomised general algorithm} (RGA) is a collection $X$ of general algorithms $\Gamma:\Omega\to\mathcal{M}\cup \{\nh\}$, a sigma-algebra $\mathcal{F}$ on $X$, and a family of probability measures $\{\mathbb{P}_{\iota}\}_{\iota \in \Omega}$ on $\mathcal{F}$ such that the following conditions hold:
	\begin{enumerate}[label=(P\roman*)]
		\item For each $\iota \in \Omega$, the mapping $\gprob_{\iota}:(X,\mathcal{F}) \to (\mathcal{M}\cup \{\nh\}, \mathcal{B})$ defined by $\gprob_{\iota}(\Gamma) = \Gamma(\iota)$ is a random variable, where $\mathcal{B}$ is the Borel sigma-algebra on $\mathcal{M}\cup \{\nh\}$. \label{property:PAlgorithmMeasurable}
		\item For each $n \in \mathbb{N}$ and $\iota \in \Omega$, we have $\lbrace \Gamma \in X \, \vert \, T_{\Gamma}(\iota) \leq n \rbrace \in \mathcal{F}$.
		\label{property:PAlgorithmRTMeasurable}
		
		\item For all $\iota_1,\iota_2 \in \Omega$ and $E \in \mathcal{F}$ so that, for every $\Gamma \in E$ and every $f \in \Lambda_{\Gamma}(\iota_1)$, we have $f(\iota_1) = f(\iota_2)$, it holds that $\mathbb{P}_{\iota_1}(E) = \mathbb{P}_{\iota_2}(E)$.
		\label{property:PAlgorithmConsistent}
	\end{enumerate}
	It is not immediately clear whether condition \ref{property:PAlgorithmRTMeasurable} for a given RGA $(X,\mathcal{F},\{\pr_\iota\}_{\iota\in\Omega})$ holds independently of the choice of the enumeration of $\Lambda$. This is indeed the case, but we shall not show this here (see \cite{opt_big}
	for further information).
\end{definition}

\begin{remark}[Assumption \ref{property:PAlgorithmRTMeasurable}]
	Note that \ref{property:PAlgorithmRTMeasurable} in Definition \ref{definition:ProbablisticAlgorithm} is needed in order to ensure that the minimum amount of input information (that is, the amount of input information the algorithm makes use of) also becomes a valid random variable. More specifically, for each $\iota \in \Omega$, we define the random variable
	\[
	T_{\gprob}(\iota): X\to\mathbb{N}\cup\{\infty\} \text{ according to } \Gamma \mapsto T_{\Gamma}(\iota).
	\]
	Assumption \ref{property:PAlgorithmRTMeasurable} ensures that this is indeed a random variable. 
	
	As the minimum amount of input information is typically related to the complexity of an algorithm, one would be dealing with a rather exotic probabilistic model if $T_{\gprob}(\iota)$ were not a random variable. Indeed, note that the standard models of randomised algorithms (see \cite{Arora2007}) can be considered as RGAs (in particular, they will satisfy  \ref{property:PAlgorithmRTMeasurable}).
\end{remark}	

\begin{remark}[The purpose of a randomised general algorithm: universal lower bounds]
	As for a general algorithm, the purpose of a randomised general algorithm is to have a definition that will encompass every model of computation, which will allow lower bounds and impossibility results to be universal. Indeed, randomised Turing and BSS machines can be viewed as randomised general algorithms.
\end{remark}

We will, with a slight abuse of notation, also write $\mathrm{RGA}$ for the family of all randomised general algorithms for a given a computational problem and refer to the algorithms in $\mathrm{RGA}$ by  $\gprob$. 		
With the definitions above we can now make probabilistic version of the strong breakdown epsilon as follows.

\begin{definition}[Probabilistic strong breakdown epsilon]\label{prob_strong_break} Given a computational problem $\{\Xi,\Omega,\mathcal{M},\Lambda\}$, where $\Lambda = \{f_k \, \vert \, k \in \mathbb{N}, \, k\leq |\Lambda| \}$, 
	we define the \emph{probabilistic strong breakdown epsilon} $\epsilon_{\mathbb{P}\mathrm{B}}^{\mathrm{s}}: [0,1) \to \mathbb{R}$ according to
	\begin{align*}
		\epsilon_{\mathbb{P}\mathrm{B}}^{\mathrm{s}}(\mathrm{p}) = \sup\{&\epsilon \geq 0, \, \vert \, \forall \, \gprob \in \mathrm{RGA} \,\,\exists \, \iota \in \Omega \text{ such that } \mathbb{P}_{\iota}(\disM(\gprob_{\iota},\Xi(\iota)) > \epsilon) > \mathrm{p}\},
	\end{align*}
	where $\gprob_{\iota}$ is defined in \ref{property:PAlgorithmMeasurable} in Definition \ref{definition:ProbablisticAlgorithm}. 
\end{definition}

Note that the probabilistic strong Breakdown-epsilon is not a single number but a function of $\mathrm{p}$. Specifically, it is the largest $\epsilon$ so that the probability of failure with at least $\epsilon$-error is greater than $\mathrm{p}$.

\subsubsection{Inexact input and perturbations} \label{sec:inexact}
Suppose we are given a computational problem $\{\Xi, \Omega, \mathcal{M}, \Lambda\}$, and that 
$
\Lambda = \{f_j\}_{j \in \beta},
$
where $\beta$ is some index set that can be finite or infinite.  Obtaining $f_j$ may be a computational task on its own, which is exactly the problem in most areas of 
computational mathematics. In particular, for $\iota \in \Omega$, $f_j(\iota)$ could be the number $e^{\frac{\pi}{j} i }$ for example. Hence, we cannot access $f_j(\iota)$, but rather $f_{j,n}(\iota)$ where $f_{j,n}(\iota) \rightarrow f_{j}(\iota)$ as $n \rightarrow \infty$. In this paper we will be interested in the case when this can be done with error control. In particular, we consider $f_{j,n}: \Omega \to \dyadic_n + i \dyadic_n$, where $\dyadic_n:=\{k\,2^{-n}\, \vert \,k\in\mathbb{Z}\}$, such that
\begin{equation}\label{Lambda_limits2}
	\|\{f_{j,n}(\iota)\}_{j\in\beta} - \{f_j(\iota)\}_{j\in\beta}\|_{\infty} \leq 2^{-n}, \quad \forall \iota \in \Omega.
\end{equation}
We will call a collection of such functions $\Delta_1$-information for the computational problem. Formally, we have the following.

\begin{definition}[$\Delta_{1}$-information]\label{definition:Lambda_limits}
	Let $\{\Xi, \Omega, \mathcal{M}, \Lambda\}$ be a computational problem with $\Lambda = \{f_j\}_{j \in \beta}$. Suppose that, for each $j\in\beta$ and $n\in\mathbb{N}$, there exists an  $f_{j,n}: \Omega \to \dyadic_n + i \dyadic_n$ such that \eqref{Lambda_limits2} holds. We then say that the set $\hat\Lambda=\{f_{j,n} \, \vert \, j\in\beta ,n\in\mathbb{N}\}$ provides $\Delta_1$-information for $\{\Xi, \Omega, \mathcal{M}, \Lambda\}$. 
\end{definition}

We can now define what we mean by a computational problem with $\Delta_1$-information.

\begin{definition}[Computational problem with $\Delta_1$-information]\label{definition:Omega_tilde_Delta_m}
	Given $\{\Xi,\Omega,\mathcal{M},\Lambda\}$ with  $\Lambda=\{f_j\}_{j \in \beta}$, the corresponding \emph{computational problem with $\Delta_1$-information} is defined as 
	\[
	\{\Xi,\Omega,\mathcal{M},\Lambda\}^{\Delta_1} := \{\tilde \Xi,\tilde \Omega,\mathcal{M},\tilde \Lambda\},
	\]
	where 
	\begin{equation}\label{eq:omega_tilde}
		\tilde \Omega = \left\{ \tilde \iota = \big\{(f_{j,1}(\iota), f_{j,2}(\iota), f_{j,3}(\iota), \dots) \big\}_{j \in \beta} \, \vert \, \iota \in \Omega,\; f_{j,n}: \Omega \to \dyadic_n + i \dyadic_n \text{ satisfy \eqref{Lambda_limits2}} \right\},
	\end{equation}
	$\tilde \Xi(\tilde \iota) = \Xi(\iota)$, and $\tilde \Lambda = \{\tilde f_{j,n}\}_{j,n \in \beta \times \mathbb{N}}$, where $\tilde f_{j,n}(\tilde \iota) = f_{j,n}(\iota)$. Given an $\tilde\iota\in\tilde\Omega$, there is a unique $\iota\in\Omega$ for which $\tilde\iota=\big\{(f_{j,1}(\iota), f_{j,2}(\iota), f_{j,3}(\iota), \dots) \big\}_{j \in \beta}$ (by Definition \ref{definition:ComputationalProblem}). We say that this $\iota\in\Omega$ \emph{corresponds} to $\tilde\iota\in\tilde\Omega$.
\end{definition}
\begin{remark}
	Note that the correspondence of a unique $\iota$ to each $\tilde\iota$ in Definition \ref{definition:Omega_tilde_Delta_m} ensures that $\tilde\Xi$ and the elements of $\tilde\Lambda$ are well-defined.
\end{remark}

One may interpret the computational problem $\{\Xi,\Omega,\mathcal{M},\Lambda\}^{\Delta_1} = \{\tilde \Xi,\tilde \Omega,\mathcal{M},\tilde \Lambda\}$ as follows. The collection $\tilde \Omega$ is the family of all sequences approximating the inputs in $\Omega$. For an algorithm to be successful for $\{\Xi,\Omega,\mathcal{M},\Lambda\}^{\Delta_1}$ it must work for all $\tilde \iota \in \tilde \Omega$, that is, for any sequence approximating $\iota$.

\begin{remark}[Oracle tape/node providing $\Delta_1$-information]\label{rem:oracles}
	For impossibility results we use general algorithms and randomised general algorithms (as defined below), and thus, due to their generality, we do not need to specify how the algorithms read the information. 
\end{remark}

The next proposition serves as the key building block for Theorem \ref{theorem:NNNonComp} and is proven in \cite[Prop 9.5]{opt_big}. Note that the proposition is about arbitrary computational problems, and is hence also a tool for demonstrating lower bounds on the breakdown epsilon for general computational problems. 
\begin{proposition}\label{prop:DrivingNegativeProposition}
	Let $\{\Xi, \Omega, \mathcal{M}, \Lambda\}$ be a computational problem with $\Lambda=\{f_k\,\vert\,k\in\mathbb{N}, k\leq|\Lambda| \}$ countable. Suppose that $\iota^0 \in \Omega$ and that $\{\iota^1_n\}_{n=1}^{\infty}$ is a sequence in $\Omega$ so that the following conditions hold:
	\begin{enumerate}[label=(P\alph*)]
		\item For every $k\leq |\Lambda|$ and for all $n \in\mathbb{N}$ we have $|f_k(\iota^j_n) - f_k(\iota^0)|\leq 1/4^n$. \label{property:Del1Info}
		\item There is a $\kappa > 0$ such that
		$
		\inf_{\upsilon^1 \in \Xi(\iota^1_n), \upsilon^2 \in \Xi(\iota^0)}d_{\mathcal{M}}(\upsilon^1,\upsilon^2) \geq \kappa
		$. \label{property:MinimisersOfNonZero}
	\end{enumerate}
	Then the computational problem $\{\Xi,\Omega,\mathcal{M},\Lambda\}^{\Delta_1}$ satisfies $\strbdepsp \geq \kappa/2$ for $\mathrm{p} \in [0,1/2)$. 
\end{proposition}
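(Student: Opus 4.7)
The approach is a contradiction argument built on the principle that no algorithm can distinguish two inputs whose $\Delta_1$-information agrees on all the finitely many coordinates the algorithm actually queries. Fix $\epsilon < \kappa/2$ and $\mathrm{p} \in [0, 1/2)$, and suppose for contradiction some RGA $\gprob$ achieves $\mathbb{P}_{\tilde\iota}(\disM(\gprob_{\tilde\iota}, \tilde \Xi(\tilde\iota)) > \epsilon) \leq \mathrm{p}$ for every $\tilde\iota \in \tilde\Omega$; equivalently, $\mathbb{P}_{\tilde\iota}(\disM(\gprob_{\tilde\iota}, \tilde \Xi(\tilde\iota)) \leq \epsilon) \geq 1-\mathrm{p}$. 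Enumerate $\tilde\Lambda = \{\tilde f_k\}_{k\in\mathbb{N}}$, pick a $\Delta_1$-representation $\tilde\iota^0\in\tilde\Omega$ of $\iota^0$, and define the measurable sets $E^0 = \{\Gamma\in X: \disM(\Gamma(\tilde\iota^0), \Xi(\iota^0))\leq \epsilon\}$ and $A_N = \{\Gamma: T_\Gamma(\tilde\iota^0) \leq N\}$. Since $\disM(\nh,\cdot) = \infty$ under the extended metric \eqref{eq:extended-metric}, every $\Gamma\in E^0$ must halt on $\tilde\iota^0$, hence $T_\Gamma(\tilde\iota^0)<\infty$ by Definition \ref{definition:Algorithm}\ref{property:AlgorithmFiniteInput}. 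Monotone convergence along the increasing sequence $A_N$ then yields $\mathbb{P}_{\tilde\iota^0}(E^0\cap A_N)\to\mathbb{P}_{\tilde\iota^0}(E^0)\geq 1-\mathrm{p}$, so I fix $N$ large enough that $\mathbb{P}_{\tilde\iota^0}(E^0\cap A_N) \geq 1 - \mathrm{p} - \delta$ for any prescribed $\delta>0$.

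The heart of the proof is to construct a $\Delta_1$-representation $\tilde\iota^1_n \in \tilde\Omega$ of $\iota^1_n$, for a suitable $n$, whose first $N$ coordinates coincide with those of $\tilde\iota^0$. Let the first $N$ queries correspond to pairs $(j_i,m_i)$ and set $M=\max_{i\leq N}m_i$. I choose $n$ so large that $4^{-n}\leq (1-\tfrac{\sqrt{2}}{2})\,2^{-M}$. For each such pair, I pick a dyadic value $v_{j_i,m_i}\in\dyadic_{m_i}+i\dyadic_{m_i}$ closest to $f_{j_i}(\iota^0)$; this satisfies $|v_{j_i,m_i}-f_{j_i}(\iota^0)|\leq 2^{-m_i-1}\sqrt{2}$, and by the triangle inequality together with hypothesis \ref{property:Del1Info}, also $|v_{j_i,m_i}-f_{j_i}(\iota^1_n)|\leq 2^{-m_i-1}\sqrt{2}+4^{-n}\leq 2^{-m_i}$. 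I use these common $v_{j_i,m_i}$ as the first $N$ coordinates of both $\tilde\iota^0$ and $\tilde\iota^1_n$, filling in the remaining coordinates of each with arbitrary valid dyadic approximations of $\iota^0$ and $\iota^1_n$, respectively. Both tuples then lie in $\tilde\Omega$ and satisfy $\tilde f_k(\tilde\iota^0) = \tilde f_k(\tilde\iota^1_n)$ for all $k\leq N$.

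With this construction in hand, on the event $E^0 \cap A_N$, every query $f \in \Lambda_\Gamma(\tilde\iota^0)$ has index $\leq N$ and returns the same value on $\tilde\iota^0$ and $\tilde\iota^1_n$, so Definition \ref{definition:Algorithm}\ref{property:AlgorithmSameInputSameInputTaken} and \ref{property:AlgorithmDependenceOnInput} force $\Gamma(\tilde\iota^1_n)=\Gamma(\tilde\iota^0)$. Combining $\disM(\Gamma(\tilde\iota^0),\Xi(\iota^0))\leq\epsilon$ with hypothesis \ref{property:MinimisersOfNonZero} and the triangle inequality gives $\disM(\Gamma(\tilde\iota^1_n),\Xi(\iota^1_n))\geq \kappa-\epsilon > \epsilon$. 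Hence $E^0\cap A_N$ is disjoint from $E^1_n := \{\Gamma: \disM(\Gamma(\tilde\iota^1_n),\Xi(\iota^1_n))\leq\epsilon\}$. The consistency axiom \ref{property:PAlgorithmConsistent} applied to the measurable set $E^0\cap A_N$ (whose defining constraints only see queries on which the two representations agree) yields $\mathbb{P}_{\tilde\iota^0}(E^0\cap A_N) = \mathbb{P}_{\tilde\iota^1_n}(E^0\cap A_N)\leq \mathbb{P}_{\tilde\iota^1_n}(X\setminus E^1_n)\leq \mathrm{p}$. Together with the lower bound from the first paragraph, $1-\mathrm{p}-\delta\leq \mathrm{p}$, i.e.\ $1-2\mathrm{p}\leq\delta$. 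Choosing $\delta < 1-2\mathrm{p}$ (possible since $\mathrm{p}<1/2$) gives the contradiction, so $\strbdepsp(\mathrm{p})\geq \epsilon$ for every $\epsilon<\kappa/2$, and the conclusion follows.

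The main obstacle is the middle step: threading the mismatch between the fixed $\Delta_1$-tolerance $2^{-m}$ and the available input-gap $4^{-n}$ so that a single dyadic coordinate is valid as approximate information for both $\iota^0$ and $\iota^1_n$ on every queried coordinate. Everything else amounts to careful bookkeeping of the measurability conditions \ref{property:PAlgorithmMeasurable}--\ref{property:PAlgorithmRTMeasurable} and the determinism axioms \ref{property:AlgorithmFiniteInput}--\ref{property:AlgorithmSameInputSameInputTaken} together with the multivalued distance convention of Remark \ref{rem:MultivaluedFunctions}.
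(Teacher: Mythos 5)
Your overall strategy is the right one, and it is essentially the standard argument behind the result the paper invokes (the proposition is not proved in this paper but cited from \cite[Prop.~9.5]{opt_big}): fix $\epsilon<\kappa/2$, negate the breakdown-epsilon statement, localise the algorithm's behaviour on a representation of $\iota^0$ to finitely many evaluations via continuity from below together with condition (Pii), build a representation of $\iota^1_n$ agreeing on those evaluations, and play the consistency axiom (Piii) against hypothesis (Pb). The quantitative core is also correct: a nearest point of $\dyadic_m+i\dyadic_m$ to $f_j(\iota^0)$ lies within $2^{-m-1}\sqrt{2}$ of it, so once $4^{-n}\leq\bigl(1-\tfrac{\sqrt{2}}{2}\bigr)2^{-M}$ it is simultaneously a valid level-$m$ approximation of $f_j(\iota^1_n)$ for every $m\leq M$.

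However, as written the construction is circular. You first pick an \emph{arbitrary} representation $\tilde\iota^0$ of $\iota^0$, use it to define $E^0$ and $A_N$ and to extract the threshold $N$ with $\mathbb{P}_{\tilde\iota^0}(E^0\cap A_N)\geq 1-\mathrm{p}-\delta$, and only afterwards replace the first $N$ coordinates of $\tilde\iota^0$ by the nearest-dyadic values $v_{j_i,m_i}$. Changing $\tilde\iota^0$ changes the sets $E^0$ and $A_N$, the measure $\mathbb{P}_{\tilde\iota^0}$, and possibly the $N$ required, so the probability bound established for the original representation does not transfer to the modified one; nor can you instead let $\tilde\iota^1_n$ copy the original (arbitrary) coordinate values $f_{j_i,m_i}(\iota^0)$, since these may be at distance exactly $2^{-m_i}$ from $f_{j_i}(\iota^0)$ and then fail to be within $2^{-m_i}$ of $f_{j_i}(\iota^1_n)$. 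The repair is immediate and is clearly what you intended: define $\tilde\iota^0$ once and for all at the outset as the canonical representation in which \emph{every} coordinate $f_{j,m}(\iota^0)$ is a (tie-broken) nearest point of $\dyadic_m+i\dyadic_m$ to $f_j(\iota^0)$. Then $N$, $M$ and the bound $\mathbb{P}_{\tilde\iota^0}(E^0\cap A_N)\geq 1-\mathrm{p}-\delta$ are computed for this fixed $\tilde\iota^0$, only $\tilde\iota^1_n$ is constructed adaptively, and the rest of your argument (agreement of the first $N$ evaluations, $\Gamma(\tilde\iota^1_n)=\Gamma(\tilde\iota^0)$ by items (ii)--(iii) of the definition of a general algorithm, the triangle inequality giving error at least $\kappa-\epsilon>\epsilon$ on $\iota^1_n$, and (Piii) transferring the probability) goes through verbatim, yielding $1-\mathrm{p}-\delta\leq\mathrm{p}$, a contradiction for $\delta<1-2\mathrm{p}$.
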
		

\subsection{Stating Theorem \ref{theorem:NNNonComp} in the SCI language - Proposition \ref{prop:NNNonComp-formal}}
A slightly stronger formal statement of Theorem \ref{theorem:NNNonComp} in the SCI language is now as follows.
\begin{proposition}\label{prop:NNNonComp-formal}
	There is an uncountable collection $\mathcal{C}_1$ of classification functions $f$ as in \eqref{eq:the_fs} -- with fixed $d \geq 2$ -- such that for
	\begin{enumerate}
		\item any neural  network dimensions $\mathbf{N} = (N_L=1,N_{L-1},\dotsc, \allowbreak N_1,N_0=d)$ with $L \geq 2$,
		\item any  $r \geq 3(N_1+1) \dotsb (N_{L-1}+1)$,
		\item and any $\epsilon > 0$, $\hat \epsilon \in (0,1/2)$ and cost function $\mathcal{R} \in \nncf^{\epsilon,\hat\epsilon}_r$.
	\end{enumerate}
	There is an uncountable collection $\mathcal{C}_3$ of disjoint subsets of $\Omega\nnsubs$ so that for each $\hat \Omega \in \mathcal{C}_3$ the computational problem \[\{\Xi\nnsubs,\hat\Omega,\mathcal{M}\nnsubs,\Lambda\nnsubs\}^{\Delta_1}\] has breakdown epsilon $\strbdepsp \geq 1/4 - \hat{\epsilon}/2$, for all $\mathrm{p} \in [0,1/2)$.
	
\end{proposition}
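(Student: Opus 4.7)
The plan is to reduce the statement to an application of Proposition \ref{prop:DrivingNegativeProposition}, reusing the points $x^{k,\delta}$ and classification functions $f_a$ from \eqref{eq:theNNexes} and \eqref{eq:TheFs}. Take $\mathcal{C}_1 := \{f_a : a\in[1/2,1]\}$ as in the proof of Theorem \ref{theorem:NNCV}, and for each fixed $f_a\in\mathcal{C}_1$ and each $\kappa\in[1/4,3/4]$ define
\[
\iota^0_\kappa := \{x^{1,0},\dots,x^{r,0}\},\qquad \iota^{1}_{n,\kappa} := \{x^{1,4^{-n}},\dots,x^{r,4^{-n}}\},\quad n\in\mathbb{N},
\]
where the points are built using this value of $\kappa$, and put $\hat\Omega_\kappa := \{\iota^0_\kappa\}\cup\{\iota^{1}_{n,\kappa}:n\in\mathbb{N}\}$. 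I then declare $\mathcal{C}_3 := \{\hat\Omega_\kappa : \kappa\in[1/4,3/4]\}$, which is uncountable. For distinct $\kappa_1,\kappa_2\in[1/4,3/4]$, the sets of first coordinates $\{a/(k+1-\kappa_i)\}_{k=1}^r$ are disjoint (a coincidence $a/(k+1-\kappa_1)=a/(k'+1-\kappa_2)$ would force $k-k' = \kappa_1-\kappa_2 \in (-1/2,1/2)\setminus\{0\}$, impossible for integers $k,k'$), so $\hat\Omega_{\kappa_1}\cap\hat\Omega_{\kappa_2}=\varnothing$. Lemma \ref{lem:NN-separation-lemma} shows $\iota^0_\kappa,\iota^{1}_{n,\kappa}\in\mathcal{S}^{f_a}_{\varepsilon'(r)}$, so $\hat\Omega_\kappa\subset\Omega^{\mathcal{NN}}$.

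Next I verify the hypotheses of Proposition \ref{prop:DrivingNegativeProposition} for the restricted problem on $\hat\Omega_\kappa$, with base point $\iota^0_\kappa$ and approximating sequence $\iota^{1}_{n,\kappa}$. Hypothesis \ref{property:Del1Info} is immediate: each $f^{j,k}\in\Lambda^{\mathcal{NN}}$ reads a coordinate of a training point, and $\iota^0_\kappa$ and $\iota^{1}_{n,\kappa}$ differ only in the second coordinates of even-indexed points, by precisely $4^{-n}=1/4^n$. For hypothesis \ref{property:MinimisersOfNonZero} I exploit a structural dichotomy in $\delta$. When $\delta=4^{-n}>0$, Lemma \ref{lem:NNUnstableExists} furnishes a network $\tilde\varphi\in\mathcal{NN}_{\mathbf{N},L}$ with $\tilde\varphi(x^{k,4^{-n}})=f_a(x^{k,4^{-n}})$ for every $k$, so the cost vanishes at $\tilde\varphi$. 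Any $\phi\in\argmineps \mathcal{R}(\{\varphi(x^{j,4^{-n}})\}_j,\{f_a(x^{j,4^{-n}})\}_j)$ therefore has cost at most $\epsilon$, and so $\mathcal{R}\in\mathcal{CF}^{\epsilon,\hat\epsilon}_r$ forces every $\upsilon^1\in\Xi^{\mathcal{NN}}(\iota^{1}_{n,\kappa})$ to lie within $\ell^\infty$-distance $\hat\epsilon$ of the target vector $\{f_a(x^{k,4^{-n}})\}_{k=1}^r$.

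When $\delta=0$, by contrast, all points of $\iota^0_\kappa$ collapse onto the first coordinate axis with strictly decreasing first coordinates, and $f_a$ alternates across consecutive points. The hypothesis $r\geq 3(N_1+1)\cdots(N_{L-1}+1)$ is exactly the threshold required to invoke Lemma \ref{lem:NoNN} with $m=1$ and $g$ the identity, yielding for every $\phi\in\mathcal{NN}_{\mathbf{N},L}$ some index $k$ with $|\phi(x^{k,0})-f_a(x^{k,0})|\geq 1/2$. Since $f_a$ depends only on the first coordinate, $f_a(x^{k,0})=f_a(x^{k,4^{-n}})$ for every $k$; combining the two bounds by a triangle inequality and using $\|\cdot\|_\infty\leq\|\cdot\|_2\leq\|\cdot\|_1$ gives
\[
\|\upsilon^1-\upsilon^2\|_* \geq 1/2 - \hat\epsilon \quad \text{for every } \upsilon^1\in\Xi^{\mathcal{NN}}(\iota^{1}_{n,\kappa}),\ \upsilon^2\in\Xi^{\mathcal{NN}}(\iota^0_\kappa),\ *\in\{1,2,\infty\}.
\]
Proposition \ref{prop:DrivingNegativeProposition} applied with separation constant $1/2-\hat\epsilon>0$ then delivers $\strbdepsp\geq 1/4-\hat\epsilon/2$ for every $\mathrm{p}\in[0,1/2)$, as required. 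The main point requiring care is tracking the induction inside Lemma \ref{lem:NoNN}, which contracts the working subset by a factor of at most $(N_\ell+1)$ as each hidden layer $\ell=1,\dots,L-1$ is collapsed; the stated hypothesis on $r$ is exactly what is needed to guarantee at least one misclassified training point and hence the $1/2$ gap that drives the whole argument.
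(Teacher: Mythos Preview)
Your proof is correct and follows essentially the same route as the paper: define $\mathcal{C}_1=\{f_a\}$, parametrise $\hat\Omega_\kappa$ by $\kappa\in[1/4,3/4]$, take $\iota^0=\{x^{k,0}\}_{k\leq r}$ and $\iota^1_n=\{x^{k,4^{-n}}\}_{k\leq r}$, and verify \ref{property:Del1Info} and \ref{property:MinimisersOfNonZero} of Proposition~\ref{prop:DrivingNegativeProposition} via Lemma~\ref{lem:NNUnstableExists} (for the $\delta>0$ side, which the paper packages as Lemma~\ref{lem:NNSuccess}) and Lemma~\ref{lem:NoNN} (for the $\delta=0$ side). The only cosmetic difference is that the paper defines $\hat\Omega^\kappa=\{\mathcal{T}^\kappa_\delta:\delta\in[0,\varepsilon'(r))\}$, an uncountable set, whereas you take only the countable family $\{\iota^0_\kappa\}\cup\{\iota^1_{n,\kappa}:n\in\mathbb{N}\}$; since Proposition~\ref{prop:DrivingNegativeProposition} only needs the base point and the approximating sequence to lie in the input set, and the proposition asks for $\mathcal{C}_3$ (not each $\hat\Omega$) to be uncountable, your minimal choice is equally valid.
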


To see that Proposition \ref{prop:NNNonComp-formal} implies Theorem \ref{theorem:NNNonComp}, assume that Proposition \ref{prop:NNNonComp-formal} holds and suppose that $\Gamma$ is a randomised algorithm (in either the BSS or Turing models). The existence of $\phi$ stated in Theorem \ref{theorem:NNNonComp} is guaranteed as per the discussion in Remark \ref{rem:NNCompExistence}. Furthermore $\Gamma$ is also a randomised general algorithm and hence we can consider $\Gamma$ restricted to $\hat{\Omega}$ for each $\hat{\Omega} \in \mathcal{C}_3$. Since the computational problem $\{\Xi\nnsubs,\hat\Omega,\mathcal{M}\nnsubs,\Lambda\nnsubs\}^{\Delta_1}$ has $\strbdepsp \geq 1/4 - \hat{\epsilon}/2 > 1/4 - 3\hat{\epsilon}/4$, for all $\mathrm{p} \in [0,1/2)$ there must exist a training set $\mathcal{T} = \mathcal{T}(\hat \Omega)$ with $\mathcal{T} = \{x^1,x^2,\dotsc,x^r\}$ for which 
\[
\mathbb{P}\Big(\|\{\Gamma_{\mathcal{T}}(x^j)\}_{j=1}^r - \{\phi(x^j)\}_{j=1}^r\|_{*} \geq 1/4-3\hat{\epsilon}/4\Big ) > \mathrm{p},
\]
for any 	$
\phi \in \argmineps  \mathcal{R} \left(\{\varphi({x}^j)\}_{j=1}^r,\{f({x}^j)\}_{j=1}^r\right)
$ (this is itself a consequence of Remark \ref{rem:MultivaluedFunctions}).

We now choose $\mathcal{C}_2 = \{\mathcal{T}(\hat \Omega) \, \vert \, \hat \Omega \in \mathcal{C}_3\}$. Because $\mathcal{C}_3$ is an uncountable collection of disjoint sets, $\mathcal{C}_2$ is uncountable and thus Theorem \ref{theorem:NNNonComp} follows.

\subsection{Proof of Proposition \ref{prop:NNNonComp-formal} and Theorem \ref{theorem:NNNonComp}}
As demonstrated in the previous section, to prove Theorem \ref{theorem:NNNonComp} it suffices to prove Proposition \ref{prop:NNNonComp-formal}.
We begin by starting the following useful lemma:
\begin{lemma}\label{lem:NNSuccess}
	Recall the setup of Proposition \ref{prop:NNNonComp-formal} and the vectors $x^{k,\delta}$ defined in \eqref{eq:theNNexes}. For any $\delta\in  (0, \varepsilon'(r))$ and arbitrary
	\begin{align} 
		%\phi_0 &\in  \mathop{\mathrm{arg min}}_{\phi \in \mathcal{NN}_{\mathbf{N},L}}  C\left(\{\phi(x^j)\}_{j=1}^r ,\{f_a(x^j)\}_{j=1}^r\right),\qquad \text{and } \label{eq:phi_0}\\
		\phi &\in  \argmineps  \mathcal{R}\left(\{\varphi(x^{j,\delta})\}_{j=1}^r ,\{f_a(x^{j,\delta})\}_{j=1}^r\right)\label{eq:phi_del}
	\end{align}
	we have $|\phi(x^{k,\delta}) - f_a(x^{k,\delta})| \leq \hat \epsilon$ for all $k \in \{1,\dotsc, r\}$.
\end{lemma}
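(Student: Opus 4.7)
The plan is to leverage Lemma \ref{lem:NNUnstableExists} to produce a neural network that achieves zero cost on the training set, and then use the defining bounding property of the cost function class $\nncf^{\epsilon,\hat\epsilon}_r$ to transfer this to a pointwise bound on any $\epsilon$-approximate minimiser.

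More concretely, first I would invoke Lemma \ref{lem:NNUnstableExists} with the given $a$, $\kappa$, and $\delta > 0$ (the upper bound $\delta < \varepsilon'(r)$ is not required for this step; it only matters for ensuring $\{x^{1,\delta},\dots,x^{r,\delta}\}\in\mathcal{S}^{f_a}_{\varepsilon'(r)}$ via Lemma \ref{lem:NN-separation-lemma}, which is what guarantees the training set lives in $\Omega^{\mathcal{NN}}$). This produces a neural network $\tilde\varphi \in \mathcal{NN}_{\mathbf{N},L}$ satisfying $\tilde\varphi(x^{k,\delta}) = f_a(x^{k,\delta})$ for every $k\in\mathbb{N}$. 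In particular, the two length-$r$ tuples $\{\tilde\varphi(x^{j,\delta})\}_{j=1}^{r}$ and $\{f_a(x^{j,\delta})\}_{j=1}^{r}$ coincide. Since $\mathcal{R}\in \mathcal{CF}_r$, the definition \eqref{eq:CF} forces
\[
\mathcal{R}\bigl(\{\tilde\varphi(x^{j,\delta})\}_{j=1}^{r},\{f_a(x^{j,\delta})\}_{j=1}^{r}\bigr)=0.
\]

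Next, from the definition of the approximate argmin in \eqref{eq:argminEpsDef} applied to any $\phi$ satisfying \eqref{eq:phi_del}, and using $\tilde\varphi$ as the comparison point $y \in \mathcal{NN}_{\mathbf{N},L}$, we obtain
\[
\mathcal{R}\bigl(\{\phi(x^{j,\delta})\}_{j=1}^{r},\{f_a(x^{j,\delta})\}_{j=1}^{r}\bigr) \leq \mathcal{R}\bigl(\{\tilde\varphi(x^{j,\delta})\}_{j=1}^{r},\{f_a(x^{j,\delta})\}_{j=1}^{r}\bigr)+\epsilon = \epsilon.
\]
Finally, I would apply the defining implication of the class $\nncf^{\epsilon,\hat\epsilon}_r$ from \eqref{eq:CFDEps}: any pair $(v,w)$ with $\mathcal{R}(v,w)\leq\epsilon$ satisfies $\|v-w\|_\infty\leq \hat\epsilon$. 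Instantiating this with $v = \{\phi(x^{j,\delta})\}_{j=1}^{r}$ and $w = \{f_a(x^{j,\delta})\}_{j=1}^{r}$ yields
\[
\max_{1\leq k\leq r}\bigl|\phi(x^{k,\delta})-f_a(x^{k,\delta})\bigr| = \bigl\|\{\phi(x^{j,\delta})\}_{j=1}^{r}-\{f_a(x^{j,\delta})\}_{j=1}^{r}\bigr\|_\infty \leq \hat\epsilon,
\]
which is the required conclusion. There is essentially no obstacle in this proof; it is a three-step chain (existence of a zero-cost network, approximate optimality, cost-to-sup-norm conversion), and the principal thing to get right is bookkeeping, namely that the comparison network $\tilde\varphi$ from Lemma \ref{lem:NNUnstableExists} indeed has the same ReLU architecture $(\mathbf{N},L)$ over which the approximate argmin is taken, so that it is a legitimate competitor in the optimisation.
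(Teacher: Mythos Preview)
Your proposal is correct and follows essentially the same three-step argument as the paper: invoke Lemma \ref{lem:NNUnstableExists} to obtain a zero-cost competitor $\tilde\varphi\in\mathcal{NN}_{\mathbf{N},L}$, use the definition of the $\epsilon$-approximate argmin to bound the cost of $\phi$ by $\epsilon$, and then apply the defining implication of $\nncf^{\epsilon,\hat\epsilon}_r$ to pass to the $\ell^\infty$ bound. Your additional remarks on why $\delta<\varepsilon'(r)$ is not needed here and on checking that $\tilde\varphi$ has the correct architecture are accurate and helpful.
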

\begin{proof}
	By Lemma \ref{lem:NNUnstableExists}, there exists a neural network $\tilde \varphi \in \mathcal{NN}_{\mathbf{N},L}$ with $\tilde \varphi(x^{k,\delta}) = f_a(x^{k,\delta})$ for all $k$. In particular, $ \mathcal{R} \left(\{\tilde \varphi ({x}^{j,\delta})\}_{j=1}^r,\{f({x}^{j,\delta})\}_{j=1}^r\right) = 0$. Thus, by \eqref{eq:phi_del} and the definition of the approximate argmin as in \eqref{eq:argminEpsDef}, we must have that \[ \mathcal{R} \left(\{ \phi ({x}^{j,\delta})\}_{j=1}^r,\{f({x}^{j,\delta})\}_{j=1}^r\right) \leq \epsilon\] and the conclusion of the claim follows because $\mathcal{R}\in \nncfeps$ as defined in \eqref{eq:CFDEps}.
	
\end{proof}
Now that we have proven Lemma \ref{lem:NNSuccess}, we are ready to prove Proposition \ref{prop:NNNonComp-formal}.
\begin{proof}[Proof of Proposition \ref{prop:NNNonComp-formal}]
	
	As in the proof of Theorem \ref{theorem:NNCV}, we begin by defining the sets $\mathcal{C}_1$ and $\mathcal{C}_3$. Let $\mathcal{C}_1 = \{ f_a: \real^d \to [0,1] \, \vert \, a \in [1/2,1]\}$, where $f_a$ is defined as in \eqref{eq:TheFs}. Fix $a \in [1/2,1]$ and $\kappa \in [1/4,3/4]$, define $\mathcal{T}^{\kappa}_{\delta}:= \{x^{1,\delta},x^{2,\delta},\dotsc,x^{r,\delta}\}$ where the values $x^{i,\delta}$ (each depending on $\kappa$ and $a$) are defined in \eqref{eq:theNNexes}. We define
	$\hat \Omega^{\kappa}:= \{\mathcal{T}^{\kappa}_{\delta} \, \vert \, \delta \in [0,\epsilon'(r))$. By Lemma \ref{lem:NN-separation-lemma}, we have $\mathcal{T}^{\kappa}_{\delta} \in \mathcal{S}^{f_a}_{\varepsilon'(r)}$ so that $\hat \Omega^{\kappa} \subset \Omega^{\mathcal{NN}}$. Note also that noting that the $\hat \Omega^{\kappa}$ are disjoint as an immediate consequence of \eqref{eq:theNNexes}. Finally, we set $\mathcal{C}_3 := \{ \hat \Omega^{\kappa} \, \vert \, \kappa \in [1/4,3/4]\}$, .
	
	Now we have defined $\mathcal{C}_1$ and $\mathcal{C}_3$, we will show that for any $\kappa \in [1/4,3/4]$ the computational problem \[\{\Xi^\mathcal{NN},\hat\Omega^\kappa,\mathcal{M}^\mathcal{NN},\Lambda^\mathcal{NN}\}^{\Delta_1}\] has breakdown epsilon $\strbdepsph \geq 1/4 - \hat{\epsilon}/2$, for all $\mathrm{p} \in [0,1/2)$. This will be done using Proposition \ref{prop:DrivingNegativeProposition}. We will define $\iota^0:= \mathcal{T}^{\kappa}_{0}$ and $\iota^1_n:= \mathcal{T}^{\kappa}_{4^{-n}}$. 
	
	By \eqref{eq:theNNexes}, we see that  $\|x^{\,j,4^{-n}}-x^{j,0}\|_\infty\leq 4^{-n}$ for $j=1,2,\dotsc,r$. Hence (recalling the definition of $\Lambda^\mathcal{NN}$), property \ref{property:Del1Info} from Proposition \ref{prop:DrivingNegativeProposition} holds.

	Fix $n \in \mathbb{N}$ sufficiently large and let $\phi_0$ and $\phi_n$ be arbitrary neural networks so that
	\begin{equation}\label{eq:CompPhiChoice}
	\begin{split}
		&\phi_0 \in \argmineps \left(\{\varphi(x^{j,0})\}_{j=1}^r ,\{f_a(x^{j,0})\}_{j=1}^r\right)\\
		 &\phi_n \in \argmineps \left(\{\varphi(x^{j,4^{-n}})\}_{j=1}^r ,\{f_a(x^{j,4^{-n}})\}_{j=1}^r\right).
         \end{split}
         \end{equation} By Lemma \ref{lem:NoNN} and the assumption that $|\mathcal{T}^{\kappa}_0| = r \geq 3(N_1+1) \dotsb (N_{L-1}+1)$ we conclude that 
	\[\max\limits_{j=1,2,\dotsc,r} |\phi_0(x^{j,0}) - f_a(x^{j,0})| \geq 1/2. \] By contrast, Lemma \ref{lem:NNSuccess} shows that $\max_{j=1,2,\dotsc,r} |\phi_n(x^{j,4^{-n}}) - f_a(x^{j,4^{-n}})| \leq \hat \epsilon$. Combining these two results and the fact that $f_a(x^{j,0}) = f_a(x^{j,4^{-n}})$ for each $j=1,2,\dotsc,r$ yields
	\begin{equation*}
		\max_{j=1,2,\dotsc,r} |\phi_0(x^{j,0}) - \phi_n(x^{j,4^{-n}})| \geq 1/2  - \hat \epsilon.
	\end{equation*}

	Therefore, since both the $\ell^1$ and $\ell^2$ norms are bounded from below by the $\ell^{\infty}$ norm and $\phi_0$ and $\phi_n$ were chosen arbitrarily according to \eqref{eq:CompPhiChoice}, we have 
	$
	\inf_{\upsilon^1 \in \Xi(\iota^1_n), \upsilon^2 \in \Xi(\iota^0)}d_{\mathcal{M}}(\upsilon^1,\upsilon^2) \geq 1/2 - \hat \epsilon
	$ where $d_{\mathcal{M}}$ is the $\ell^{*}$ norm with $*= 1,2$ or $\infty$.	Hence, property \ref{property:MinimisersOfNonZero} from Proposition \ref{prop:DrivingNegativeProposition} holds with $\kappa = 1/2 - \hat \epsilon$, thereby concluding the proof.
\end{proof}

%\subsection*{Competing Interests} The authors declare none.

\bibliographystyle{abbrv}
\bibliography{FoundOpt_AI_Arxiv}

\end{document}